\documentclass[11pt]{article}
 
\usepackage[sort,numbers]{natbib}

\usepackage[bookmarks=true,bookmarksnumbered=true,colorlinks=true,allcolors=black]{hyperref}
\usepackage{amsmath,amssymb,amsfonts,amsthm}
\usepackage{lscape}
\usepackage{arydshln}
\renewcommand*{\baselinestretch}{1.25}

\usepackage[capitalize]{cleveref}

\usepackage{geometry}
\usepackage{indentfirst}

\usepackage{enumitem,color}

\newtheorem{theorem}{Theorem}[section]
\newtheorem{lemma}{Lemma}[section]
\newtheorem{proposition}{Proposition}[section]
\newtheorem{corollary}{Corollary}[section]
\theoremstyle{definition}
\newtheorem{definition}{Definition}[section]
\newtheorem*{rmk*}{Remark}
\newtheorem{rmk}{Remark}[section]

\newtheorem{example}{Example}[section]

\DeclareMathOperator{\covariance}{Cov}

\DeclareMathOperator{\trace}{tr}

\DeclareMathOperator{\E}{E}

\DeclareMathOperator{\tv}{TV}

\allowdisplaybreaks

\numberwithin{equation}{section}

\makeatletter
    \renewcommand*{\section}{\@startsection{section}{1}{\z@}%
    {6pt}{3pt}{\reset@font\normalsize\bfseries}}
\makeatother
    
\makeatletter
    \renewcommand*{\subsection}{\@startsection{subsection}{2}{\z@}%
    {3pt}{3pt}{\reset@font\normalsize\mdseries\itshape}}
\makeatother

\makeatletter
    \renewcommand*{\subsubsection}{\@startsection{subsubsection}{3}{\z@}%
    {3pt}{3pt}{\reset@font\normalsize\mdseries\itshape}}
\makeatother

\makeatletter
\def\@listi{\leftmargin\leftmargini
  \topsep=.5\baselineskip 
  \partopsep=0pt \parsep=0pt \itemsep=0pt}
\let\@listI\@listi
\@listi
\def\@listii{\leftmargin\leftmarginii
  \labelwidth\leftmarginii \advance\labelwidth-\labelsep
  \topsep=0pt \partopsep=0pt \parsep=0pt \itemsep=0pt}
\def\@listiii{\leftmargin\leftmarginiii
  \labelwidth\leftmarginiii \advance\labelwidth-\labelsep
  \topsep=0pt \partopsep=0pt \parsep=0pt \itemsep=0pt}
\def\@listiv{\leftmargin\leftmarginiv
  \labelwidth\leftmarginiv \advance\labelwidth-\labelsep
  \topsep=0pt \partopsep=0pt \parsep=0pt \itemsep=0pt}
\makeatother

\makeatletter
\newcommand{\opnorm}{\@ifstar\@opnorms\@opnorm}
\newcommand{\@opnorms}[1]{%
  \left|\mkern-1.5mu\left|\mkern-1.5mu\left|
   #1
  \right|\mkern-1.5mu\right|\mkern-1.5mu\right|
}
\newcommand{\@opnorm}[2][]{%
  \mathopen{#1|\mkern-1.5mu#1|\mkern-1.5mu#1|}
  #2
  \mathclose{#1|\mkern-1.5mu#1|\mkern-1.5mu#1|}
}
\makeatother

\def\be#1{\begin{equation*}#1\end{equation*}}
\def\ben#1{\begin{equation}#1\end{equation}}

\def\besn#1{\begin{equation}\begin{split}#1\end{split}\end{equation}}

\def\ba#1{\begin{align*}#1\end{align*}}
\def\ban#1{\begin{align}#1\end{align}}

\newcommand{\eps}{\varepsilon}

\newcommand{\score}{\wh{\mf s}}
\newcommand{\scored}{\mathsf{s}}
\newcommand{\wiener}{\mathbb{W}}
\newcommand{\slepian}{\mathcal{S}}
\newcommand{\euler}{X^\mathrm{EM}}
\newcommand{\ada}{X^\mathrm{ada}}
\newcommand{\ddpm}{\mathsf{x}}

\newcommand{\ol}[1]{\overline{#1}}

\newcommand{\wh}[1]{\widehat{#1}}
\newcommand{\wt}[1]{\widetilde{#1}}
\newcommand{\mf}[1]{\mathfrak{#1}}
\newcommand{\mcl}[1]{\mathcal{#1}}

\newcommand{\bra}[1]{\left(#1\right)}
\newcommand{\cbra}[1]{\left\{#1\right\}}
\newcommand{\sbra}[1]{\left[#1\right]}
\newcommand{\abs}[1]{\left|#1\right|}

\makeatletter
\newcommand{\pushright}[1]{\ifmeasuring@#1\else\omit\hfill$\displaystyle#1$\fi\ignorespaces}
\newcommand{\pushleft}[1]{\ifmeasuring@#1\else\omit$\displaystyle#1$\hfill\fi\ignorespaces}
\makeatother

\title{Connections between the F\"ollmer process and the denoising diffusion probabilistic model}
\author{Yuta Koike\thanks{Graduate School of Mathematical Sciences, University of Tokyo}
\thanks{CREST, Japan Science and Technology Agency}}

\begin{document}

\maketitle

\begin{abstract}
The F\"ollmer process is a Brownian motion conditioned to have a pre-specified distribution at time 1. 
This process can be interpreted as an ``augmented'' time-compressed version of the reverse stochastic differential equation (SDE) corresponding to the denoising diffusion probabilistic model (DDPM). 
While this fact has been indirectly used to analyze DDPM sampling errors via discretization of the reverse SDE, the connection between direct discretization of the F\"ollmer process and the DDPM sampler has not yet been fully explored. 
This paper clarifies this point while surveying relevant results from the literature. 
We show that discretized F\"ollmer processes give natural hyper-parameter settings of the DDPM sampler while accommodating a broader class of variance schedules than discretized reverse SDEs. 
Moreover, this allows us to systematically recover state-of-the-art results on DDPM sampling error bounds, along with slight improvements. 
\vspace{2mm}

\noindent \textit{Keywords}: 
diffusion model, 
discretization, 
low-dimensional adaptation,
sampling error, 
score-based generative modeling, 
stochastic localization.  

\end{abstract}

\section{Introduction}

Let $\mu$ be a probability distribution on $\mathbb R^d$. 
Following \cite{ELS20}, we call a $d$-dimensional continuous stochastic process $X=(X_t)_{t\in[0,1]}$ the \textit{F\"ollmer process} associated with $\mu$ if $X_1\sim\mu$ and the conditional law of $X$ given $X_1$ is the $d$-dimensional Brownian bridge from 0 to $X_1$ on $[0,1]$. 
This process has recently received renewed attention in various fields of both theory and application such as studies of functional inequalities \cite{Le13,ElMi20,ELS20,ElLe18,MiSh24,Mik21,KlLe25} as well as high-dimensional central limit theorems \cite{EMZ20,FaKo24,FXZ26}, optimization \cite{DJKLY23,MFV24}, generative modeling \cite{CGHABV24,WJXWY21}, sampling from probability distributions \cite{TzRa19,HJKLLL25,ZhCh22,EnNa24} and Bayesian inference \cite{VOFGLN23,RBCJK23}. 
Historically, the F\"ollmer process appeared as the solution to an entropy minimization problem posed by Schr\"odinger. 
A modern formulation of Schr\"odinger's problem is as follows (see \cite{Le14,Mi21} for more details). 
Let $\mathbb W^d:=C([0,1],\mathbb R^d)$ be the space of continuous functions from $[0,1]$ to $\mathbb R^d$ equipped with the uniform topology. 
For a probability distribution $Q$ on $\mathbb W^d$, we denote by $Q_0$ and $Q_1$ the image measures of $Q$ by the maps $w\mapsto w(0)$ and $w\mapsto w(1)$, respectively.   
Then, given a pair $(\mu_0,\mu_1)$ of probability distributions on $\mathbb R^d$ and a probability distribution $R$ on $\mathbb W^d$, the \emph{dynamical Schr\"odinger problem} associated with the reference measure $R$ is the following minimization problem with respect to probability distributions $Q$ on $\mathbb W^d$:
\ben{\label{dyn-schr}
\min_{Q:Q_0=\mu_0,Q_1=\mu_1}H(Q\mid R).
} 
Here, given two probability measures $P$ and $Q$ defined on a common measurable space, $H(P\mid Q)$ denotes the relative entropy of $P$ with respect to $Q$, i.e.
\[
H(P\mid Q)=\int\frac{dP}{dQ}\log\bra{\frac{dP}{dQ}}dQ\quad\text{if }P\ll Q
\]
and $H(P\mid Q)=\infty$ otherwise. 
\citet{Fo88} showed that when $H(\mu\mid\gamma_d)<\infty$ with $\gamma_d=N(0,I_d)$ the $d$-dimensional standard normal distribution, the law of $X$ gives the unique solution to \eqref{dyn-schr} with $\mu_1=\mu$, $\mu_0$ being the unit mass at 0 and $R$ being the $d$-dimensional Wiener measure; see Section II.1.3 ibidem. 
%
%
Furthermore, under the finite entropy condition $H(\mu\mid\gamma_d)<\infty$, it follows from \citet{Fo86}'s results that $X$ is a weak solution to the following stochastic differential equation (SDE):
\ben{\label{eq:follmer}
dX_t=\bra{\frac{X_t}{t}+\nabla\log p_t(X_t)}dt+dW_t\quad(0\leq t\leq1),\qquad X_0=0,
}
where $W=(W_t)_{t\in[0,1]}$ is a standard Wiener process in $\mathbb R^d$ and for every $0<t<1$, $p_t$ denotes the density of $X_t$, i.e.
\[
p_t(x)=\frac{1}{\{2\pi t(1-t)\}^{d/2}}\int_{\mathbb R^d}\exp\bra{-\frac{|x-ty|^2}{2t(1-t)}}\mu(dy),\quad x\in\mathbb R^d.
\]
See \cref{follmer-work} for details and also \cite{Ja75,DaP91} for related results. 
As the first main result of this paper, we show in \cref{thm:weak} that $X$ always solves \eqref{eq:follmer} without \emph{any} assumption on $\mu$.

The function $\nabla \log p_t$ is known as the \textit{score function} of (the law of) $X_t$, and  the form of \eqref{eq:follmer} reminds us of the SDE formulation of \emph{score-based generative modeling} (SGM) in \citet{SSKKEP21}. 
In fact, it turns out that $X$ is an ``augmented'' time-compressed version of the reverse SDE corresponding to the \textit{denoising diffusion probabilistic model} (DDPM) of \cite{HJA20}; see \cref{thm:rou}. 
Although this fact is essentially known in the literature (see the discussion preceding \cref{thm:rou}), existing work does not seem to fully explore connections between discretization of the F\"ollmer process and the DDPM sampler. 
This paper aims to clarify this point while surveying relevant results in the literature. 
As we will see in \cref{sec:main}, discretized F\"ollmer processes give natural hyper-parameter settings of the DDPM sampler. 
More importantly, the resulting parametrization does not require the variance schedule of DDPM to be uniformly small, in contrast to the SDE formulation of \cite{SSKKEP21} (see Appendix E ibidem). 
This effectively broadens the class of variance schedules under consideration; see Examples \ref{ex:cos} and \ref{ex:geom}, along with the discussion after \eqref{eq:ddpm}.
In addition, we can systematically recover state-of-the-art results on DDPM sampling error bounds in \cite{BDDD24,CDGS24,HWC26}, along with slight improvements. 
Notably, by exploiting the ``augmented'' part of the F\"ollmer process, we can eliminate the dependence on the ambient dimension in the initialization error of the DDPM sampler; see \cref{coro:hwc} and \cref{rmk:hwc}.
The key to our development is the fact that the F\"ollmer process is strongly linked to some information-theoretic quantities and \citet{El13}'s stochastic localization. 
Indeed, these links were implicitly used in \cite{BDDD24,HWC26} to analyze discretization errors of the reverse SDE in SGM. 
Our development complements these works. 

While the F\"ollmer process has been already used for generative modeling as mentioned above, most studies have used it for developing an alternative model to the DDPM rather than analyzing the DDPM itself. 
In fact, \citet{CGHABV24} and \citet{WJXWY21} employ the F\"ollmer process as a part of more sophisticated generative models proposed therein. 
We also remark that a solution to the problem \eqref{dyn-schr} in the general case, known as the \emph{Schr\"odinger bridge}, is extensively employed for generative modeling. 
Among others, its connection to SGM is discussed in \cite{de2021diffusion,CLT22,CVE26,Pe23}. 
However, the focus of these studies is to develop a more general framework than SGM. Note that \citet{CVE26} use the term ``F\"ollmer process'' to indicate the Schr\"odinger bridge whose initial marginal $\mu_0$ is the Dirac measure.





The remainder of this paper is organized as follows. 
\cref{sec:follmer} reviews basic properties of the F\"ollmer process. 
\cref{sec:main} discusses its connection to the DDPM, with emphasis on its implication to sampling error evaluation. 
The appendix presents additional mathematical proofs. 

\paragraph{Notation}

For a vector $x\in\mathbb R^d$, we denote the Euclidean norm of $x$ by $|x|$. 
If $\xi$ is a random vector in $\mathbb R^d$, $P^\xi$ denotes the law of $\xi$. 
For a positive definite symmetric matrix $\Sigma$, we write $\phi_\Sigma$ for the density of $N(0,\Sigma)$. 
Given a $d$-dimensional stochastic process $X=(X_t)_{t\in I}$ with $I$ an interval in $[0,\infty)$, we define a filtration $\mathbf F^X=(\mcl F^X_t)_{t\in I}$ by $\mcl F^X_t:=\sigma(X_s:s\in I,s\leq t)\vee\mcl N$, where $\mcl N$ is the class of null sets with respect to the underlying probability measure. 
If $I=[0,1]$ and $X$ is continuous, we denote by $P^X$ the path measure of $X$ induced on $\mathbb W^d$. 
In particular, $P^W$ denotes the $d$-dimensional Wiener measure. 
For two numeric sequences $a_n$ and $b_n$, we write $a_n\asymp b_n$ if $a_n=O(b_n)$ and $b_n=O(a_n)$.





\section{Properties of the F\"ollmer process}\label{sec:follmer}

In this section, we review basic properties of the F\"ollmer process. 
Although most results are known in the literature at least under additional regularity on $\mu$, we give their proofs to avoid such extra assumptions. 

To fix the terminology, we define the F\"ollmer process as follows:
\begin{definition}[F\"ollmer process]
For $x,y\in\mathbb R^d$, we write $P_x^y$ for the law of the Brownian bridge from $x$ to $y$ induced on $\mathbb W^d$. 
A $d$-dimensional continuous stochastic process $X=(X_t)_{t\in[0,1]}$ is called a \textit{F\"ollmer process} associated with $\mu$ if its law induced on $\mathbb W^d$ is equal to $\int_{\mathbb R^d}P_0^y\mu(dy)$. 
\end{definition}

We can realize a F\"ollmer process associated with $\mu$ as $X_t=t\xi+\beta_t$, $t\in[0,1]$, where $\xi\sim\mu$ and $\beta=(\beta_t)_{t\in[0,1]}$ is a standard Brownian bridge on $[0,1]$ independent of $\xi$. 
The next proposition shows that any F\"ollmer process is realized in this way: 
\begin{proposition}\label{prop:basic}
Let $X=(X_t)_{t\in[0,1]}$ be a F\"ollmer process associated with $\mu$. Define a process $\beta=(\beta_t)_{t\in[0,1]}$ as 
$\beta_t=X_t-tX_1$, $t\in[0,1]$. 
Then $\beta$ is a standard Brownian bridge independent of $X_1$. 
\end{proposition}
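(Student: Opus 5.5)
The plan is to work entirely on the canonical space. Since $\beta$ is a measurable functional of $X$, namely $\beta = \Phi(X)$ with $\Phi(w)_t := w(t) - t\,w(1)$, the joint law of $(\beta, X_1)$ is determined by the law of $X$. By definition, that law is the mixture $\int_{\mathbb R^d} P_0^y\,\mu(dy)$. It therefore suffices to show the following: for every $y\in\mathbb R^d$, under $P_0^y$ the image of the canonical process under $\Phi$ is a standard Brownian bridge (the law $P_0^0$). Granting this, for bounded measurable $F$ on $\mathbb W^d$ and $g$ on $\mathbb R^d$, since $w(1)=y$ holds $P_0^y$-a.s., we get
\[
\E\sbra{F(\beta)g(X_1)}=\int_{\mathbb R^d}\bra{\int_{\mathbb W^d}F(\Phi(w))\,P_0^y(dw)}g(y)\,\mu(dy)=\int_{\mathbb W^d}F\,dP_0^0\cdot\int_{\mathbb R^d} g\,d\mu .
\]
This gives both claims at once: $\beta$ has law $P_0^0$, and $\beta$ is independent of $X_1\sim\mu$.

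The key step is the identity $P_0^y\circ\Phi^{-1}=P_0^0$. I would use the standard realization of the Brownian bridge from $0$ to $y$ as $Z_t=B_t-tB_1+ty$, where $B$ is a standard Brownian motion. Then
\[
\Phi(Z)_t=Z_t-tZ_1=B_t-tB_1+ty-ty=B_t-tB_1 ,
\]
which is the standard Brownian bridge realized from $B$. Equivalently, one can argue by Gaussianity: under $P_0^y$ the canonical process is Gaussian with mean $ty$ and covariance $(s\wedge t-st)I_d$. The map $\Phi$ is linear and kills the mean, and since $Z_1=y$ is deterministic, $\Phi$ leaves the covariance unchanged. Continuity of paths is preserved by $\Phi$, so the finite-dimensional distributions determine the law on $\mathbb W^d$.

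The only potential subtlety, and the step I expect to be the main obstacle, is measurability. Two facts are needed. First, $y\mapsto P_0^y(A)$ must be measurable, which is implicit in the definition of the mixture. This follows from the realization above, since $P_0^y$ is the law of $B-\cdot B_1+\cdot\, y$, which depends continuously on $y$. Second, $\Phi$ must be continuous, hence Borel, on $\mathbb W^d$ with the uniform topology. This is immediate. With these in place, the disintegration computation displayed above is justified by Fubini's theorem.
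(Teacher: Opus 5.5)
Your proposal is correct and follows essentially the same route as the paper: the paper notes that, given $X_1=y$, $\beta$ is distributed as $w\mapsto w(\cdot)-\cdot\,w(1)$ under $P_0^y$, i.e.\ a standard Brownian bridge that does not depend on $y$, and both the unconditional law and the independence follow from this. Your check that $P_0^y\circ\Phi^{-1}=P_0^0$ via $B_t-tB_1+ty$, together with the explicit factorization of $\E[F(\beta)g(X_1)]$, simply spells out the step the paper dismisses with ``by definition.''
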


\begin{proof}
By definition, the conditional law of $\beta$ given $X_1$ is a standard Brownian bridge in $\mathbb R^d$. 
Since this conditional law does not depend on $X_1$, the unconditional law of $\beta$ is also a standard Brownian bridge and $\beta$ is independent of $X_1$. 
\end{proof}

In the remainder of this paper, $X=(X_t)_{t\in[0,1]}$ always denotes a F\"ollmer process associated with $\mu$ and $\beta=(\beta_t)_{t\in[0,1]}$ is defined as in \cref{prop:basic} unless otherwise is stated. 
\begin{proposition}\label{prop:markov}
$X$ is a Markov process with respect to $\mathbf F^X$. 
Moreover, for $0<r<s\leq1$, the (regular) conditional law of $X_s$ given $X_r=x$ is given by
\ben{\label{eq:trans}
P_{r,s}^X(x,dy)=\frac{\phi_{\frac{r}{s}(s-r)I_d}(x-\frac{r}{s}y)}{p_{r}(x)}P^{X_s}(dy).
}
\end{proposition}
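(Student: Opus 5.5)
We will work throughout with the representation of \cref{prop:basic}: $X_t=tX_1+\beta_t$, where $\beta$ is a standard Brownian bridge independent of $\xi:=X_1\sim\mu$. The plan is to prove the Markov property by showing that, for $r<s$, the conditional law of the future $(X_u)_{u\ge s}$ given $\mcl F^X_r$ depends only on $X_r$. The cleanest route goes through $\xi$. Given $\xi=y$, the process $X$ is a Brownian bridge from $0$ to $y$, and such a bridge is Markov. Its transition from time $r$ to time $u$, conditioned on $\xi=y$, is the Brownian bridge transition from $(r,x)$ to $(1,y)$.

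Hence, for bounded measurable $F$ of the future path and bounded $\mcl F^X_r$-measurable $G$, we can compute
\[
\E[F\,G]=\int \E[F\,G\mid\xi=y]\,\mu(dy)=\int \E\bigl[G\,\Psi(X_r,y)\mid\xi=y\bigr]\mu(dy),
\]
where $\Psi(x,y)$ is the expectation of $F$ under the bridge from $(r,x)$ to $(1,y)$. The law of $(X_s)_{s\le r}$ given $\xi=y$ has density with respect to the law of Brownian motion $(W_s)_{s\le r}$ equal to $h(r,W_r,y)$. This follows from the standard absolute continuity of the bridge on $[0,r]$ with $r<1$, with density ratio $h(r,x,y)=\phi_{(1-r)I_d}(y-x)/\phi_{I_d}(y)$.

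Integrating over $y$ then shows that the law of $(X_s)_{s\le r}$ is that of $W$ weighted by $H(W_r)$, where $H(x)=\int h(r,x,y)\mu(dy)$. It also shows that $\E[F\mid\mcl F^X_r]=\int\Psi(X_r,y)h(r,X_r,y)\mu(dy)/H(X_r)$, which is a function of $X_r$ only. This gives the Markov property with respect to the natural filtration, and adding null sets to $\mcl F^X_r$ does not affect it.

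For \eqref{eq:trans}, we compute directly using Bayes' formula. The joint law of $(X_r,X_s)$ can be written as follows: $X_s$ has law $P^{X_s}$, and conditionally on $X_s=y$, $X_r$ is Gaussian with mean $\frac rs y$ and covariance $\frac rs(s-r)I_d$. To justify this, note that given $\xi$, the pair $(X_r,X_s)$ is the Brownian bridge to $\xi$ observed at times $r<s$. By the Markov property of the bridge, the conditional law of $X_r$ given $(X_s,\xi)$ is the bridge from $(0,0)$ to $(s,X_s)$ at time $r$, which does not involve $\xi$; this is the cleanest route. It is exactly $N(\frac rs X_s,\frac rs(s-r)I_d)$. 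Consequently the joint law of $(X_r,X_s)$ is $\phi_{\frac rs(s-r)I_d}(x-\frac rs y)\,dx\,P^{X_s}(dy)$. The marginal density of $X_r$ equals $p_r(x)$, and dividing by it yields \eqref{eq:trans}, valid wherever $p_r(x)>0$, which holds for all $x$.

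We expect the main obstacle to be rigor without assumptions on $\mu$, in particular the conditional-independence claim $X_r\perp\xi\mid X_s$. We will prove it by checking that $\beta_r-\frac rs\beta_s$ is independent of $(\beta_s,\xi)$. This follows from Gaussianity of the bridge together with a covariance computation: $\covariance(\beta_r-\frac rs\beta_s,\beta_s)=r(1-s)-\frac rs s(1-s)=0$. We then observe that $X_r=\frac rs X_s+(\beta_r-\frac rs\beta_s)$. The same decomposition, applied to the whole past $(\beta_u-\frac us\beta_s)_{u\le s}$, also provides an alternative proof of the Markov property that avoids path densities.
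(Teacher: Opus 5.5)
Your proof is correct. For the transition formula \eqref{eq:trans} it is the paper's argument: the paper also identifies the backward kernel, namely that the conditional law of $X_r$ given $X_s=y$ is $N(\frac rs y,\frac rs(s-r)I_d)$, and then applies Bayes' rule. The paper reads this kernel off from the fact that the time reversal $\tilde X=(X_{1-t})_{t\in[0,1]}$ is a Brownian bridge from $X_1$ to $0$. Your covariance check that $\beta_r-\frac rs\beta_s$ is independent of $(\beta_s,X_1)$ is simply an explicit verification of that reversed-bridge kernel.

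For the Markov property, however, your main route is genuinely different.

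\textbf{The paper's route.} It takes two lines. $\tilde X$ is Markov, being a Brownian bridge to $0$ with initial law $\mu$. The Markov property is equivalent to conditional independence of past and future given the present, which is symmetric in time, so it transfers to $X$ (the paper cites Chung--Walsh for this).

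\textbf{Your route.} You condition on the endpoint and use an $h$-transform. On $[0,r]$ with $r<1$, the law of $X$ is Wiener measure reweighted by $H(W_r)$, where $H(x)=\int\phi_{(1-r)I_d}(y-x)\phi_{I_d}(y)^{-1}\mu(dy)$. This weight is finite and positive for every $\mu$ because $1/(1-r)>1$. You then read off $\E[F\mid\mathcal F^X_r]$ explicitly as a function of $X_r$.

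\textbf{What each buys.} Your route is self-contained and yields extra structure: the law of $(X_u)_{u\le r}$ is absolutely continuous with respect to that of $(W_u)_{u\le r}$ for every $r<1$, with no assumption on $\mu$. The cost is more bookkeeping: joint measurability of $\Psi$, Fubini, and the bridge density. The paper's route is shorter but leans on the time-symmetric characterization of the Markov property.

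\textbf{Your closing remark.} Writing $X_u=\frac us X_s+(\beta_u-\frac us\beta_s)$ for $u\le s$ is in effect the paper's time-reversal argument made explicit. For the whole-path version you need the noise $(\beta_u-\frac us\beta_s)_{u\le s}$ to be independent of $(X_1,(\beta_v)_{v\ge s})$, not just of $(\beta_s,X_1)$. This does hold, since $\covariance(\beta_u-\frac us\beta_s,\beta_v)=u(1-v)-\frac us\, s(1-v)=0$ for $u\le s\le v$ (per coordinate).
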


\begin{proof}
Since the law of the time reversal process $\tilde X=(X_{1-t})_{t\in[0,1]}$ is $\int_{\mathbb R^d}P_x^0\mu(dx)$, $\tilde X$ is a Markov process with the same transition function as the Brownian bridge and initial distribution $\mu$. 
Then, by property (iii) in \cite[Section 1.1]{ChWa05}, we have $P(A\mid\mcl F^X_t)=P(A\mid X_t)$ for any $t\in[0,1]$ and $A\in\mcl F^{\tilde X}_t$.  
Since $\{X_s\in B\}\in\mcl F^{\tilde X}_t$ for any $s\in[t,1]$ and Borel set $B\subset\mathbb R^d$, we obtain $P(X_s\in B\mid\mcl F^X_t)=P(X_s\in B\mid X_t)$. 

To prove \eqref{eq:trans}, observe that the conditional law of $\tilde X_{1-r}$ given $\tilde X_{1-r}=y$ has a density $\phi_{\frac{r}{s}(s-r)I_d}\bra{x-\frac{s}{r}y}$, $x\in\mathbb R^d$. Hence, \eqref{eq:trans} follows from Bayes' rule.
\end{proof}

\begin{rmk}
When $H(\mu\mid \gamma_d)<\infty$, the Markov property of $X$ is a consequence of the fact that $X$ is the solution to the dynamical Schr\"odinger problem \eqref{dyn-schr}; see Theorem 3.43 in \cite{FoGa97} or Proposition 5 in \cite{Le14}. 
\end{rmk}

\begin{proposition}\label{prop:f-rc}
The filtration $\mathbf F^X$ is right-continuous. 
\end{proposition}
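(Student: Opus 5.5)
We must show $\mathcal F^X_{t+}=\mathcal F^X_t$ for every $t\in[0,1)$. The approach is the standard Blumenthal-type argument for Markov processes with a sufficiently regular (Feller-type) transition function, using the explicit transition kernel \eqref{eq:trans} from \cref{prop:markov}.

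First, treat $t=0$ separately. Since $X_0=0$, the $\sigma$-field $\mathcal F^X_0$ is trivial (up to null sets). We write $X_s=sX_1+\beta_s$ and use \cref{prop:basic}: $X_1$ and $\beta$ are independent, and $\beta$ is a Brownian bridge. Fix $A\in\mathcal F^X_{0+}$. For a bounded functional $F$ of $(X_u)_{u\ge\epsilon}$, the Markov property and the continuity argument below give $\E[F\mid\mathcal F^X_{s}]=g_s(X_s)$. Letting $s\downarrow0$ with $X_s\to0$ should show that $P(A)\in\{0,1\}$. Alternatively, one can handle $t=0$ by the time-reversal/Brownian bridge description, and I would fold it into the general argument below by allowing $r=0$ with kernel $P^{X_s}$.

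For the general case, it suffices to show $\E[\xi\mid\mathcal F^X_{t+}]=\E[\xi\mid\mathcal F^X_t]$ a.s. for every $\xi=\prod_{i=1}^n f_i(X_{s_i})$ with bounded continuous $f_i$ and $s_1<\dots<s_n$. A monotone class argument then extends this to all bounded $\mathcal F^X_1$-measurable $\xi$, which yields $\mathcal F^X_{t+}\subset\mathcal F^X_t$ after completion by null sets. By splitting off the factors with $s_i\le t$, we may assume all $s_i>t$. For $t<r<s_1$, the Markov property gives $\E[\xi\mid\mathcal F^X_r]=h_r(X_r)$ with
\[
h_r(x)=\int P^X_{r,s_1}(x,dy_1)\,f_1(y_1)\,g(y_1),
\]
where $g$ is the bounded function obtained from the remaining factors by iterating the kernels. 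By backward martingale convergence, $\E[\xi\mid\mathcal F^X_{t+}]=\lim_{r\downarrow t}h_r(X_r)$. We must identify this limit as $h_t(X_t)$, which is $\mathcal F^X_t$-measurable.

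The main obstacle is therefore the joint continuity $(r,x)\mapsto h_r(x)$ at $(t,X_t)$, combined with $X_r\to X_t$ by path continuity. From \eqref{eq:trans},
\[
h_r(x)=\frac{\int\phi_{\frac rs(s-r)I_d}(x-\tfrac rs y)\,f_1(y)g(y)\,P^{X_{s}}(dy)}{p_r(x)},\qquad s=s_1.
\]
The Gaussian density is continuous in $(r,x)$ and bounded uniformly for $r$ in a compact subset of $(0,s)$, so dominated convergence gives continuity of the numerator. The same argument with $f_1g\equiv1$ shows that $p_r(x)$ is continuous, since $p_r(x)=\int\phi_{\frac rs(s-r)I_d}(x-\frac rs y)\,P^{X_s}(dy)$ by Chapman–Kolmogorov. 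The denominator is also strictly positive everywhere. Hence $h_r(X_r)\to h_t(X_t)$ for $t>0$. For $t=0$, we instead use $\frac rs(s-r)\to0$ and $X_r\to0$, so the kernel concentrates and $h_r(X_r)\to\E[f_1(X_{s})g(X_{s})]$, a constant. To verify this, I would use the representation through $\beta$ and independence, so that the conditional expectation given $X_r$ converges to the unconditional one. This completes the argument.
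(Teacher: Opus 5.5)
\textbf{Case $t>0$.} This part is the paper's Case~1: the Markov property, the explicit kernel \eqref{eq:trans}, continuity of $(r,x)\mapsto h_r(x)$ by dominated convergence, path continuity, and backward martingale convergence.

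\textbf{Case $t=0$: the gap.} This is the only non-routine part, and the mechanism you propose does not work. After cancelling the $y$-independent factors, $P^X_{r,s}(x,dy)$ is $P^{X_s}(dy)$ tilted by $e^{\frac{x\cdot y}{s-r}-\frac{r|y|^2}{2s(s-r)}}$ and renormalized. It does not ``concentrate''. The law of $X_r$ given $X_s$ concentrates at $0$, but that is the wrong direction. The supremum of the tilt over $y$ is $e^{s|x|^2/(2r(s-r))}$, attained at $y=\frac{s}{r}x$. So for general $\mu$, $h_r(x)$ need not converge to $\E[\xi]$ as $(r,x)\to(0,0)$. If $\mu$ has heavy tails, the far region $y\approx\frac sr x$ dominates along suitable sequences $x_r\to0$ with $|x_r|/\sqrt r\to\infty$. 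Hence knowing $X_r\to0$ gives nothing.

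What you need is the scale information $|X_r|=O_P(\sqrt r)$. This is the content of \cref{lem:f-rc}. On $\{|X_r|\le K\sqrt r\}$ with $r\le s/2$, the tilt is bounded by $e^{K^2}$ and tends to $1$ pointwise, so bounded convergence applies. Meanwhile $\limsup_{r\downarrow0}P(|X_r|>K\sqrt r)\le4/K^2$. Applying this to the numerator and to $p_r$ gives $h_r(X_r)\to\E[\xi]$ in probability. Since $|X_r|/\sqrt r$ is a.s.\ unbounded as $r\downarrow0$ (law of the iterated logarithm), this control holds only with high probability, so convergence in probability is the right target. It suffices, because $\E[\xi\mid\mcl F^X_r]\to\E[\xi\mid\mcl F^X_{0+}]$ a.s.

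\textbf{Your alternative hint via $\beta$.} This can also be made to work, and gives a somewhat cleaner route than the paper's tilt estimates. By the Markov property of the reversed process, conditionally on $(X_u)_{u\ge s_1}$ we have
\[
X_r/\sqrt r\sim N\bigl(\tfrac{\sqrt r}{s_1}X_{s_1},(1-\tfrac r{s_1})I_d\bigr),
\]
whose total variation distance to $\gamma_d$ tends to $0$ for each value of $X_{s_1}$. By dominated convergence, the joint law of $(X_r/\sqrt r,(X_u)_{u\ge s_1})$ is asymptotically a product in total variation. This yields $\E\abs{\E[\xi\mid X_r]-\E[\xi]}\to0$, and one concludes as before. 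As written, however, the $t=0$ case (both in your first paragraph and at the end) is asserted with an incorrect justification rather than proved.
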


\begin{proof}
See \cref{proof:f-rc}.
\end{proof}

As announced, $X$ always solves SDE \eqref{eq:follmer} in the weak sense without any assumption on $\mu$:
\begin{theorem}\label{thm:weak}
$X$ is a weak solution to \eqref{eq:follmer}. 
More precisely, we have
\[
P\bra{\int_0^1\abs{\frac{X_t}{t}+\nabla\log p_t(X_t)}dt<\infty}=1
\]
and the process
\[
W^X_t:=X_t-\int_0^t\bra{\frac{X_s}{s}+\nabla\log p_s(X_s)}ds,\quad t\in[0,1],
\]
is a standard $\mathbf F^X$-Wiener process in $\mathbb R^d$. 
\end{theorem}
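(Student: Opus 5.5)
The plan is to identify the drift explicitly as a conditional expectation and then use the innovation (filtering) theorem. By Tweedie's formula,
\[
\frac{x}{t}+\nabla\log p_t(x)=\frac{\E[X_1\mid X_t=x]-x}{1-t},
\]
since $X_t=tX_1+\beta_t$ with $\beta_t\sim N(0,t(1-t)I_d)$ independent of $X_1$. Differentiating under the integral in the formula for $p_t$ gives $\nabla p_t(x)=\int \frac{ty-x}{t(1-t)}\phi_{t(1-t)I_d}(x-ty)\mu(dy)$, which is legitimate for every $t\in(0,1)$ because $\phi$ has Gaussian decay and $\mu$ is a probability measure, so no moment assumption is needed. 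Consequently $b_t(X_t):=\frac{X_t}{t}+\nabla\log p_t(X_t)=\E[(X_1-X_t)/(1-t)\mid X_t]$, and by \cref{prop:markov} this equals $\E[(X_1-X_t)/(1-t)\mid\mcl F^X_t]$. In the enlarged filtration $\mcl G_t=\mcl F^X_t\vee\sigma(X_1)$, the Brownian bridge representation gives the semimartingale decomposition $X_t=B_t+\int_0^t\frac{X_1-X_s}{1-s}ds$ with $B$ a $\mathbf G$-Brownian motion. Hence $W^X_t=B_t+\int_0^t\bigl(h_s-\E[h_s\mid\mcl F^X_s]\bigr)ds$ with $h_s=(X_1-X_s)/(1-s)$, and the standard argument shows $W^X$ is an $\mathbf F^X$-martingale with quadratic variation $tI_d$. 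Lévy's characterization then finishes the proof, with \cref{prop:f-rc} ensuring the usual conditions.

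The technical issue is integrability, because we assume nothing on $\mu$, so $X_1$ may have no first moment. On $[0,1-\delta]$ this is easy: $X_1-X_s=(1-s)X_1-\beta_s$, so $h_s=X_1-\beta_s/(1-s)$. The term $\beta_s/(1-s)$ is Gaussian and harmless on $[0,1-\delta]$. The term $X_1$, however, need not be integrable. I would handle this by conditioning: the martingale property of $W^X$ only needs $\E[\,\cdot\,\mathbf 1_A]$ for $A\in\mcl F^X_r$, and I would localize on events $\{|X_1|\le K\}$. Alternatively, I would prove the identity $\E[W^X_t-W^X_r\mid\mcl F^X_r]=0$ via the Markov transition \eqref{eq:trans} directly. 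In this approach $\E[|b_s(X_s)|]$ should be finite even without moments, since
\[
\E\bigl[|\E[X_1\mid X_s]|\bigr]=\int\Bigl|\int y\,\phi(x-sy)\mu(dy)\Bigr|dx/\cdots\le\int |y|\cdots
\]
fails in general, so the first route is safer. Concretely, I would show that $b_s(X_s)=\E[X_1\mid X_s]-\E[\beta_s\mid X_s]/(1-s)$, where $|\E[\beta_s\mid X_s]|$ is integrable and $|\E[X_1\mid X_s]|$ is finite a.s., and that its pathwise time integral over $[0,1)$ is a.s. finite.

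The main obstacle is the a.s. finiteness of $\int_0^1|b_t(X_t)|dt$ near $t=1$ (and near $t=0$, where $X_t/t$ appears but $b_t(X_t)\approx \E[X_1\mid X_t]$ behaves fine since $\E[X_1\mid X_t]\to\E[X_1]$ is not available either). I would try to use the entropy/energy identity: for the $\mathbf F^X$-drift $u_t=b_t(X_t)$, the relative entropy $H(P^X\mid P^W)=H(\mu\mid\gamma_d)$ equals $\frac12\E\int_0^1|u_t|^2dt$. This gives finiteness when the entropy is finite. For general $\mu$ I would reduce by conditioning on $X_1\in A$ for bounded sets $A$. Conditionally, $X$ is a Föllmer process for $\mu(\cdot\mid A)$, which has compact support and finite entropy against $\gamma_d$ after the usual regularization. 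The drift, however, changes under conditioning, being $\E[h_t\mid X_t,X_1\in A]$, so I would instead compare via the martingale $\E[\mathbf 1_A(X_1)\mid\mcl F^X_t]$. Its log-derivative contributes a square-integrable correction on $\{P(X_1\in A\mid\mcl F^X_t)>0\}$, and this set has full measure on $\{X_1\in A\}$. Letting $A\uparrow\mathbb R^d$ would yield a.s. finiteness. Once finiteness is established, Lévy's theorem completes the identification of $W^X$ as an $\mathbf F^X$-Wiener process.
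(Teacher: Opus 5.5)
Your architecture matches the paper's: write $W^X$ as the bridge's driving Brownian motion $B$ plus $\int_0^t(h_s-\E[h_s\mid\mcl F^X_s])\,ds$, check the $\mathbf F^X$-martingale property, and conclude by L\'evy. However, there is a genuine gap exactly at the step you flag as the main obstacle, and the fixes you propose do not work.

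\begin{itemize}
\item Localizing on $\{|X_1|\le K\}$ is not legitimate in an $\mathbf F^X$-martingale argument, because these events are not in $\mcl F^X_r$.
\item The entropy/energy route cannot give a.s.\ finiteness for general $\mu$. For $\mu=\delta_0$ one has $v_t=-\beta_t/(1-t)$, so $\E\int_0^1|v_t|^2dt=d\int_0^1\frac{t}{1-t}dt=\infty$ and $H(\delta_0\mid\gamma_d)=\infty$, although $\E\int_0^1|v_t|dt<\infty$.
\item Conditioning on $\{X_1\in A\}$ with $A$ bounded does not repair this. Compact support does not give finite entropy, since $\mu(\cdot\mid A)$ may still be singular. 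A ``regularization'' changes the process, and you give no stability argument.
\item Your representation $b_s(X_s)=\E[X_1\mid X_s]-\E[\beta_s\mid X_s]/(1-s)$ keeps the possibly non-integrable $X_1$ inside a conditional expectation. This is what creates the apparent difficulty. Without a first moment, $\E[(X_1-X_t)/(1-t)\mid\mcl F^X_t]$ is not even defined classically.
\end{itemize}

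The missing observation is that $X_1$ cancels exactly, so no moment or entropy condition is needed. Do not condition $X_s/s$: it is $\mcl F^X_s$-measurable and equals $X_1+\beta_s/s$ pathwise. The only conditional expectation you need is $\nabla\log p_s(X_s)=-\E[\beta_s\mid\mcl F^X_s]/(s(1-s))$, which follows from Tweedie plus the Markov property and involves only the Gaussian $\beta_s$. Thus
\[
\frac{X_s}{s}+\nabla\log p_s(X_s)=X_1+\frac{\beta_s}{s}-\frac{\E[\beta_s\mid\mcl F^X_s]}{s(1-s)}.
\]
Since $\E[|\beta_s|]\le\sqrt{ds(1-s)}$, and the same bound holds for the conditional expectation, the last two terms have expected time integral at most $2\sqrt d\int_0^1(s(1-s))^{-1/2}ds<\infty$. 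Meanwhile $\int_0^1|X_1|\,ds=|X_1|<\infty$. This settles a.s.\ finiteness at both endpoints at once.

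Moreover, $tX_1$ cancels between $X_t$ and the drift integral, giving
\[
W^X_t=B_t-\int_0^t\frac{\beta_s-\E[\beta_s\mid\mcl F^X_s]}{s(1-s)}\,ds,
\]
which is integrable. The martingale property then follows from $\E[B_t-B_s\mid\mcl F^X_s]=0$ and the tower property. The first identity holds because $\mcl F^X_s\subset\mcl F^\beta_s\vee\sigma(X_1)$ and $X_1$ is independent of $\beta$.

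In your own notation, $sX_1+\beta_s=X_s$ is $\mcl F^X_s$-measurable, so $X_1-\E[X_1\mid\mcl F^X_s]=-(\beta_s-\E[\beta_s\mid\mcl F^X_s])/s$. Hence $h_s-\E[h_s\mid\mcl F^X_s]$ never involves $X_1$; this is the point your proposal misses.
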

\begin{proof}
See \cref{proof:weak}.
\end{proof}
\begin{rmk}\label{follmer-work}
When $H(\mu\mid\gamma_d)<\infty$, the claim of \cref{thm:weak} follows from \citet{Fo86}'s results in the following way: Since $H(P^X\mid P^W)=H(\mu\mid\gamma_d)<\infty$ and $X$ is a Markov process, there exist Borel functions $b:[0,1]\times\mathbb R^d\to\mathbb R^d$ and $\wh b:[0,1]\times\mathbb R^d\to\mathbb R^d$ such that both $(X_t-\int_0^tb(s,X_s)ds)_{t\in[0,1]}$ and $(X_{1-t}-\int_0^t\wh b(s,X_{1-s})ds)_{t\in[0,1]}$ are Wiener processes in $\mathbb R^d$ and $\nabla\log p_t(x)=\wh b(1-t,x)+b(t,x)$ for almost all $t\in(0,1)$ and $x\in\mathbb R^d$ by Proposition 2.11 and Theorem 3.10 in \cite{Fo86}. However, since $(X_{1-t})_{t\in[0,1]}$ is a Brownian bridge with initial distribution $\mu$ and terminal value 0, we have $\wh b(t,x)=-x/(1-t)$. Hence we obtain $b(t,x)=x/t+\nabla\log p_t(x)$. 
\end{rmk}

Next we focus on the drift process of SDE \eqref{eq:follmer}:
\[
v_t:=\frac{X_t}{t}+\nabla\log p_t(X_t),\qquad 0<t<1.
\]
The process $(v_t)_{t\in(0,1)}$ is sometimes called the \textit{F\"ollmer drift} associated with $\mu$. 
We begin by discussing its relation to information-theoretic quantities.
\begin{proposition}[\citet{Fo86}, Proposition 2.11]\label{ent-formula}
\ben{\label{eq:ent-formula}
H(\mu\mid \gamma_d)=\frac{1}{2}\int_0^1\E[|v_t|^2]dt.
}
\end{proposition}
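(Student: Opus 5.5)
We prove the identity \eqref{eq:ent-formula} by computing the relative entropy of path measures in two ways. The disintegration $P^X=\int P_0^y\mu(dy)$ and $P^W=\int P_0^y\gamma_d(dy)$ (the Wiener measure is the mixture of Brownian bridges over its time-1 marginal $\gamma_d$) yields, by the chain rule for relative entropy,
\[
H(P^X\mid P^W)=H(\mu\mid\gamma_d)+\int H(P_0^y\mid P_0^y)\,\mu(dy)=H(\mu\mid\gamma_d).
\]
Indeed $dP^X/dP^W(w)=(d\mu/d\gamma_d)(w(1))$ when $\mu\ll\gamma_d$, and $P^X\not\ll P^W$ otherwise. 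It therefore suffices to show $H(P^X\mid P^W)=\frac12\int_0^1\E|v_t|^2dt$, both sides possibly infinite.

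For the inequality $H(P^X\mid P^W)\le\frac12\int_0^1\E|v_t|^2dt$, assume the right-hand side is finite. By \cref{thm:weak}, $X_t=W^X_t+\int_0^tv_sds$ with $W^X$ an $\mathbf F^X$-Wiener process and $\int_0^1|v_t|^2dt<\infty$ almost surely. We localize with stopping times $\tau_n=\inf\{t:\int_0^t|v_s|^2ds\ge n\}\wedge1$. Girsanov's theorem, applied with Novikov's condition for the stopped drift, shows that on $\mcl F^X_{\tau_n}$ the law of $X$ relative to the Wiener measure has log-density $\int_0^{\tau_n}v_s\cdot dX_s-\frac12\int_0^{\tau_n}|v_s|^2ds$. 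Taking the expectation under $P^X$, the martingale part has mean zero, so the stopped relative entropy equals $\frac12\E\int_0^{\tau_n}|v_s|^2ds$. Letting $n\to\infty$ and using lower semicontinuity and monotonicity of relative entropy under increasing $\sigma$-fields (note $\tau_n\uparrow1$ a.s.) gives the claimed inequality, and in fact equality.

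For the reverse inequality, assume $H(\mu\mid\gamma_d)<\infty$, so $P^X\ll P^W$ with density $f(w(1))$, where $f=d\mu/d\gamma_d$. The density process $M_t=\E_{P^W}[f(W_1)\mid\mcl F_t]=Q_{1-t}f(W_t)$ is the heat semigroup applied to $f$, and a direct computation gives $Q_{1-t}f(x)=p_t(x)/\phi_{tI_d}(x)$. By Itô's formula or martingale representation, $M_t=1+\int_0^tM_s\,\nabla\log(Q_{1-s}f)(W_s)\cdot dW_s$. Here $\nabla\log(p_s/\phi_{sI_d})(x)=\nabla\log p_s(x)+x/s$, so the Girsanov drift is exactly $v_s$. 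Föllmer's entropy argument then gives $H(P^X\mid P^W)=\E_{P^X}[\log M_1]=\frac12\E\int_0^1|v_s|^2ds$, again after localization. The lower bound $\E_{P^X}\log M_{\tau_n}\ge\frac12\E\int_0^{\tau_n}|v|^2$ follows because the stochastic integral against $W^X$ has zero mean once stopped, and monotone convergence finishes the argument.

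The main obstacle is the localization and justification steps. We need the stochastic integral terms to have zero mean under $P^X$, and we need $\E_{P^X}\log M_{\tau_n}\to\E_{P^X}\log M_1$. For the latter we use that $\E_{P^X}\log M_{\tau_n}=H(P^X|_{\mcl F_{\tau_n}}\mid P^W|_{\mcl F_{\tau_n}})$ increases to $H(P^X\mid P^W)$, by martingale convergence of relative entropies along a filtration.
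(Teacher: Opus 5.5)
Your proof is correct in outline but takes a different route from the paper.

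\textbf{The paper's route.} It first uses Theorem 7.5 of Liptser--Shiryaev to get the dichotomy $\mu\ll\gamma_d\iff P^X\ll P^W\iff\int_0^1|v_t|^2dt<\infty$ a.s. If $\mu\not\ll\gamma_d$, both sides are infinite. If $\mu\ll\gamma_d$, it applies the general entropy formula of \cref{lem:girsanov} with $b_1(t,w)=w(t)/t+\nabla\log p_t(w(t))$, $b_2=0$ and $t=1$. That lemma takes the Girsanov densities from Liptser--Shiryaev Theorems 7.6--7.7 and handles an infinite right-hand side by stopping and Jensen, much as in your second direction. The paper then concludes via $dP^X/dP^W=(d\mu/d\gamma_d)(w(1))$.

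\textbf{What you do differently.} You bypass the Liptser--Shiryaev absolute-continuity theorems. The implication ``finite energy $\Rightarrow$ $H(P^X\mid P^W)\le$ energy'' comes from Novikov's criterion for the stopped drift plus convergence of relative entropies along increasing $\sigma$-fields; this replaces the needed half of Theorem 7.5 and disposes of the case $\mu\not\ll\gamma_d$. The implication ``finite entropy $\Rightarrow$ energy $\le H$'' comes from the explicit density martingale $M_t=Q_{1-t}f(W_t)=p_t(W_t)/\phi_{tI_d}(W_t)$. This has the bonus of exhibiting $v_t=\nabla\log Q_{1-t}f(X_t)$ directly, as in F\"ollmer's original approach.

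\textbf{What each buys.} The paper's route is shorter, and its lemma is reused in Theorems \ref{thm:em}, \ref{thm:fi} and \ref{thm:ada}, where the reference process is not Brownian. Yours is more self-contained.

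\textbf{Steps to write out.}
\begin{enumerate}
\item \emph{First direction.} The limiting $\sigma$-field is $\bigvee_n\mathcal B_{\sigma_n}$, with $\sigma_n$ the path-space version of $\tau_n$, and you have no a priori control of $\sigma_n$ under $P^W$. The limit is nonetheless $H(P^X\mid P^W)$, and $P^X\ll P^W$ follows, for two reasons: $\{\sigma_n\uparrow1\}$ has full $P^X$-measure, and by path continuity $A\cap\{\sigma_n\uparrow1\}\in\bigvee_n\mathcal B_{\sigma_n}$ for every $A$ in the canonical $\sigma$-field $\mathcal B_1$.
\item \emph{Second direction.} The claim $\tau_n\uparrow1$ needs $\int_0^t|v_s|^2ds<\infty$ a.s.\ for $t<1$, which \cref{thm:weak} does not give. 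It follows from $\E[|X_1|^2]<\infty$ together with \eqref{eq:v2}; the moment bound is implied by $H(\mu\mid\gamma_d)<\infty$, e.g.\ via the variational formula with $g(y)=|y|^2/4$.
\item \emph{Only an inequality is needed in the second direction.} There you need only $\E_{P^X}[\log M_{\tau_n}]\le H(P^X\mid P^W)$ from \cref{lem:dpi}, not convergence of the stopped entropies.
\end{enumerate}
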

Strictly speaking, \citet{Fo86} showed \eqref{eq:ent-formula} when $\mu\ll\gamma_d$ (see also \cite[Theorem 2]{Le13}), but extension to the case $\mu\not\ll\gamma_d$ is straightforward. We give a proof in \cref{sec:ent-formula} for the sake of completeness. 

To state the next result, we need to introduce some notions. 
When a probability distribution $\nu$ on $\mathbb R^d$ has an almost everywhere differentiable density $f$ with respect to $\gamma_d$, we define the \textit{relative Fisher information} of $\nu$ with respect to $\gamma_d$ as 
\[
I(\nu\mid\gamma_d):=\int_{\mathbb R^d}|\nabla\log f|^2d\nu=\int_{\mathbb R^d}\frac{|\nabla f|^2}{f}d\gamma_d.
\]
Also, for $t\in[0,1]$, we denote by $\slepian_t\mu$ the law of a random vector $\sqrt t\xi+\sqrt{1-t}Z$ with $\xi\sim\mu$ and $Z\sim \gamma_d$ being independent. 
$\slepian_t\mu$ is known as the \textit{Slepian interpolation} between $\mu$ and $\gamma_d$. 
Observe that $X_t/\sqrt t\sim\slepian_t\mu$ when $t>0$. 
Also, when $t<1$, $\slepian_t\mu$ has a smooth density with respect to $\gamma_d$ and we denote it by $f_t$.
\begin{proposition}\label{fi-formula}
For every $t\in(0,1)$, we have $v_t=\nabla\log f_{t}(X_t/\sqrt t)/\sqrt t$. Consequently,
\[
\E[|v_t|^2]=\frac{I(\slepian_t\mu\mid \gamma_d)}{t}.
\]
\end{proposition}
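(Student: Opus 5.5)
The plan is a direct density computation relating $p_t$ to $f_t$, followed by the chain rule. Fix $t\in(0,1)$. Since $X_t/\sqrt t\sim\slepian_t\mu$, the density of $X_t$ with respect to Lebesgue measure is
\[
p_t(x)=t^{-d/2}f_t(x/\sqrt t)\,\phi_{I_d}(x/\sqrt t)=f_t(x/\sqrt t)\,\phi_{tI_d}(x).
\]
This holds with $f_t$ smooth and positive. Positivity follows from the explicit formula: writing $u=x/\sqrt t$, the density of $\sqrt t\xi+\sqrt{1-t}Z$ is $\int\phi_{(1-t)I_d}(u-\sqrt t y)\mu(dy)$, which is strictly positive and smooth because differentiation under the integral is justified by Gaussian tails. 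We could also just use directly that $p_t>0$ is smooth, as is evident from its formula.

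Next I will take logarithms and the gradient in $x$:
\[
\nabla\log p_t(x)=\frac{1}{\sqrt t}(\nabla\log f_t)(x/\sqrt t)+\nabla\log\phi_{tI_d}(x)=\frac{1}{\sqrt t}(\nabla\log f_t)(x/\sqrt t)-\frac{x}{t}.
\]
Adding $x/t$ and evaluating at $x=X_t$ gives $v_t=\nabla\log f_t(X_t/\sqrt t)/\sqrt t$, which is the first claim.

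For the second claim, I will square and take expectations:
\[
\E[|v_t|^2]=\frac1t\E\bigl[|\nabla\log f_t(X_t/\sqrt t)|^2\bigr]=\frac1t\int|\nabla\log f_t|^2\,d\slepian_t\mu,
\]
because $X_t/\sqrt t\sim\slepian_t\mu$. The last integral is $I(\slepian_t\mu\mid\gamma_d)$ by definition; both sides may be $+\infty$ a priori, and the identity still holds in $[0,\infty]$.

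There is no real obstacle. The only point needing care is checking the change-of-variables identity $p_t(x)=t^{-d/2}q_t(x/\sqrt t)$, where $q_t=f_t\phi_{I_d}$ is the Lebesgue density of $\slepian_t\mu$, and confirming that $q_t$ is everywhere positive and $C^1$. Then $\log f_t$ is differentiable everywhere, and the pointwise chain rule applies without any almost-everywhere issues.
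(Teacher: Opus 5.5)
Your proposal is correct and follows essentially the same route as the paper: relate $f_t$ and $p_t$ through the scaling $X_t/\sqrt t\sim\slepian_t\mu$, take $\nabla\log$ (the Gaussian factor contributes exactly $-x/t$), and then integrate $|\nabla\log f_t|^2$ against $\slepian_t\mu$. Your normalization $p_t(x)=t^{-d/2}f_t(x/\sqrt t)\,\phi_{I_d}(x/\sqrt t)$ is the correct one (the paper's display has the power of $t$ inverted), though any constant factor disappears after taking $\nabla\log$.
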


\begin{proof}
For any $t\in(0,1)$ and $y\in\mathbb R^d$, we can rewrite $f_t(y)$ as
\[
f_t(y)=\phi_{I_d}(y)^{-1}\int_{\mathbb R^d}\phi_{(1-t)I_d}(y-\sqrt tx)\mu(dx)
=\phi_{I_d}(y)^{-1}\frac{p_t(\sqrt ty)}{t^{d/2}}.
\]
Hence we have $\nabla\log f_t(y)=y+\sqrt t\nabla\log p_t(\sqrt t y)$. Thus $v_t=\nabla\log f_{t}(X_t/\sqrt t)/\sqrt t$.
\end{proof}

Combining Propositions \ref{ent-formula} and \ref{fi-formula} gives a version of \textit{de Bruijn’s identity} (cf.~Proposition 5.2.2 in \cite{BGL14}):
\ben{\label{debruijn}
H(\mu\mid \gamma_d)=\int_0^1\frac{I(\slepian_t\mu\mid\gamma_d)}{2t}dt.
}

To obtain another useful representation of the F\"ollmer drift, we introduce a family of probability distributions on $\mathbb R^d$. 
For $t\in(0,1)$ and $x\in\mathbb R^d$, define a probability distribution $\mu_{t,x}$ on $\mathbb R^d$ as
\[
\mu_{t,x}(dy)=\frac{e^{\frac{|y|^2}{2}-\frac{|y-x|^2}{2(1-t)}}}{\int_{\mathbb R^d}e^{\frac{|z|^2}{2}-\frac{|z-x|^2}{2(1-t)}}\mu(dz)}\mu(dy).
\]
Since $y\mapsto|y|^pe^{\frac{|y|^2}{2}-\frac{|y-x|^2}{2(1-t)}}$ is bounded for all $p>0$, $\mu_{t,x}$ has all moments. 
We particularly set
\[
m(t,x):=\int_{\mathbb R^d}y\mu_{t,x}(dy).
\]
A straightforward computation shows
\ben{\label{vt-formula}
v_t=\frac{m(t,X_t)-X_t}{1-t}\quad\text{for all }0<t<1.
}
\begin{proposition}[\citet{ElLe18}, Eq.(21)]\label{lem:sl}
For any $t\in(0,1)$ and $x\in\mathbb R^d$, $\mu_{t,x}$ is the (regular) conditional law of $X_1$ given $X_t=x$. 
In particular, if $\E[|X_1|]<\infty$,
\ben{\label{mt-formula}
m(t,X_t)=\E[X_1\mid X_t].
}  
\end{proposition}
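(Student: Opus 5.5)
The plan is a direct Bayes computation based on the representation $X_t=tX_1+\beta_t$ from \cref{prop:basic}, where $\beta$ is a standard Brownian bridge independent of $X_1\sim\mu$. For fixed $t\in(0,1)$, $\beta_t\sim N(0,t(1-t)I_d)$. Hence the joint law of $(X_1,X_t)$ is
\[
\mu(dy)\,\phi_{t(1-t)I_d}(x-ty)\,dx .
\]
Integrating out $y$ gives the marginal density $p_t(x)$ of $X_t$, which is strictly positive and finite for every $x$. So a regular conditional law of $X_1$ given $X_t=x$ is
\[
\frac{\phi_{t(1-t)I_d}(x-ty)}{p_t(x)}\,\mu(dy),
\]
defined for every $x$ rather than only almost everywhere.

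Next I will check that this kernel coincides with $\mu_{t,x}$, which is pure algebra. Expanding the exponent,
\[
-\frac{|x-ty|^2}{2t(1-t)}=-\frac{|x|^2}{2t(1-t)}+\frac{\langle x,y\rangle}{1-t}-\frac{t|y|^2}{2(1-t)}.
\]
Similarly,
\[
\frac{|y|^2}{2}-\frac{|y-x|^2}{2(1-t)}=-\frac{t|y|^2}{2(1-t)}+\frac{\langle x,y\rangle}{1-t}-\frac{|x|^2}{2(1-t)}.
\]
The two expressions differ by a term depending only on $x$ (and $t$). That term cancels after normalization, so the density of the kernel with respect to $\mu$ is proportional to $e^{\frac{|y|^2}{2}-\frac{|y-x|^2}{2(1-t)}}$ with the same normalizing integral. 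This identifies the kernel with $\mu_{t,x}$.

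For \eqref{mt-formula}, assume $\E[|X_1|]<\infty$. Then $\E[X_1\mid X_t]=\int y\,P(X_1\in dy\mid X_t)$ almost surely by the standard disintegration property of regular conditional distributions. By the first part this equals $\int y\,\mu_{t,X_t}(dy)=m(t,X_t)$; the integral is finite because $\mu_{t,x}$ has all moments, as noted before the statement.

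The only point needing care is that the formula should hold for every $x$ and not merely $P^{X_t}$-a.e. This is handled by choosing the explicit kernel above, whose denominator $p_t(x)$ is positive and finite for all $x$. Measurability of $x\mapsto\mu_{t,x}(B)$ then follows from Fubini. I expect no real obstacle.
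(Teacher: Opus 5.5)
Your proof is correct and is essentially the paper's argument: the paper uses the same completing-the-square algebra to show $\frac{d\mu_{t,x}}{d\mu}(y)=\phi_{t(1-t)I_d}(x-ty)/p_t(x)$, and then reads off the conditional law from the transition kernel \eqref{eq:trans} of \cref{prop:markov} with $(r,s)=(t,1)$. The only difference is that you derive this kernel by Bayes' rule directly from the independence in \cref{prop:basic}, whereas the paper cites \cref{prop:markov}; it is the same computation, and your version does not need the time-reversal and Markov-property argument.
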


\begin{proof}
Observe that
\ba{
\frac{d\mu_{t,x}}{d\mu}(y)=\frac{e^{-\frac{|ty-x|^2}{2t(1-t)}}}{\int_{\mathbb R^d}e^{-\frac{|tz-x|^2}{2t(1-t)}}\mu(dz)}
=\frac{\phi_{t(1-t)I_d}(x-ty)}{p_t(x)}.
}
Therefore, the asserted claim is a consequence of \cref{prop:markov}.
\end{proof}

When $\E[|X_1|]<\infty$, we have $\E[X_1\mid X_t]\to\E[X_1]$ a.s.~as $t\downarrow0$ by the Markov property of $X$, \cite[Corollary II.2.4]{ReYo99} and \cref{prop:f-rc}. 
Hence, $v_t\to\E[X_1]$ as $t\downarrow0$ a.s.~by \eqref{vt-formula} and \eqref{mt-formula}. 
For this reason we set $v_0:=\E[X_1]$ whenever $\E[|X_1|]<\infty$. 
\begin{rmk}[Stochastic localization]
The measure-valued stochastic process $(\mu_{t,X_t})_{t\in(0,1)}$ is closely related to \citet{El13}'s stochastic localization; see \cite[Section 4]{KlPu23} for details. 
\citet{BDDD24} successfully utilize stochastic localization to obtain a refined error bound for discretization of the reverse SDE in SGM. 
\end{rmk}

The following three propositions are useful consequences of \eqref{vt-formula} and \cref{lem:sl}. 

\begin{proposition}[\citet{EMZ20}, Lemma 11]
Assume $\E[|X_1|^2]<\infty$. Then for every $t\in[0,1)$,
\ben{\label{eq:v2}
\E[|v_t|^2]=\frac{d-\E[\trace(\Gamma_t)]}{1-t}
+\E[|X_1|^2]-d,
}
where
\ben{\label{def:gamma}
\Gamma_t:=\frac{\covariance[X_1\mid X_t]}{1-t}.
}
\end{proposition}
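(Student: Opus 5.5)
The plan is a direct $L^2$ computation based on the representation \eqref{vt-formula} together with \cref{lem:sl}. Fix $t\in(0,1)$. Since $\E[|X_1|^2]<\infty$, \eqref{mt-formula} gives $m(t,X_t)=\E[X_1\mid X_t]$. Hence
\[
(1-t)v_t=\E[X_1\mid X_t]-X_t,
\]
and it suffices to compute $\E|\E[X_1\mid X_t]-X_t|^2$.

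\textbf{Orthogonal decomposition.} Write
\[
X_1-X_t=\bigl(X_1-\E[X_1\mid X_t]\bigr)+\bigl(\E[X_1\mid X_t]-X_t\bigr).
\]
The second term is $\sigma(X_t)$-measurable and square integrable, because $X_t=tX_1+\beta_t$ is. The first term is orthogonal in $L^2$ to every such variable. Pythagoras then yields
\[
\E|\E[X_1\mid X_t]-X_t|^2=\E|X_1-X_t|^2-\E|X_1-\E[X_1\mid X_t]|^2 .
\]
By the definition \eqref{def:gamma}, the subtracted term equals
\[
\E\trace\covariance[X_1\mid X_t]=(1-t)\E[\trace(\Gamma_t)].
\]

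\textbf{The first term.} By \cref{prop:basic}, $X_1-X_t=(1-t)X_1-\beta_t$, where $\beta_t$ is independent of $X_1$ and $\beta_t\sim N(0,t(1-t)I_d)$. Therefore
\[
\E|X_1-X_t|^2=(1-t)^2\E|X_1|^2+t(1-t)d .
\]

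\textbf{Assembling.} Dividing by $(1-t)^2$ gives
\[
\E|v_t|^2=\E|X_1|^2+\frac{td}{1-t}-\frac{\E[\trace(\Gamma_t)]}{1-t}.
\]
Since $\frac{td}{1-t}=\frac{d}{1-t}-d$, this is exactly \eqref{eq:v2}.

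\textbf{The case $t=0$.} Here $v_0=\E[X_1]$ by convention. Since $X_0=0$, conditioning on $X_0$ is trivial, so $\Gamma_0=\covariance[X_1]$. Then
\[
\E|v_0|^2=|\E X_1|^2=\E|X_1|^2-\trace\covariance[X_1],
\]
which matches \eqref{eq:v2} at $t=0$.

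\textbf{Main obstacle.} There is no real obstacle. The only point needing care is the integrability that justifies the Pythagorean step and the identification $m(t,X_t)=\E[X_1\mid X_t]$. Both are guaranteed by the assumption $\E|X_1|^2<\infty$ and \cref{lem:sl}.
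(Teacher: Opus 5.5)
Your proof is correct and is essentially the paper's argument. The paper expands $\E[|\E[X_1\mid X_t]-X_t|^2]$ directly, using $\E[\E[X_1\mid X_t]\cdot X_t]=\E[X_1\cdot X_t]$ and $\E[|\E[X_1\mid X_t]|^2]=\E[|X_1|^2]-\E[\trace(\covariance[X_1\mid X_t])]$; these two facts are exactly the content of your Pythagorean identity. It then evaluates the remaining second moments via \cref{prop:basic}, as you do, and your explicit check of $t=0$ with $v_0=\E[X_1]$ covers a case the paper leaves implicit.
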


\begin{proof}
Using \eqref{vt-formula} and \eqref{mt-formula}, we deduce
\ba{
\E[|v_t|^{2}]
&=\frac{\E[|\E[X_1\mid X_t]-X_t|^{2}]}{(1-t)^2}
=\frac{\E[|\E[X_1\mid X_t]|^{2}]-2\E[X_1\cdot X_t]+\E[|X_t|^{2}]}{(1-t)^2}.
}
By \cref{prop:basic}, we have $\E[X_1\cdot X_t]=t\E[|X_1|^{2}]$ and $\E[|X_t|^{2}]=t^2\E[|X_1|^{2}]+dt(1-t)$. Consequently, 
\ba{
\E[|v_t|^{2}]
&=\frac{-\E[\trace(\covariance[X_1\mid X_t])]+(1-t)^2\E[|X_1|^{2}]+dt(1-t)}{(1-t)^2}
=\frac{d-\E[\trace(\Gamma_t)]}{1-t}
+\E[|X_1|^{2}]-d.
}
\end{proof}

\begin{proposition}[\citet{ELS20}, Lemma 13]
Assume $\E[|X_1|^2]<\infty$. Then, $\E[\int_0^1\trace(\Gamma_t^2)dt]<\infty$ and
\[
X_1=\E[X_1]+\int_0^1\Gamma_tdW^X_t.
\]
\end{proposition}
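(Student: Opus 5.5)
Define the martingale $M_t:=\E[X_1\mid\mcl F^X_t]$ for $t\in[0,1]$. By the Markov property (\cref{prop:markov}) and \cref{lem:sl}, $M_t=m(t,X_t)$ for $t\in(0,1)$, with $M_1=X_1$ and $M_0=\E[X_1]$; the filtration is right-continuous by \cref{prop:f-rc}. Since $\E[|X_1|^2]<\infty$, $M$ is a square-integrable $\mathbf F^X$-martingale. The plan is to compute its dynamics on $[0,1)$ via Itô's formula applied to $m(t,X_t)$, identify its martingale part as $\int\Gamma_t\,dW^X_t$, and then pass to $t\uparrow1$.

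First step: compute the derivatives of $m$. Writing $\mu_{t,x}(dy)\propto\exp(\frac{x\cdot y}{1-t}-\frac{t|y|^2}{2(1-t)})\mu(dy)$ (up to $y$-independent factors), $m$ is smooth on $(0,1)\times\mathbb R^d$, since all moments of $\mu_{t,x}$ are finite and locally uniformly controlled. Differentiating under the integral gives $\nabla_x m(t,x)=\covariance_{\mu_{t,x}}/(1-t)$, so $\nabla_xm(t,X_t)=\Gamma_t$ by \cref{lem:sl}. Then, using $dX_t=v_tdt+dW^X_t$ from \cref{thm:weak}, Itô's formula on $[s,t]\subset(0,1)$ gives $dM_t=(\text{drift})\,dt+\Gamma_t\,dW^X_t$. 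Since $M$ is a martingale and a continuous finite-variation local martingale is constant, the drift term must vanish; this avoids computing $\partial_t m+\frac12\Delta m+\nabla m\, v$ explicitly. Hence $M_t-M_s=\int_s^t\Gamma_rdW^X_r$ for $0<s<t<1$.

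Second step: integrability and the endpoints. Itô's isometry gives
\[
\E\int_s^t\trace(\Gamma_r^2)dr=\E|M_t-M_s|^2\le\E|M_t|^2\le\E|X_1|^2 .
\]
Letting $s\downarrow0$ and $t\uparrow1$ by monotone convergence yields $\E\int_0^1\trace(\Gamma_r^2)dr<\infty$, so $\int_0^1\Gamma_rdW^X_r$ is a well-defined $L^2$ stochastic integral. For the limits, $M_s\to\E[X_1]$ as $s\downarrow0$, by the remark after \cref{lem:sl} (or by $L^2$ martingale convergence together with right-continuity and triviality of $\mcl F^X_{0}$). Also $M_t\to\E[X_1\mid\mcl F^X_{1-}]=X_1$ in $L^2$ as $t\uparrow1$, because $X_1=\lim_{t\uparrow1}X_t$ is $\mcl F^X_{1-}$-measurable. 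Passing to the $L^2$ limit in the identity from the first step then gives the claim.

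I expect the main obstacle to be justifying smoothness of $m$ and the application of Itô's formula without regularity assumptions on $\mu$. I would handle this through local boundedness of the exponential weights, using that $t\le1-\delta$ and $|x|\le R$ give uniform domination by $e^{C|y|}e^{-t|y|^2/(2(1-t))}$ (integrable against $\mu$ for $t\ge s>0$). A second point needing care is a localization by stopping times, to keep the Itô integral a true martingale before concluding that the drift vanishes.
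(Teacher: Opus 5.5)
Your proposal is correct and follows essentially the same route as the paper: Itô's formula for $m(t,X_t)$ on compact subintervals of $(0,1)$, killing the drift because a continuous finite-variation local martingale is constant, identifying $\nabla_x m(t,X_t)=\Gamma_t$, using Itô's isometry for the integrability bound, and passing to the endpoints via martingale convergence and right-continuity of $\mathbf F^X$. The only differences are cosmetic: you use monotone convergence and $L^2$ limits where the paper uses Fatou's lemma and a.s.\ limits, and the extra localization you mention is not needed.
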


\begin{proof}
Fix $\eps\in(0,1/2)$ arbitrarily. It is straightforward to verify that the map $[\eps,1-\eps]\times\mathbb R^d\ni(t,x)\mapsto m(t,x)\in\mathbb R^d$ is of class $C^{1,2}$. 
Hence, Ito's formula gives
\ben{\label{eq:ito}
m(t,X_t)=m(\eps,X_\eps)+\int_\eps^{t}\frac{\partial m}{\partial s}(s,X_s)ds
+\sum_{j=1}^d\int_\eps^{t}\frac{\partial m}{\partial x_j}(s,X_s)dX^j_s
+\frac{1}{2}\sum_{j=1}^d\int_\eps^{t}\frac{\partial^2m}{\partial x_j^2}(s,X_s)ds 
}
for all $t\in[\eps,1-\eps]$, where $X^j_s$ is the $j$-th component of $X_s$. 
Meanwhile, since $m(t,X_t)=\E[X_1\mid\mcl F^X_t]$ for any $t\in(0,1)$ by \eqref{mt-formula} and the Markov property of $X$, $(m(t,X_t))_{t\in[\eps,1-\eps]}$ is a continuous martingale with respect to $(\mcl F^X_t)_{t\in[\eps,1-\eps]}$. Hence, \eqref{eq:ito} and \cite[Proposition IV.1.2]{ReYo99} give
\ben{\label{mar-rep}
\E[X_1\mid\mcl F^X_t]=\E[X_1\mid\mcl F^X_\eps]
+\sum_{j=1}^d\int_\eps^{t}\frac{\partial m}{\partial x_j}(s,X_s)dW^{X,j}_s,
}
where $W^{X,j}_s$ is the $j$-th component of $W^X_s$.  
A straightforward computation shows
\[
\frac{\partial m_k}{\partial x_j}(s,x)=\frac{1}{1-s}\int_{\mathbb R^d}(y_jy_k-m_j(s,x)m_k(s,x))\mu_{s,x}(dy),
\]
where $m_k(s,x)$ denote the $k$-th component of $m(s,x)$. 
Combining this with \eqref{mar-rep} and \cref{lem:sl} gives
\ben{\label{ito-rep}
\E[X_1\mid\mcl F^X_{1-\eps}]=\E[X_1\mid\mcl F^X_\eps]
+\int_\eps^{1-\eps}\Gamma_tdW^X_t.
}
Using this expression, we obtain 
\ba{
\E\sbra{\int_\eps^{1-\eps}\trace(\Gamma_t^2)dt}
=\E[|\E[X_1\mid\mcl F^X_{1-\eps}]-\E[X_1\mid\mcl F^X_\eps]|^2]\leq\E[|X_1|^2],
}
where the first equality is by \cite[Proposition IV.1.23]{ReYo99} and the last inequality is by Jensen's inequality. 
Therefore, Fatou's lemma gives $\E[\int_0^1\trace(\Gamma_t^2)dt]\leq\E[|X_1|^2]<\infty$. 
This particularly implies that the stochastic integral $\int_0^1\Gamma_tdW^X_t$ is well-defined. 
Then, since $\E[X_1\mid\mcl F^X_{1-\eps}]\to X_1$ and $\E[X_1\mid\mcl F^X_\eps]\to\E[X_1]$ a.s.~as $\eps\downarrow0$ by \cite[Corollary II.2.4]{ReYo99} and \cref{prop:f-rc}, we obtain the desired result by letting $\eps\downarrow0$ in \eqref{ito-rep}. 
\end{proof}

\begin{proposition}[\cite{ElLe18}, Fact 2.2]\label{prop:vt-mar}
Assume $\E[|X_1|]<\infty$. Then $(v_t)_{t\in[0,1)}$ is an $\mathbf F^X$-martingale in $\mathbb R^d$. 
\end{proposition}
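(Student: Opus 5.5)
The plan is to use the representation \eqref{vt-formula} together with \cref{lem:sl} and the Markov property, and to check the martingale property by a direct computation of $\E[v_t\mid\mcl F^X_s]$ for $0\le s<t<1$.

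First we verify integrability. By \eqref{vt-formula} and \eqref{mt-formula}, for $t\in(0,1)$,
\[
v_t=\frac{\E[X_1\mid X_t]-X_t}{1-t}.
\]
Since $\E|X_1|<\infty$ and $X_t=tX_1+\beta_t$ with $\beta_t$ Gaussian, we get $\E|v_t|<\infty$. Adaptedness is clear because $v_t$ is a Borel function of $X_t$. At $t=0$ we have $v_0=\E[X_1]$, which is deterministic. By the Markov property (\cref{prop:markov}), $\E[X_1\mid X_t]=\E[X_1\mid\mcl F^X_t]=:M_t$, so $(M_t)$ is an $\mathbf F^X$-martingale on $[0,1]$. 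Here $M_0=\E[X_1]$, since $\mcl F^X_0$ is trivial up to null sets, as $X_0=0$.

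Next we rewrite the drift as
\[
v_t=\frac{M_t-X_t}{1-t}.
\]
The remaining task is to show that
\[
\E\Big[\frac{X_t}{1-t}\,\Big|\,\mcl F^X_s\Big]=\frac{X_s}{1-s}+\Big(\frac{1}{1-t}-\frac{1}{1-s}\Big)M_s,
\]
because then $\E[v_t\mid\mcl F^X_s]=\frac{M_s}{1-t}-\frac{X_s}{1-s}-\big(\frac1{1-t}-\frac1{1-s}\big)M_s=v_s$. For this we use $X_t=tX_1+\beta_t$ and condition first on $\mcl F^X_s\vee\sigma(X_1)$. Given $X_1$, the process $X$ is a Brownian bridge from $0$ to $X_1$. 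By the Markov property of the bridge (equivalently, of the time-reversed process $\tilde X$ used in the proof of \cref{prop:markov}), the conditional mean of $X_t$ given $(X_r)_{r\le s}$ and $X_1$ is the linear interpolation
\[
X_s+\frac{t-s}{1-s}(X_1-X_s).
\]
Taking $\E[\cdot\mid\mcl F^X_s]$ then gives
\[
\E[X_t\mid\mcl F^X_s]=X_s+\frac{t-s}{1-s}(M_s-X_s),
\]
and dividing by $1-t$ together with a short algebraic check yields the identity above.

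The case $s=0$ is the same computation, with $X_0=0$ and $M_0=\E[X_1]$, since $v_0=M_0/(1-0)$. The main obstacle is justifying rigorously that $\E[X_t\mid\mcl F^X_s\vee\sigma(X_1)]$ equals the bridge interpolation, i.e. conditional independence of $X_t$ and $\mcl F^X_s$ given $(X_s,X_1)$. I would deduce this from the mixture definition $P^X=\int P_0^y\,\mu(dy)$ and the Markov property of each Brownian bridge $P_0^y$ (whose conditional mean of $X_t$ given $X_s=x$ is $x+\frac{t-s}{1-s}(y-x)$), integrating over $y$. Integrability of $X_1$ ensures that all conditional expectations are well defined.
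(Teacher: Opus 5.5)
Your proposal is correct and is essentially the paper's argument. The paper writes $v_t=\E[X_1\mid\mcl F^X_t]-\E[\beta_t\mid\mcl F^X_t]/(1-t)$ and shows that the second term is a martingale by conditioning through $\mcl F^\beta_s\vee\sigma(X_1)\supset\mcl F^X_s$, using that $\beta_t/(1-t)$ is an $\mathbf F^\beta$-martingale independent of $X_1$. Your bridge-interpolation identity $\E[X_t\mid\mcl F^X_s\vee\sigma(X_1)]=X_s+\frac{t-s}{1-s}(X_1-X_s)$ is exactly the statement $\E[\beta_t\mid\mcl F^\beta_s\vee\sigma(X_1)]=\frac{1-t}{1-s}\beta_s$ rewritten in terms of $X$, so the two proofs differ only in bookkeeping.
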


\begin{proof}
See \cref{proof:vt-mar}. 
\end{proof}

We conclude this section by a simple criterion for the finiteness of moments of the F\"ollmer drift. 
\begin{proposition}\label{prop:vt-mom}
For any $r>0$, the following three conditions are equivalent: 
(i) $\E[|X_1|^r]<\infty$; 
(ii) $\E[|v_t|^r]<\infty$ for all $t\in(0,1)$; 
(iii) $\E[|v_t|^r]<\infty$ for some $t\in(0,1)$.
\end{proposition}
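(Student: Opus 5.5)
We prove (i)$\Rightarrow$(ii)$\Rightarrow$(iii)$\Rightarrow$(i). The implication (ii)$\Rightarrow$(iii) is trivial, so the work lies in the other two.

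For (i)$\Rightarrow$(ii), fix $t\in(0,1)$ and use \eqref{vt-formula} together with \eqref{mt-formula}; the latter is available because $\E[|X_1|^r]<\infty$ and, if $r<1$, finiteness of $\E[|X_1|]$ is not needed for the bound below. We have
\[
|v_t|\le\frac{|m(t,X_t)|+|X_t|}{1-t}.
\]
Since $X_t=tX_1+\beta_t$ with $\beta_t$ Gaussian (\cref{prop:basic}), we get $\E[|X_t|^r]<\infty$. For $m(t,X_t)$, \cref{lem:sl} shows that $\mu_{t,X_t}$ is the conditional law of $X_1$ given $X_t$. When $r\ge1$, Jensen's inequality for conditional expectations gives
\[
|m(t,X_t)|^r\le\int|y|^r\mu_{t,X_t}(dy)=\E[|X_1|^r\mid X_t],
\]
whose expectation is $\E[|X_1|^r]$. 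When $r<1$, we instead use $|m|\le\int|y|\,d\mu_{t,X_t}$ and $\bigl(\int|y|\,d\mu_{t,X_t}\bigr)^r$. Concavity does not help directly here, so I would bound $m$ pointwise. Writing
\[
\mu_{t,x}(dy)\propto e^{-|ty-x|^2/(2t(1-t))}\mu(dy),
\]
one can estimate $\int|y|\,d\mu_{t,x}$ by a constant times $(1+|x|)$ plus a term controlled by $\E[|X_1|^r\mid X_t]^{1/r}$. Alternatively, and more simply, I would pass through (iii)-type arguments below.

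For (iii)$\Rightarrow$(i), fix $t$ with $\E[|v_t|^r]<\infty$. Rearranging \eqref{vt-formula} gives $m(t,X_t)=X_t+(1-t)v_t$, so $m(t,X_t)$ has a finite $r$-th moment as soon as $X_t$ does. We cannot assume that in advance, since $X_t=tX_1+\beta_t$ involves $X_1$, so the relation must be used differently. The key idea is to express $\E[|X_1|^r]$ through conditional moments:
\[
\E[|X_1|^r]=\E\Bigl[\int|y|^r\mu_{t,X_t}(dy)\Bigr].
\]
The task is then to bound $\int|y|^r\mu_{t,x}(dy)$ by $C(1+|x|^r+|m(t,x)-x|^r)$, which is the main obstacle. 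Bounds of this form are plausible because $\mu_{t,x}$ has the Gaussian tilt $e^{|y|^2/2-|y-x|^2/(2(1-t))}$.

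A cleaner route for (iii)$\Rightarrow$(i) uses the density directly, via \cref{fi-formula}: $v_t=\nabla\log p_t(X_t)+X_t/t$. Even so, I expect the most robust approach to be Girsanov/time-reversal-free and based on the martingale property. For $r\ge1$, \cref{prop:vt-mar} (once integrability is known) and the representation $X_1=X_t+\int_t^1v_sds+W^X_1-W^X_t$ from \cref{thm:weak} suggest bounding $\E[|X_1|^r]$ by $\E[|X_t|^r]+\sup_s\E[|v_s|^r]+C$. This route is circular, however, since it presupposes the integrability it is meant to establish.

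I will therefore settle on a conditional-law comparison. Conditionally on $X_t=x$, the law of $X_1$ is $\mu_{t,x}$, and $|X_1-X_t|\ge(1-t)|X_1|-|\beta_t|$, so it suffices to control $|X_1-X_t|$. The martingale/Markov structure gives that $X_1-X_t$ conditional on $\mathcal F^X_t$ equals $\int_t^1v_sds+$ Gaussian, with $\E[v_s\mid\mathcal F_t]=v_t$. Making this rigorous without presupposing integrability is the hard step. I would first try truncation: replace $\mu$ by its restriction to a ball, for which all moments exist, prove a bound uniform in the truncation, and then pass to the limit by monotone convergence.
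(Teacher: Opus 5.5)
Your (i)$\Rightarrow$(ii) argument is fine for $r\ge1$, but there is a genuine gap. (iii)$\Rightarrow$(i) is never proved, and neither is (i)$\Rightarrow$(ii) for $r<1$. You yourself call the needed estimate ``the main obstacle'' and ``the hard step'', and then only list candidate strategies. None of them closes:
\begin{itemize}
\item The pointwise bound $\int|y|^r\mu_{t,x}(dy)\le C(1+|x|^r+|m(t,x)-x|^r)$ cannot hold with a constant independent of $\mu$. Take $\mu=\frac12(\delta_a+\delta_{-a})$ and $x=0$: then $\mu_{t,0}=\mu$ and $m(t,0)=0$, so the left side is $|a|^r$, which is unbounded in $a$.
\item Even if that bound held, integrating it leaves $\E[|X_t|^r]$ on the right. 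This is finite if and only if $\E[|X_1|^r]$ is, which is exactly the circularity you point out.
\item The martingale route through \cref{thm:weak} and \cref{prop:vt-mar} presupposes the integrability it is meant to prove.
\item The truncation plan comes with no estimate that is uniform in the truncation.
\end{itemize}

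The missing idea is that $v_t-X_1$ depends only on the Gaussian part $\beta_t$. By \cref{prop:basic}, $X_t=tX_1+\beta_t$ with $\beta_t\sim N(0,t(1-t)I_d)$ independent of $X_1$. Tweedie's formula then gives $\nabla\log p_t(X_t)=-\E[\beta_t\mid X_t]/(t(1-t))$, hence
\[
v_t=\frac{X_t}{t}+\nabla\log p_t(X_t)=X_1+\frac{\beta_t}{t}-\frac{\E[\beta_t\mid X_t]}{t(1-t)}.
\]
In your notation this is the identity $m(t,X_t)-X_1=(\beta_t-\E[\beta_t\mid X_t])/t$, which follows from $tX_1=X_t-\beta_t$. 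It is what your rearrangement $m(t,X_t)=X_t+(1-t)v_t$ was missing.

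The last two terms in the display have finite moments of every order. Indeed, $\beta_t$ is Gaussian, and $\E[|\E[\beta_t\mid X_t]|^q]\le\E[|\beta_t|^q]$ for $q\ge1$ by conditional Jensen. Apply $|a+b|^r\le 2^r(|a|^r+|b|^r)$ with $a=X_1$ and $b=v_t-X_1$. For each fixed $t\in(0,1)$ and every $r>0$, this gives $\E[|v_t|^r]<\infty$ if and only if $\E[|X_1|^r]<\infty$. That yields all three equivalences at once, with no separate treatment of $r<1$.
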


\begin{proof}
See \cref{proof:vt-mom}. 
\end{proof}

\section{Connection to the denoising diffusion probabilistic model}\label{sec:main}

We begin by introducing the SDE formulation of the DDPM formally. 
Let $Y=(Y_t)_{t\geq0}$ be a solution to the following SDE:
\ben{\label{eq:ou}
dY_t=-Y_tdt+\sqrt2dW_t,\qquad Y_0\sim\mu.
}
For $T>0$, let $\ol Y=(\ol Y_t)_{t\in[0,T]}$ be the time reversal of $(Y_t)_{t\in[0,T]}$, i.e.~$\ol Y_t=Y_{T-t}$ for $t\in[0,T]$. Then, by \cite[Theorem 2.1]{HaPa86}, \cite[Proposition VII.2.4]{ReYo99} and L\'evy's characterization, there exists a standard Wiener process $\ol W=(\ol W_t)_{t\in[0,T)}$ in $\mathbb R^d$ such that 
\ben{\label{eq:score-diffusion}
\ol Y_t=\ol Y_0+\int_0^t\bra{\ol Y_s+2\nabla\log q_{T-s}(\ol Y_s)}ds+\sqrt{2}\ol{W}_t
}
for all $t\in[0,T)$, where $q_t$ is the density of $Y_t$ for every $t>0$. 

As already mentioned in the introduction, the F\"ollmer process can be seen as an ``augmented'' time-compressed version of the reverse SDE \eqref{eq:score-diffusion}. 
This fact is essentially known in the literature, as the F\"ollmer process is an (augmented) time-compressed version of Eldan's stochastic localization process (see \cite[Section 4.2]{KlPu23}) and the process $\ol Y$ in \eqref{eq:score-diffusion} is a change of variables of the latter (see \cite[Section 1.3]{Mo23} or \cite[Theorem 3]{STZ25}). The following proposition formulates this fact more explicitly: 
\begin{proposition}\label{thm:rou}
For $T>0$, define a process $\ol Y=(\ol Y_t)_{t\in[0,T]}$ as $\ol Y_t=e^{T-t}X_{e^{2(t-T)}}$, $t\in[0,T]$.
Then $\ol Y_0\sim\slepian_{e^{-2T}}\mu$ and the process
\[
\ol W_t=\frac{1}{\sqrt 2}\bra{\ol Y_t-\ol Y_0-\int_0^t\bra{\ol Y_s+2\nabla\log q_{T-s}(\ol Y_s)}ds}\quad (t\in[0,T])
\]
is a standard $\mathbf F^{\ol Y}$-Wiener process in $\mathbb R^d$. 
\end{proposition}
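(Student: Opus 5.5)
Set $\tau(t):=e^{2(t-T)}$, so that $\tau$ maps $[0,T]$ increasingly onto $[e^{-2T},1]$ and $\ol Y_t=e^{T-t}X_{\tau(t)}=X_{\tau(t)}/\sqrt{\tau(t)}$. The plan is to transport the semimartingale decomposition of \cref{thm:weak} through this deterministic time change and scaling.

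First, the initial law: $\ol Y_0=X_{e^{-2T}}/\sqrt{e^{-2T}}\sim\slepian_{e^{-2T}}\mu$, because $X_t/\sqrt t\sim\slepian_t\mu$. Next, the filtrations: $\ol Y_t$ is a deterministic invertible function of $X_{\tau(t)}$, hence $\mcl F^{\ol Y}_t=\mcl F^X_{\tau(t)}$ restricted to times $\geq e^{-2T}$. More precisely, $\mcl F^{\ol Y}_t=\sigma(X_s:e^{-2T}\le s\le\tau(t))\vee\mcl N$. This is smaller than $\mcl F^X_{\tau(t)}$, but by the Markov property (\cref{prop:markov}) an $\mathbf F^X$-Wiener increment after time $\tau(t)$ is still independent of this smaller $\sigma$-field. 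It is therefore enough to show that $\ol W$ is a continuous local martingale with quadratic variation $tI_d$ with respect to $(\mcl F^X_{\tau(t)})$, and to notice that $\ol W$ is $\mathbf F^{\ol Y}$-adapted; Lévy's characterization then applies in the smaller filtration as well.

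The core step is the computation. By Itô's formula (product rule with the deterministic factor $e^{T-t}$) applied to $X_{\tau(t)}$, using $dX_s=v_s\,ds+dW^X_s$ from \cref{thm:weak}, we get
\[
d\ol Y_t=-\ol Y_t\,dt+e^{T-t}v_{\tau(t)}\tau'(t)\,dt+e^{T-t}\,dW^X_{\tau(t)}.
\]
Here $\tau'(t)=2\tau(t)$, so $e^{T-t}\tau'(t)=2\sqrt{\tau(t)}$. The martingale $M_t:=\int_0^t e^{T-s}\,dW^X_{\tau(s)}$ is a continuous $(\mcl F^X_{\tau(t)})$-martingale with $d\langle M\rangle_t=e^{2(T-t)}\tau'(t)\,dt\,I_d=2\,dt\,I_d$, so $M/\sqrt2$ is a standard Wiener process. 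To make this rigorous I would write the time-changed integral via a Riemann-sum/Itô isometry argument, or equivalently use $B_u:=\int_{e^{-2T}}^{u}s^{-1/2}\,dW^X_s$ and substitute.

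It then remains to identify the drift $2\sqrt{\tau}\,v_\tau-\ol Y_t$ with $\ol Y_t+2\nabla\log q_{T-t}(\ol Y_t)$. By \cref{fi-formula}, $\sqrt\tau\, v_\tau=\nabla\log f_\tau(\ol Y_t)$. On the other hand, $Y_{T-t}\sim\slepian_{e^{-2(T-t)}}\mu=\slepian_\tau\mu$ (OU solution $Y_s=e^{-s}Y_0+\sqrt{1-e^{-2s}}Z$), so $q_{T-t}=f_\tau\phi_{I_d}$ and $\nabla\log q_{T-t}(y)=\nabla\log f_\tau(y)-y$. Hence $2\nabla\log f_\tau(\ol Y_t)-\ol Y_t=2\nabla\log q_{T-t}(\ol Y_t)+\ol Y_t$, as needed. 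Integrability of the drift on $[0,T]$ follows from \cref{thm:weak}, since $\tau$ is a $C^1$ change of variables.

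I expect the main obstacle to be bookkeeping rather than analysis. One point is justifying the time-changed stochastic integral and Itô formula with the deterministic $C^1$ clock. The other is the filtration issue at the left endpoint, namely that $\mathbf F^{\ol Y}$ does not contain $\mcl F^X_{e^{-2T}}$; this is handled by the adaptedness-plus-Lévy argument above. The endpoint $t=T$, where $\tau=1$, is covered because \cref{thm:weak} holds on all of $[0,1]$.
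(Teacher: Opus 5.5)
Your proposal is correct and follows essentially the same route as the paper. Time-change $W^X$ by $\tau(t)=e^{2(t-T)}$, apply the product rule with the deterministic factor $e^{T-t}$ so that $\frac{1}{\sqrt2}\int_0^te^{T-s}\,dW^X_{\tau(s)}$ is the Wiener part, and match the drift through the scaling between $q_{T-t}$ and $p_{\tau(t)}$. Your detour via $f_\tau$ and \cref{fi-formula} is equivalent to the paper's identity $q_t(y)\propto p_{e^{-2t}}(e^{-t}y)$.

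Your handling of the filtration is more careful than the paper's. The paper asserts $\mcl F^X_{e^{2(t-T)}}=\mcl F^{\ol Y}_t$, but $\mcl F^{\ol Y}_t$ does not contain $X_u$ for $u<e^{-2T}$. The inclusion $\mcl F^{\ol Y}_t\subset\mcl F^X_{\tau(t)}$ together with adaptedness of $\ol W$ is exactly what is needed, and this step does not require the Markov property.
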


\begin{proof}
The first claim is immediate from the definition of $\ol Y$. 
To prove the second claim, define a process $V=(V_t)_{t\geq0}$ as $V_t=W^X_{e^{2(t-T)}}$ for $t\in[0,T]$. By Proposition V.1.5 in \cite{ReYo99}, $V$ is an $(\mcl F^X_{e^{2(t-T)}})$-martingale in $\mathbb R^d$ and $[V,V]_t=e^{2(t-T)}I_d$ for every $t\in[0,T]$. 
Hence the process
\[
\wt W_t=\frac{1}{\sqrt2}\int_0^te^{T-s}dV_s\quad(t\in[0,T])
\]
is an $(\mcl F^X_{e^{2(t-T)}})$-Wiener process in $\mathbb R^d$ by L\'evy's characterization. 
Since $\mcl F^X_{e^{2(t-T)}}=\mcl F^{\ol Y}_t$ for all $t\in[0,T]$, we complete the proof once we show that $\ol W_t=\wt W_t$ for every $t\in[0,T]$. 

Observe that $q_t(y)=p_{e^{-2t}}(e^{-t}y)$ for $t>0$ and $y\in\mathbb R^d$. Hence
\ba{
\int_{e^{-2T}}^{e^{2(t-T)}}\bra{\frac{X_s}{s}+\nabla\log p_s(X_s)}ds
&=2\int_{0}^{t}\bra{\frac{X_{e^{2(u-T)}}}{e^{2(u-T)}}+\nabla\log p_{e^{2(u-T)}}(X_{e^{2(u-T)}})}e^{2(u-T)}du\\
&=2\int_{0}^{t}\bra{\ol Y_{u}+\nabla\log q_{T-u}(\ol Y_{u})}e^{u-T}du.
}
Therefore, integration by parts gives
\ba{
\ol Y_t-\ol Y_0
&=2\int_{0}^{t}\bra{\ol Y_{u}+\nabla\log q_{T-u}(\ol Y_{u})}du+\sqrt2\wt W_t-\int_0^t\ol Y_udu\\
&=\int_{0}^{t}\bra{\ol Y_{u}+2\nabla\log q_{T-u}(\ol Y_{u})}du+\sqrt2\wt W_t.
}
Hence $\ol W_t=\wt W_t$. 
\end{proof}

\cref{thm:rou} suggests that discretization of the F\"ollmer process could be used to analyze sampling errors of the DDPM. In this section, we implement this analysis while surveying relevant error bounds for the DDPM sampler. 

\subsection{Discretized F\"ollmer process and the DDPM sampler}

Given i.i.d.~data generated from $\mu$, we aim to construct a generator that simulates a random vector whose distribution is close to $\mu$. 
Suppose that we have a good estimator $\score(x,t)$ for $\nabla\log p_t(x)$. Then, by simulating a solution $\wh X=(\wh X_t)_{t\in[0,1]}$ to the SDE
\ben{\label{eq:score-follmer}
d\wh X_t=\bra{\frac{\wh X_t}{t}+\score(\wh X_t,t)}dt+dW_t\quad(0\leq t\leq1),\qquad \wh X_0=0,
}
we expect that the law of $\wh X_1$ would be close to $\mu$. 
In practice, we need to discretize the SDE \eqref{eq:score-follmer} for the simulation. 
Below we discuss error bounds for such discretization. 

Let $0<t_0<t_1<\cdots<t_N\leq1$ be time points for discretization and set $\delta:=1-t_N$. 
We will perform early stopping so that $\delta>0$ when $\mu$ is not smooth enough because the score $\nabla\log p_t(x)$ may blow up as $t\uparrow1$ in such a situation. 
Nevertheless, $\delta$ should be sufficiently small so that the error induced by early stopping is small. 
Also, analogously to \cite{BDDD24}, we introduce the following conditions:
\begin{enumerate}[label={\normalfont[A\arabic*]}]

\item\label{ass:sampling}
There exists a constant $\kappa>0$ such that
\ben{\label{eq:sampling}
h_i:=t_{i+1}-t_i\leq\kappa(1-t_{i+1})\quad(i=0,1,\dots,N-1).
}

\item\label{ass:score}
The score estimator $\score:\mathbb R^d\times(0,1)\to\mathbb R^d$ is measurable and satisfies
\ben{\label{score-bound}
\sum_{i=0}^{N-1}(t_{i+1}-t_i)\E\sbra{\abs{\nabla\log p_{t_i}(X_{t_i})-\score(X_{t_i},t_i)}^2}\leq\eps_\text{score}^2
}
for some $\eps_\text{score}\geq0$. 

\end{enumerate}
We remark that \eqref{eq:sampling} and \eqref{score-bound} are implied by the corresponding conditions in terms of the process $\ol Y_t=e^{T-t}X_{e^{2(t-T)}}$, $t\in[0,T]$, as shown in the following lemma. 
Observe that the sampling times corresponding to $\ol Y$ are given by $\tau_i=T+\frac{1}{2}\log t_i$ for $i=0,1,\dots,N$. 

\begin{lemma}\label{lem:sampling}
Assume
\ben{\label{eq:tau-samp}
\tau_{i+1}-\tau_i\leq\kappa\min\{1,T-\tau_{i+1}\}\quad (i=0,1,\dots,N-1)
}
for some constant $\kappa>0$. Then we have \eqref{eq:sampling}. 
Further, suppose that a measurable function $\wt{\mf s}:\mathbb R^d\times(0,T)\to\mathbb R^d$ satisfies
\ben{\label{eq:tau-score}
\sum_{i=0}^{N-1}(\tau_{i+1}-\tau_i)\E\sbra{\abs{\nabla\log q_{T-\tau_i}(\ol Y_{\tau_i})-\wt{\mf s}(\ol Y_{\tau_i},T-\tau_i)}^2}\leq\eps^2
}
for some $\eps\geq0$. 
Then we have \eqref{score-bound} with $\score(x,t)=\frac{1}{\sqrt t}\wt{\mf s}\bra{\frac{x}{\sqrt t},-\frac{1}{2}\log t}$ and $\eps_\mathrm{score}=\sqrt 2e^\kappa\eps$.
\end{lemma}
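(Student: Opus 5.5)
The proof splits into two independent parts: translating the step-size condition \eqref{eq:tau-samp} into \eqref{eq:sampling}, and then transferring the score error \eqref{eq:tau-score} to the time scale of $X$ via $t_i=e^{2(\tau_i-T)}$.

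\textbf{Step sizes.} Put $u_i:=\tau_{i+1}-\tau_i\le\kappa\min\{1,T-\tau_{i+1}\}$. Then
\[
h_i=t_{i+1}-t_i=t_{i+1}\bigl(1-e^{-2u_i}\bigr)\le 2u_i t_{i+1}.
\]
Also
\[
1-t_{i+1}=1-e^{-2(T-\tau_{i+1})}\ge c\,t_{i+1}\min\{1,T-\tau_{i+1}\},
\]
using $1-e^{-2s}\ge(1-e^{-2})\min\{1,s\}$ and $t_{i+1}\le1$. Combining the two bounds gives
\[
h_i\le\frac{2}{1-e^{-2}}\,\kappa\,(1-t_{i+1}),
\]
which is \eqref{eq:sampling} with the constant $\kappa$ replaced by a multiple $C\kappa$. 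The statement reuses the letter $\kappa$, so the constants may need a closer look. If the exact constant $\kappa$ is really required, I would try the sharper inequality
\[
\frac{1-e^{-2u}}{1-e^{-2s}}\le\frac{u}{s}
\quad\text{for }u\le s,
\]
which follows from concavity of $1-e^{-2x}$ and gives $h_i\le\kappa(1-t_{i+1})$ directly when $T-\tau_{i+1}\le1$. When $T-\tau_{i+1}>1$ we have $1-t_{i+1}\ge1-e^{-2}$ and $h_i\le t_{i+1}(1-e^{-2\kappa})$, and I would compare these two directly. I will aim to fix the constant by this case split.

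\textbf{Score transfer.} Use $\ol Y_{\tau_i}=X_{t_i}/\sqrt{t_i}$ and $q_s(y)=p_{e^{-2s}}(e^{-s}y)$ from the proof of \cref{thm:rou}. These give
\[
\nabla\log q_{T-\tau_i}(y)=\sqrt{t_i}\,\nabla\log p_{t_i}\bigl(\sqrt{t_i}\,y\bigr),
\qquad T-\tau_i=-\tfrac12\log t_i .
\]
With the stated choice of $\score$, this yields
\[
\nabla\log p_{t_i}(X_{t_i})-\score(X_{t_i},t_i)
=\frac{1}{\sqrt{t_i}}\Bigl(\nabla\log q_{T-\tau_i}(\ol Y_{\tau_i})-\wt{\mf s}(\ol Y_{\tau_i},T-\tau_i)\Bigr).
\]
Hence the $i$-th term of \eqref{score-bound} equals $\frac{t_{i+1}-t_i}{t_i}$ times the $i$-th expectation in \eqref{eq:tau-score}.

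It remains to show $\frac{t_{i+1}-t_i}{t_i}\le 2e^{2\kappa}(\tau_{i+1}-\tau_i)$. Indeed
\[
\frac{t_{i+1}-t_i}{t_i}=e^{2u_i}-1\le 2u_ie^{2u_i}\le 2e^{2\kappa}u_i,
\]
using $u_i\le\kappa$ from \eqref{eq:tau-samp}. This gives \eqref{score-bound} with $\eps_\mathrm{score}^2=2e^{2\kappa}\eps^2$, i.e.\ $\eps_\mathrm{score}=\sqrt2e^\kappa\eps$.

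\textbf{Main obstacle.} The score part is a routine change of variables. The only delicate point is matching the constant in the step-size part exactly; I expect to handle it with the concavity/case-split argument above.
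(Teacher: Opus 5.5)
Your score-transfer part is correct and is essentially the paper's argument. Your identity $\nabla\log q_{T-\tau_i}(y)=\sqrt{t_i}\,\nabla\log p_{t_i}(\sqrt{t_i}\,y)$ shows that the $i$-th summand of \eqref{score-bound} is $(t_{i+1}-t_i)/t_i$ times the $i$-th expectation in \eqref{eq:tau-score}. Then $e^{2u_i}-1\le 2u_ie^{2u_i}\le 2e^{2\kappa}u_i$ finishes it.

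The gap is in the step-size part. The lemma asserts \eqref{eq:sampling} with the \emph{same} $\kappa$, and this matters: the same constant appears in $\eps_\mathrm{score}=\sqrt2e^\kappa\eps$ and in the bound of \cref{coro:em}. Your main chain only gives $h_i\le\frac{2}{1-e^{-2}}\kappa(1-t_{i+1})$. Your repair for the case $T-\tau_{i+1}\le1$ rests on the inequality $\frac{1-e^{-2u}}{1-e^{-2s}}\le\frac{u}{s}$ for $u\le s$, which is false. Concavity of $g(x)=1-e^{-2x}$ with $g(0)=0$ gives $g(u)\ge\frac{u}{s}g(s)$, the reverse inequality, and strictly so for $0<u<s$. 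For example, $u=1/2$, $s=1$ gives $0.73>0.5$. So the exact constant is not established by your proposal. Your second case, $T-\tau_{i+1}>1$, does work, since $\frac{t_{i+1}}{1-t_{i+1}}(1-e^{-2\kappa})\le\frac{2\kappa}{e^2-1}\le\kappa$.

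The loss comes from discarding the factor $t_{i+1}=e^{-2s}$, where $s=T-\tau_{i+1}$, via $t_{i+1}\le1$. If you keep it, everything closes with the right constant:
\[
h_i\le 2u_i\,t_{i+1}\le 2\kappa s\,e^{-2s}=\kappa\,t_{i+1}\log(1/t_{i+1})\le\kappa(1-t_{i+1}),
\]
where the last step is $x\log(1/x)\le 1-x$ on $(0,1]$. In your notation this is the correct version of your inequality: $\frac{t_{i+1}(1-e^{-2u})}{1-e^{-2s}}=\frac{1-e^{-2u}}{e^{2s}-1}\le\frac{2u}{2s}$, valid for all $u,s>0$. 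This is exactly the paper's chain. It uses only the bound $u_i\le\kappa(T-\tau_{i+1})$ from \eqref{eq:tau-samp}, so no case split is needed. The other half of the minimum, $u_i\le\kappa$, is used only in the score part.
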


\begin{proof}
Observe that
\[
\tau_{i+1}-\tau_i=\frac{1}{2}\log(t_{i+1}/t_i)
\quad\text{and}\quad
T-\tau_{i+1}=\frac{1}{2}\log(1/t_{i+1}).
\]
Hence,
\ben{\label{eq:dt-dtau}
t_{i+1}-t_i=t_{i+1}(1-t_i/t_{i+1})\leq 2t_{i+1}(\tau_{i+1}-\tau_i) 
\leq \kappa t_{i+1}\log(1/t_{i+1})\leq\kappa(1-t_{i+1}),
}
where the first and third inequalities follow by $\log x\leq x-1$ for any $x>0$ and the second by \eqref{eq:tau-samp}. 
Thus, we have \eqref{eq:sampling}. 
Also, since
\ba{
&\sum_{i=0}^{N-1}(t_{i+1}-t_i)\E\sbra{\abs{\nabla\log p_{t_i}(X_{t_i})-\score(X_{t_i},t_i)}^2}\\
&=\sum_{i=0}^{N-1}(1-t_i/t_{i+1})\frac{t_{i+1}}{t_i}\E\sbra{\abs{\nabla\log q_{T-\tau_i}(Y_{T-\tau_i})-\wt{\mf s}(Y_{T-\tau_i},T-\tau_i)}^2}\\
&\leq2e^{2\kappa}\sum_{i=0}^{N-1}(\tau_{i+1}-\tau_i)\E\sbra{\abs{\nabla\log q_{T-\tau_i}(Y_{T-\tau_i})-\wt{\mf s}(Y_{T-\tau_i},T-\tau_i)}^2},
}
we have \eqref{score-bound} with $\eps_\text{score}=\sqrt 2e^\kappa\eps$. 
\end{proof}
We also note that $T-\tau_N$ has the same order as $\delta$. In fact, the inequality $1-x\leq\log(1/x)\leq(1-x)/x$ gives  
\[
\frac{\delta}{2}\leq T-\tau_N\leq\frac{\delta}{2(1-\delta)}.
\]
Consequently, we can construct an example of $(t_i)_{i=0}^N$ satisfying \ref{ass:sampling} from the two-stage arithmetic-geometric grid commonly used in the literature. Formally, given constants $T>\delta>0$, let
\[
\tau_i=
\begin{cases}
\frac{2(T-1)i}{N} & \text{if }0\leq i\leq N/2,\\
T-\delta^{2(i/N-1/2)}& \text{if }N/2< i\leq N.
\end{cases}
\] 
Then $t_i=e^{2(\tau_i-T)}$ $(i=0,1,\dots,N)$ satisfy \ref{ass:sampling} with $\kappa\asymp\frac{T+\log(1/\delta)}{N}$ (cf.~the proof of \cite[Corollary 1]{BDDD24}). 

Note that we need the condition $\max_{0\leq i\leq N-1}(\tau_{i+1}-\tau_i)=o(1)$ to ensure that the discretization error for $\ol Y$ is small. 
In view of \eqref{eq:dt-dtau}, this condition implies $\max_i(1-t_i/t_{i+1})=o(1)$, which is strictly stronger than the condition $\max_i(t_{i+1}-t_i)=o(1)$ needed to control the discretization error for $X$. 
Below we give several examples satisfying \ref{ass:sampling} for which the condition $\max_i(1-t_i/t_{i+1})=o(1)$ fails or imposes an additional constraint (see also \cref{rmk:fi}).


\begin{example}[Cosine schedule]\label{ex:cos}
Given a constant $s\geq0$, we set 
\[
t_i:=\sin^2(i\eta_N)\quad\text{for }i=1,\dots,N,
\quad\text{where }\eta_N:=\frac{\pi}{2N(1+s)}.
\]
Also, let $t_0\in(0,t_1)$. 
This corresponds to the cosine schedule proposed in \citet{NiDh21}. 
Note that we use the sine function rather than the cosine function because our indexing convention is the reverse of that in \cite{NiDh21}. 
In this case, $\delta=\sin^2(\frac{\pi s}{2(1+s)})\leq\frac{\pi^2 s^2}{4}$. 
Moreover, for $i=0,1,\dots,N-1$,
\ba{
\frac{h_i}{1-t_{i}}\leq\frac{\sin((2i+1)\eta_N)\sin(\eta_N)}{\cos^2(i\eta_N)}
\leq\frac{2\sin(\eta_N)}{\cos(i\eta_N)}
\leq\frac{2\sin(\eta_N)}{\sin(\frac{\pi s}{2(1+s)})}
\leq\frac{\pi}{Ns}.
}
Therefore, provided that $Ns>\pi$, we have \eqref{eq:sampling} with
\[
\kappa=\frac{\pi}{Ns-\pi}.
\] 
Meanwhile, since $\eta_N\to0$ as $N\to\infty$, we have
\[
\frac{t_1}{t_2}
=\frac{\sin^2(\eta_N)}{\sin^2(2\eta_N)}
=\frac{1}{4\cos^2(\eta_N)}
\to\frac{1}{4}\quad\text{as }N\to\infty.
\]
Therefore, $\max_i(1-t_i/t_{i+1})\not\to0$ as $N\to\infty$.
\end{example}

\begin{example}[Geometric-type schedule]\label{ex:geom}
Let us consider the case where \eqref{eq:sampling} holds with equality. In this case, 
\ben{\label{eq:geom}
t_i=1-(1+\kappa)^{-i}(1-t_0)\qquad(i=0,1,\dots,N).
}
Provided that $\kappa<1/2$, we have $\delta\leq e^{-N\kappa/2}$. 
Therefore, choosing
\[
\kappa=\frac{2c\log N}{N}
\]
for some constant $c>0$, we have $\delta\leq N^{-c}$ for sufficiently large $N$.  
Meanwhile, a straightforward computation gives
\[
1-\frac{t_0}{t_1}=\kappa\frac{1-t_0}{\kappa+t_0}.
\]
Therefore, we need $t_0\gg\kappa$ to have $\max_i(1-t_i/t_{i+1})=o(1)$. 
\end{example}

We note that \ref{ass:sampling} requires $\delta>0$. Therefore, the linear schedule of \cite{HJA20} does not satisfy \ref{ass:sampling}.

%
\begin{rmk}[Score estimation]\label{rmk:score}
The score estimator $\score$ is typically constructed by minimizing the left hand side of \eqref{score-bound} (or \eqref{eq:tau-score}) with respect to $\score$ over a class of neural networks. 
It is known that for all $0<t<1$,
\ban{
&\E\sbra{\abs{\nabla\log p_{t}(X_{t})-\score(X_{t},t)}^2}\notag\\
&=\E\sbra{\abs{\frac{-Z}{\sqrt{t(1-t)}}-\score(\xi_t,t)}^2}
-\E\sbra{\abs{\frac{-Z}{\sqrt{t(1-t)}}-\nabla\log p_{t}(\xi_{t})}^2},\label{eq:score-match}
}
where $\xi_t=t\xi+\sqrt{t(1-t)}Z$ with $\xi\sim\mu$ and $Z\sim \gamma_d$ being independent; see \cref{proof:score-match} for a proof.  
Since the second term of the right hand side on \eqref{eq:score-match} is independent of $\score$, the above minimization problem is equivalent to minimizing 
\[
\sum_{i=0}^{N-1}(t_{i+1}-t_i)\E\sbra{\abs{\frac{-Z}{\sqrt{t_i(1-t_i)}}-\score(\xi_{t_i},t_i)}^2}
\]
with respect to $\score$. 
In this formulation, both the ``features'' $\xi_{t_i}$ and the ``predicted variables'' $-Z/\sqrt{t_i(1-t_i)}$ have different scales across $i$, causing numerical instability. 
Instead, \citet{HJA20} have proposed reparametrizing the score estimator to predict $Z$. In our setting, this leads to minimizing the following objective function with respect to $\hat\epsilon:\mathbb R^d\times(0,1)\to\mathbb R^d$:
\ben{\label{eq:noise-predictor}
\sum_{i=0}^{N-1}\frac{t_{i+1}-t_i}{t_i(1-t_i)}\E\sbra{\abs{Z-\hat\epsilon(\sqrt{t_i}\xi+\sqrt{t_i(1-t_i)}Z,t_i)}^2}.
}
Then, the score estimator is obtained as
\[
\score(x,t)=-\frac{1}{\sqrt{t(1-t)}}\hat\epsilon\bra{\frac{x}{\sqrt t},\,t}.
\] 
In view of the correspondence between the discretized F\"ollmer process and the DDPM sampler explained below, the objective function \eqref{eq:noise-predictor} is the same as Eq.(12) of \cite{HJA20}. 
Further, since $\mu$ is unknown, one usually replaces $\mu$ by the empirical distribution of observation data. 
The estimation error induced by this procedure can be studied separately and we do not pursue this topic here. We refer to \cite{FYWC24,OAS23} for recent developments of this topic. See also the discussion before \cref{coro:score}.
\end{rmk}

To discretize the SDE \eqref{eq:score-follmer}, we first focus on the Euler--Maruyama scheme. 
Namely, we construct a discretized process $(\euler_{t_i})_{i=0}^N$ as
\be{
\begin{cases}
\euler_{t_0}=\sqrt{t_0}Z_0,\\
\euler_{t_{i+1}}=\euler_{t_{i}}+\bra{\frac{\euler_{t_{i}}}{t_{i}}+\score(\euler_{t_{i}},t_{i})}h_i+\sqrt{h_i}Z_i\quad(i=0,1,\dots,N-1),
\end{cases}
}
where $Z_i\overset{i.i.d.}{\sim}N(0,I_d)$. 
As anticipated from \cref{thm:rou}, this algorithm can be interpreted as a special case of the DDPM sampler. 
To formulate this precisely, we introduce the DDPM sampler in a general form, following \cite{LiYa24}. 
Given constants $\alpha_i\in(0,1)$ $(i=0,1,\dots,N)$, suppose that we have an estimator $\scored_i(x)$ for the score function of $\slepian_{\ol\alpha_{i}}\mu$ for each $i=0,1,\dots,N-1$, where $\ol\alpha_i:=\prod_{j=i}^{N}\alpha_j$. 
Then, the DDPM sampler sequentially generates random vectors $\ddpm_i$ $(i=0,1,\dots,N)$ according to the following update rule:
\ben{\label{eq:ddpm}
\ddpm_0\sim N(0,I_d),\qquad
\ddpm_{i+1}=\frac{1}{\sqrt{\alpha_{i}}}\bra{\ddpm_{i}+\eta_i\scored_i(\ddpm_i)+\sigma_iZ_i}\quad(i=0,1,\dots,N-1),
}
where $\eta_i,\sigma_i>0$ are hyper-parameters chosen so that the law of $\ddpm_i$ is close to $\slepian_{\ol\alpha_{i}}\mu$ for each $i$. 
Noting that $X_t/\sqrt t\sim\slepian_t\mu$ for each $t\in(0,1]$, one can easily verify that $\ddpm_i=\euler_{t_i}/\sqrt{t_i}$ $(i=0,1,\dots,N)$ correspond to samples from the DDPM sampler with $\scored_i(x)=\sqrt{t_i}\score(\sqrt{t_i}x,t_i)$, $\alpha_i=t_i/t_{i+1}$, $\eta_i=1-\alpha_i$, $\sigma_i=\sqrt{\alpha_i(1-\alpha_i)}$ for $i=0,1,\dots N-1$ and $\alpha_N=t_N$. 
Interestingly, this hyper-parameter setting is the \textit{exactly} same as the standard choice used in the discrete-time model, originally proposed in \cite{HJA20} (see also Section 2.2 of \cite{SSKKEP21}). 
By contrast, the \textit{exponential integrator scheme}, a popular scheme to discretize the SDE \eqref{eq:score-diffusion}, leads to the following hyper-parameter setting: $\alpha_i=e^{-2(\tau_{i+1}-\tau_i)}=t_i/t_{i+1}$, $\eta_i=2(1-\sqrt{\alpha_i})$, $\sigma_i=\sqrt{1-\alpha_i}$ for $i=0,1,\dots N-1$ and $\alpha_N=e^{-2(T-\tau_N)}=t_N$. 
See e.g.~Eq.(5) in \cite{DeB22} for the explicit construction of the exponential integrator scheme. 
This is only approximately equivalent to the standard choice above. 
More importantly, the latter approximation is justified only when $\max_{i=0,1,\dots,N-1}(1-\alpha_i)=o(1)$ as we need $\max_{i=0,1,\dots,N-1}(\tau_{i+1}-\tau_i)=o(1)$. This rules out some standard choices of $\alpha_i$ such as the cosine schedule of \cite{NiDh21}, as already seen in \cref{ex:cos}. 

%

%

We proceed to giving concrete bounds for discretization errors. 
The first result is a counterpart of Theorem 2 in \citet{BDDD24}: 
\begin{theorem}\label{thm:em}
Assume \ref{ass:sampling}--\ref{ass:score} and $t_0\leq1/2$. Then
\[
H(P^{X_{t_N}}\mid P^{\euler_{t_N}})\leq 
\eps_\mathrm{score}^2
+\kappa d\log\bra{\frac{1-t_0}{\delta}}
+2\kappa\E[|X_1|^2]
+\frac{t_0}{2}\bra{d+\E[|X_1|^2]}.
\]
\end{theorem}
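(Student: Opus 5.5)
The plan is to compare the two laws at the level of paths on $[t_0,t_N]$, using the chain rule for relative entropy together with Girsanov's theorem. The data-processing inequality gives $H(P^{X_{t_N}}\mid P^{\euler_{t_N}})\le H(P^{(X_t)_{t\in[t_0,t_N]}}\mid P^{\wt X})$. Here $\wt X$ is the continuous interpolation of the Euler--Maruyama scheme: it starts at $\wt X_{t_0}=\sqrt{t_0}Z_0$, and on $[t_i,t_{i+1}]$ it has constant drift $\wt X_{t_i}/t_i+\score(\wt X_{t_i},t_i)$ and unit diffusion. By the chain rule, the right-hand side splits into an initialization term and a path term.

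\emph{Initialization term.} This term equals $H(P^{X_{t_0}}\mid N(0,t_0I_d))$. Since $X_{t_0}=t_0X_1+\beta_{t_0}$ with $\beta_{t_0}\sim N(0,t_0(1-t_0)I_d)$ independent of $X_1$ (\cref{prop:basic}), convexity of relative entropy bounds it by the average over $y\sim\mu$ of the Gaussian relative entropy
\[
H\bigl(N(t_0y,t_0(1-t_0)I_d)\mid N(0,t_0I_d)\bigr)=\tfrac12\bigl(-dt_0+t_0|y|^2-d\log(1-t_0)\bigr).
\]
For $t_0\le1/2$ we have $-\log(1-t_0)\le 2t_0$, so this is at most $\frac{t_0}{2}(d+|y|^2)$. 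Averaging over $y$ gives the last term of the bound.

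\emph{Path term via Girsanov.} By \cref{thm:weak}, $X$ solves \eqref{eq:follmer} with drift $v_t$. Girsanov's theorem (or the chain rule applied to conditional path laws given the initial point) bounds the path term by
\[
\frac12\sum_{i=0}^{N-1}\int_{t_i}^{t_{i+1}}\E\Bigl[\Bigl|v_t-\frac{X_{t_i}}{t_i}-\score(X_{t_i},t_i)\Bigr|^2\Bigr]dt.
\]
Since $v_{t_i}=X_{t_i}/t_i+\nabla\log p_{t_i}(X_{t_i})$, the inequality $|a+b|^2\le2|a|^2+2|b|^2$ splits this into $\eps_\mathrm{score}^2$, using \ref{ass:score}, plus $\sum_i\int_{t_i}^{t_{i+1}}\E|v_t-v_{t_i}|^2dt$. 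To justify Girsanov rigorously I will either use a localization argument or rely on the finiteness of the resulting bound. I expect this to be routine.

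\emph{Discretization term (main obstacle).} Here I use \cref{prop:vt-mar}, assuming without loss of generality that $\E|X_1|^2<\infty$. The martingale property gives $\E|v_t-v_{t_i}|^2=g(t)-g(t_i)$, where $g(t):=\E|v_t|^2$, and $g$ is nondecreasing. Hence the discretization term is at most $\sum_ih_i(g(t_{i+1})-g(t_i))$, which by \ref{ass:sampling} is at most $\kappa\sum_i(1-t_{i+1})(g(t_{i+1})-g(t_i))$. Set $u(t):=(1-t)g(t)$. The identity
\[
(1-t_{i+1})(g_{i+1}-g_i)=u_{i+1}-u_i+h_ig_i
\]
telescopes the sum to $u(t_N)-u(t_0)+\sum_ih_ig(t_i)$. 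By monotonicity, $\sum_ih_ig(t_i)\le\int_{t_0}^{t_N}g$. Formula \eqref{eq:v2}, after dropping $\E\trace\Gamma_t\ge0$, gives $g(t)\le d/(1-t)+\E|X_1|^2-d$. This yields the term $d\log((1-t_0)/\delta)$ plus a multiple of $\E|X_1|^2$, with some negative $d$-terms.

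For the boundary terms, \eqref{eq:v2} gives $u(t)=d-\E\trace\Gamma_t+(1-t)(\E|X_1|^2-d)$. I will bound $u(t_N)$ from above by dropping $\E\trace\Gamma_{t_N}$, and $-u(t_0)$ using $\E\trace\Gamma_{t_0}\le\E|X_1|^2/(1-t_0)$ (by Jensen). The bookkeeping should make the leftover $d$-terms nonpositive and the $\E|X_1|^2$ coefficient at most $2$. The delicate point of the whole proof is checking these constants, so that the result is exactly $\kappa d\log((1-t_0)/\delta)+2\kappa\E|X_1|^2$.
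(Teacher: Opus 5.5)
Your proposal is correct, and its core (Girsanov, the martingale property of $(v_t)$ from \cref{prop:vt-mar}, $h_i\le\kappa(1-t_{i+1})$, summation by parts, and \eqref{eq:v2}) is the paper's. Two differences are worth recording.

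\emph{Initialization.} The paper does not split at $t_0$. It gives the reference process zero drift on $[0,t_0)$, so its value at $t_0$ is $W_{t_0}\sim N(0,t_0I_d)$. A single application of \cref{lem:girsanov} from time $0$, with both processes starting at $0$, then gives $\frac12\int_0^{t_0}\E[|v_t|^2]\,dt\le\frac{t_0}{2}\E[|v_{t_0}|^2]\le\frac{t_0}{2}(d+\E[|X_1|^2])$. Your chain rule plus the closed-form Gaussian relative entropy yields the same term. However, it then requires Girsanov with a nondegenerate initial law at $t_0$, equivalently that the conditional law of $(X_t)_{t\in[t_0,t_N]}$ given $X_{t_0}=x$ is a weak solution of \eqref{eq:follmer} started from $x$. \cref{lem:girsanov} as stated does not cover this. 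It is standard, but the paper's device avoids it.

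\emph{Boundary terms.} This difference is in your favour. The paper discards $-\kappa(1-t_0)\E[|v_{t_0}|^2]$ and then asserts $\int_{t_0}^{t_N}\E[|v_t|^2]\,dt\le d\log((1-t_0)/\delta)+\E[|X_1|^2]-d$. But \eqref{eq:v2} only gives $(t_N-t_0)(\E[|X_1|^2]-d)$ in place of $\E[|X_1|^2]-d$, and when $\E[|X_1|^2]<d$ the asserted inequality is false. For $\mu$ the point mass at $0$, the integral equals $d\log((1-t_0)/\delta)-(t_N-t_0)d$. If the $t_0$-boundary term is dropped, one only reaches $\kappa d\log((1-t_0)/\delta)+\kappa(1-t_0)\E[|X_1|^2]+\kappa t_0d$.

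Keeping $-u(t_0)$, as you plan, repairs this. With $M:=\E[|X_1|^2]$ and $a_t:=\E[\trace(\Gamma_t)]$,
\[
\int_{t_0}^{t_N}g(t)\,dt+u(t_N)-u(t_0)\le d\log\frac{1-t_0}{\delta}+\bigl\{(t_N-t_0)+\delta-(1-t_0)\bigr\}(M-d)+a_{t_0}=d\log\frac{1-t_0}{\delta}+a_{t_0},
\]
since $t_N+\delta=1$, and $a_{t_0}\le M/(1-t_0)\le 2M$. So the $d$-terms cancel exactly and your bookkeeping gives the stated bound. In fact it sharpens $2\kappa\E[|X_1|^2]$ to $\kappa\E[\trace(\Gamma_{t_0})]$.
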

The proof of \cref{thm:em} is given in \cref{proof:em}. 
We note that the bound in \cref{thm:em} is applicable to $\ddpm_N=\euler_{t_N}/\sqrt{t_N}$ because $H(\slepian_{t_N}\mu\mid P^{\ddpm_N})=H(P^{X_{t_N}}\mid P^{\euler_{t_N}})$ by \cref{lem:dpi}. 
In particular, \cref{lem:sampling} gives the following corollary:

\begin{corollary}\label{coro:em}
Suppose that $\tau_i=T+\frac{1}{2}\log t_i$ $(i=0,1,\dots,N)$ with $T=-\frac{1}{2}\log t_0$ satisfy \eqref{eq:tau-samp} and \eqref{eq:tau-score}. 
Define $\score(x,t)=\frac{1}{\sqrt t}\wt{\mf s}\bra{\frac{x}{\sqrt t},-\frac{1}{2}\log t}$. 
Then $\ddpm_N=\euler_{t_N}/\sqrt{t_N}$ satisfies
\[
H(\slepian_{t_N}\mu\mid P^{\ddpm_N})\leq 2e^{2\kappa}\eps^2
+\kappa d\log\bra{\frac{1-t_0}{\delta}}+2\kappa\E[|X_1|^2] 
+\frac{t_0}{2}\bra{d+\E[|X_1|^2]}.
\]
\end{corollary}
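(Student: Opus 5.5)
The corollary follows by combining \cref{thm:em}, \cref{lem:sampling} and \cref{lem:dpi}; there is no new estimate to prove, only bookkeeping of hypotheses.

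\emph{Checking the assumptions of \cref{thm:em}.} By the first part of \cref{lem:sampling}, condition \eqref{eq:tau-samp} on $(\tau_i)$ gives \eqref{eq:sampling} for $(t_i)$ with the same $\kappa$. This is \ref{ass:sampling}. Next, $\score(x,t)=\frac{1}{\sqrt t}\wt{\mf s}(\frac{x}{\sqrt t},-\frac12\log t)$ is exactly the transformation in the second part of \cref{lem:sampling}. So \eqref{eq:tau-score} yields \eqref{score-bound} with $\eps_\mathrm{score}=\sqrt2e^\kappa\eps$, hence $\eps_\mathrm{score}^2=2e^{2\kappa}\eps^2$. This is \ref{ass:score}; measurability of $\score$ is inherited from $\wt{\mf s}$. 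The choice $T=-\frac12\log t_0$ only ensures $\tau_0=0$, so the grid fits the reverse SDE on $[0,T]$.

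\emph{The condition $t_0\le 1/2$.} This is the one point that needs a short argument. Apply \eqref{eq:tau-samp} with $i=0$ and use \eqref{eq:dt-dtau}. I would first try to derive $t_0\le1/2$ from the grid constraints. If that fails, I would note that the $t_0$ hypothesis in \cref{thm:em} is presumably only used to bound a term such as $\frac{t_0}{1-t_0}$ or the initialization entropy, and check that the proof still gives the stated $\frac{t_0}{2}(d+\E|X_1|^2)$ in the relevant regime. Otherwise the assumption $t_0\le1/2$ is implicitly required and should be stated.

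\emph{Transferring the bound to $\ddpm_N$.} Applying \cref{thm:em} bounds $H(P^{X_{t_N}}\mid P^{\euler_{t_N}})$. The map $x\mapsto x/\sqrt{t_N}$ is a measurable bijection, so \cref{lem:dpi} (invariance of relative entropy under such maps) gives $H(\slepian_{t_N}\mu\mid P^{\ddpm_N})=H(P^{X_{t_N}}\mid P^{\euler_{t_N}})$, using $X_{t_N}/\sqrt{t_N}\sim\slepian_{t_N}\mu$. Substituting $\eps_\mathrm{score}^2=2e^{2\kappa}\eps^2$ into the bound of \cref{thm:em} then gives the stated inequality.
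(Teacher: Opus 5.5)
Your reduction is the paper's own argument. \cref{lem:sampling} turns \eqref{eq:tau-samp}--\eqref{eq:tau-score} into \ref{ass:sampling}--\ref{ass:score} with $\eps_{\mathrm{score}}^2=2e^{2\kappa}\eps^2$. \cref{thm:em} then bounds $H(P^{X_{t_N}}\mid P^{\euler_{t_N}})$. Finally, \cref{lem:dpi}, applied to $x\mapsto x/\sqrt{t_N}$ and to its inverse, identifies this quantity with $H(\slepian_{t_N}\mu\mid P^{\ddpm_N})$.

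The point you leave open must be settled, and only your last option survives. The condition $t_0\le 1/2$ cannot be derived from \eqref{eq:tau-samp}: \eqref{eq:dt-dtau} only gives $t_1-t_0\le\kappa(1-t_1)$, which places no constraint on $t_0$. Moreover, the stated inequality is false without $t_0\le 1/2$, so no ``relevant regime'' argument can rescue it.

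Here is a counterexample.
\begin{itemize}
\item Take $\mu=\delta_0$ and the exact score, so $\eps=0$ and $\score(x,t)=-x/(t(1-t))$.
\item Take $N=1$, $t_0=1-a$ and $t_1=1-a/2$ with $a\in(0,1/2]$, so $\delta=a/2$.
\item Condition \eqref{eq:tau-samp} holds with $\kappa=1/(1-a)\le 2$, because $\log(t_1/t_0)\le (t_1-t_0)/t_0$ and $\log(1/t_1)\ge 1-t_1$.
\item The right-hand side of the corollary is then at most $2d\log 2+d/2$.
\item The Euler--Maruyama drift is $-x/(1-t)$, so $\euler_{t_1}=\tfrac12\euler_{t_0}+\sqrt{a/2}\,Z_1\sim N(0,(t_0/4+a/2)I_d)$.
\item Meanwhile $X_{t_1}\sim N(0,t_1(1-t_1)I_d)$.
\item Hence the left-hand side equals $\frac d2(r-1-\log r)$ with $r\le 4a$. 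This is at least $\frac d2(\log\frac1{4a}-1)$, which tends to $\infty$ as $a\downarrow0$.
\end{itemize}

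The mechanism is the one you guessed. In the proof of \cref{thm:em}, $t_0\le1/2$ is used to bound $\E[|v_{t_0}|^2]\le dt_0/(1-t_0)+\E[|X_1|^2]$ by $d+\E[|X_1|^2]$. For $t_0$ near $1$, the initialization error is not controlled by $\frac{t_0}{2}(d+\E[|X_1|^2])$.

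So the corollary has to be read with the hypothesis $t_0\le 1/2$ (equivalently $T\ge\frac12\log 2$) carried over from \cref{thm:em}. The paper's one-line derivation uses this tacitly. Your write-up should state it as an explicit assumption rather than try to derive it or work around it.
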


\begin{rmk}[Comparison to \citet{BDDD24}]\label{rmk:bddd}
When $\ddpm_i$ are obtained via discretizing $\ol Y_t=e^{T-t}X_{e^{2(t-T)}}$ by the exponential integrator scheme with sampling times $\tau_i=T+\frac{1}{2}\log t_i$, Theorem 2 in \cite{BDDD24} gives the following upper bound for $H(\slepian_{t_N}\mu\mid P^{\ddpm_N})$ up to a universal constant under \eqref{eq:tau-samp} and \eqref{eq:tau-score}:
\ben{\label{eq:bddd}
\eps^2
+\kappa^2dN+\kappa dT+\kappa\E[|X_1|^2] 
+\bra{d+\E[|X_1|^2]}e^{-2T}.
}
Recall that $T$ is related to $t_0$ as $t_0=e^{-2T}$. 
Also, since
\[
(\kappa N+1)\delta\geq \sum_{i=0}^{N-1}(t_{i+1}-t_{i})+1-t_N=1-t_0,
\]
we have $\log((1-t_0)/\delta)\leq(1-t_0)/\delta-1\leq\kappa N$. 
Therefore, \cref{coro:em} gives the following bound for $H(\slepian_{t_N}\mu\mid P^{\ddpm_N})$ up to a universal constant under the same assumption as above with assuming $\kappa\leq1$:
\[
\eps^2
+\kappa^2dN+\kappa\E[|X_1|^2] 
+\bra{d+\E[|X_1|^2]}e^{-2T}.
\]
Compared to \eqref{eq:bddd}, our bound does not contain the term $\kappa dT$. 
In particular, except for implicit dependence of the score estimation error on $T$ (see \cref{rmk:score}), a large value of $T$ does not hurt our bound. 
In view of the correspondence to the DDPM sampler discussed above, this is natural because the parameters in \eqref{eq:ddpm} is determined via ratios $t_i/t_{i+1}$ which do not depend on $T$. 
Note that the appearance of $e^{-2T}$ in the bounds is due to the initialization error. 
\end{rmk}

\begin{rmk}
Recently, \citet{JZL25} have established a bound for $H(\slepian_{t_N}\mu\mid P^{\ddpm_N})$ of order $O\bra{(\kappa d)^2}$ with respect to $\kappa$ from \eqref{eq:tau-samp} and $d$ up to some logarithmic factor (see also \cite{JaZh26}). 
Their proof approach is fundamentally different from ours: They do not view the DDPM sampler as a discretized SDE but construct the corresponding ``true'' denoising process via a noise-injected probability flow ordinary differential equation of \cite{SSKKEP21} (or the Kim--Milman map in terms of \cite{KlPu23}). It is unclear whether our approach can give a similar bound.
\end{rmk}

When the training sample size is $n$ and $\mu$ has a compactly supported density that is positive on the support and belongs to a H\"older ball of regularity $\gamma>0$, we expect that $\eps_\text{score}^2$ is of order $n^{-\frac{2\gamma}{d+2\gamma}}$ up to a log factor by \cite[Eq.(4)]{OAS23}. 
Moreover, in this case, we also have $\tv(\slepian_{t_N}\mu,\,\mu)=O(n^{-\frac{\gamma}{d+2\gamma}})$ if $\delta=n^{-c}$ for sufficiently large constant $c$ by \cite[Theorem D.2]{OAS23}, where $\tv$ denotes the total variation distance. 
Under these circumstances, the following corollary determines the necessary number of sampling steps to achieve an $O(n^{-\frac{2\gamma}{d+2\gamma}})$ error in the total variation distance up to a log factor.  
\begin{corollary}\label{coro:score}
Suppose that $\E[|X_1|^2]<\infty$ and \ref{ass:score} with $\eps_\text{score}=O(n^{-\frac{\gamma}{d+2\gamma}})$ as $n\to\infty$ for some constant $\gamma>0$. 
Let $N\asymp n^{\frac{2\gamma}{d+2\gamma}}$ and define $(t_i)_{i=0}^N$ by \eqref{eq:geom} with $t_0=O(n^{-\frac{\gamma}{d+2\gamma}})$ and $\kappa=2cN^{-1}\log N$ for some constant $c>0$. Further, suppose $\tv(\slepian_{t_N}\mu,\mu)=O(n^{-\frac{\gamma}{d+2\gamma}})$. Then, for $\ddpm_N=\euler_{t_N}/\sqrt{t_N}$,
\[
\tv(P^{\ddpm_N},\,\mu)=O\bra{n^{-\frac{\gamma}{d+2\gamma}}\log^2 n}.
\] 
\end{corollary}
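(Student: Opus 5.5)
The plan is to combine the triangle inequality for total variation with Pinsker's inequality applied to \cref{thm:em}. First,
\[
\tv(P^{\ddpm_N},\mu)\leq\tv(P^{\ddpm_N},\slepian_{t_N}\mu)+\tv(\slepian_{t_N}\mu,\mu).
\]
The second term is $O(n^{-\frac{\gamma}{d+2\gamma}})$ by assumption. For the first term, Pinsker's inequality gives $\tv(P^{\ddpm_N},\slepian_{t_N}\mu)\leq\sqrt{H(\slepian_{t_N}\mu\mid P^{\ddpm_N})/2}$. Moreover, $H(\slepian_{t_N}\mu\mid P^{\ddpm_N})=H(P^{X_{t_N}}\mid P^{\euler_{t_N}})$, as noted after \cref{thm:em}. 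So the task reduces to showing that the bound of \cref{thm:em} is $O(n^{-\frac{2\gamma}{d+2\gamma}}\log^4 n)$, or even something smaller.

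Next I would check the hypotheses of \cref{thm:em}. Condition \ref{ass:sampling} holds with equality by the construction \eqref{eq:geom}, and \ref{ass:score} is assumed. The condition $t_0\leq1/2$ holds for large $n$ because $t_0=O(n^{-\frac{\gamma}{d+2\gamma}})\to0$. Also $\kappa=2cN^{-1}\log N<1/2$ for large $N$.

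Then I bound the four terms of \cref{thm:em} one at a time, using $N\asymp n^{\frac{2\gamma}{d+2\gamma}}$ so that $\log N\asymp\log n$.
\begin{itemize}
\item The score term is $\eps_\mathrm{score}^2=O(n^{-\frac{2\gamma}{d+2\gamma}})$.
\item The term $2\kappa\E[|X_1|^2]$ is $O(N^{-1}\log N)=O(n^{-\frac{2\gamma}{d+2\gamma}}\log n)$, since $d$ and $\E[|X_1|^2]$ are fixed.
\item The initialization term $\frac{t_0}{2}(d+\E[|X_1|^2])$ is $O(n^{-\frac{\gamma}{d+2\gamma}})$.
\item For the discretization term $\kappa d\log((1-t_0)/\delta)$, I use \eqref{eq:geom}, which gives $\delta=1-t_N=(1+\kappa)^{-N}(1-t_0)$. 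Hence $\log((1-t_0)/\delta)=N\log(1+\kappa)\leq N\kappa=2c\log N$, and the term is $O(\kappa\log N)=O(N^{-1}\log^2N)=O(n^{-\frac{2\gamma}{d+2\gamma}}\log^2n)$.
\end{itemize}

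The main obstacle is the initialization term. After Pinsker's square root, $O(t_0)$ contributes $O(n^{-\frac{\gamma}{2(d+2\gamma)}})$, which is worse than the target rate. So the hypothesis $t_0=O(n^{-\frac{\gamma}{d+2\gamma}})$ alone does not suffice if only \cref{thm:em} is used.

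I would try two remedies. The first is to read the hypothesis as the stronger $t_0=O(n^{-\frac{2\gamma}{d+2\gamma}})$. This choice is admissible, because $t_0$ does not affect $\kappa$ and enters the other terms only through $\log(1-t_0)\le0$. The second is to use a sharper initialization bound in which the $t_0$-term enters as $t_0^2$. The KL divergence between $N(0,t_0I_d)$ and $P^{X_{t_0}}$ is plausibly of order $t_0^2$, since $X_{t_0}=t_0\xi+\beta_{t_0}$ has mean shift of order $t_0$. The \emph{dimension-free initialization} result announced in \cref{coro:hwc} suggests the paper has such a refinement.

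With $t_0$ of order at most $n^{-\frac{2\gamma}{d+2\gamma}}$, either way the total KL bound is $O(n^{-\frac{2\gamma}{d+2\gamma}}\log^2n)$. Pinsker then gives $O(n^{-\frac{\gamma}{d+2\gamma}}\log n)$, which lies within the claimed $O(n^{-\frac{\gamma}{d+2\gamma}}\log^2n)$, and the triangle inequality finishes the proof.
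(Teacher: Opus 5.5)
Your argument is the paper's: its entire proof is the remark that the corollary is ``an immediate consequence of \cref{thm:em} and Pinsker's inequality.'' Your triangle inequality, the check that \eqref{eq:geom} gives \ref{ass:sampling} with equality, and the computation $\log((1-t_0)/\delta)=N\log(1+\kappa)\le 2c\log N$ correctly fill in that remark.

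Your diagnosis of the initialization term is also right, and it is a problem with the statement, not with your reasoning. The bound of \cref{thm:em} contains $\frac{t_0}{2}(d+\E[|X_1|^2])\ge\frac{t_0d}{2}$. So for $t_0\asymp n^{-\frac{\gamma}{d+2\gamma}}$, Pinsker yields only $O(n^{-\frac{\gamma}{2(d+2\gamma)}})$. The paper's one-line proof therefore works only if $t_0=O(n^{-\frac{2\gamma}{d+2\gamma}}\log^4 n)$, i.e.\ if $t_0$ is matched with $\eps_\mathrm{score}^2$ rather than with $\eps_\mathrm{score}$; the stated hypothesis is presumably a misprint. Under your strengthened hypothesis $t_0=O(n^{-\frac{2\gamma}{d+2\gamma}})$ the proof is complete, and it even gives $O(n^{-\frac{\gamma}{d+2\gamma}}\log n)$. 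But say explicitly that you are proving the corollary under a corrected hypothesis, rather than ``reading'' the given one differently.

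Your second remedy does not work, because the initialization divergence is linear in $t_0$, not quadratic. Apply de Bruijn's identity \eqref{debruijn} to $\slepian_{t_0}\mu$, using $\slepian_s\slepian_{t_0}\mu=\slepian_{st_0}\mu$ and \cref{fi-formula}. This gives
\[
H(P^{X_{t_0}}\mid N(0,t_0I_d))=H(\slepian_{t_0}\mu\mid\gamma_d)=\frac12\int_0^{t_0}\E[|v_t|^2]\,dt\ge\frac{t_0}{2}|\E[X_1]|^2,
\]
where the inequality holds because $(v_t)$ is a martingale with $v_0=\E[X_1]$ (\cref{prop:vt-mar}). For $\mu=\delta_a$ the left side is exactly $\frac12\{t_0|a|^2+d(-\log(1-t_0)-t_0)\}$. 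The mean shift $t_0\E[X_1]$ has to be measured against the noise scale $\sqrt{t_0}$, not against $1$, which is why it costs order $t_0$.

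\cref{coro:hwc} does not supply such a refinement either. It concerns the different scheme $\ada$, and its initialization term $t_0\E[|X_1|^2]$ is still linear in $t_0$; only the factor $d$ disappears there.

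So with \cref{thm:em} and Pinsker the stronger condition on $t_0$ is unavoidable. Proving the corollary as stated, with $t_0\asymp n^{-\frac{\gamma}{d+2\gamma}}$, would require showing that the sampler contracts the initial total-variation mismatch, which is of order $\sqrt{t_0}$ whenever $\E[X_1]\neq0$. Neither you nor the paper supplies that argument.
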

This corollary is an immediate consequence of \cref{thm:em} and Pinsker's inequality. 
While we can formulate results similar to this corollary for the subsequent results, we omit the details to save space. 

When $\mu$ has a finite relative Fisher information with respect to $\gamma_d$, we can bound the discretization error without early stopping. 
The next result is a counterpart of Theorem 2.1 in \citet{CDGS24}:
\begin{theorem}\label{thm:fi}
Assume \ref{ass:score}, $t_0\leq1/2$, $t_N=1$ and that $\mu$ has an almost everywhere differentiable density. 
Then, with $\bar h:=\max_{i=0,1,\dots,N-1}h_i$,
\[
H(\mu\mid P^{\euler_{1}})\leq t_0H(\mu\mid\gamma_d)+\eps_{\mathrm{score}}^2+I(\mu\mid\gamma_d)\bar h.
\]
\end{theorem}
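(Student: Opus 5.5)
We compare path measures via Girsanov and the data processing inequality. Let $\ol X$ be the continuous-time interpolation of the Euler scheme on $[t_0,1]$, namely $d\ol X_t=(\ol X_{t_i}/t_i+\score(\ol X_{t_i},t_i))dt+dW_t$ for $t\in[t_i,t_{i+1})$, started from $\ol X_{t_0}=\sqrt{t_0}Z_0\sim N(0,t_0I_d)$. Then $\ol X_1\overset{d}{=}\euler_1$. By data processing (\cref{lem:dpi}) and the chain rule for relative entropy, we get
\[
H(\mu\mid P^{\euler_1})\le H(P^{X_{t_0}}\mid N(0,t_0I_d))+H\bigl(P^{X|_{[t_0,1]}}\mid P^{\ol X|_{[t_0,1]}}\bigr)_{\text{cond. on initial point}}.
\]
For the initial term, note that $X_{t_0}/\sqrt{t_0}\sim\slepian_{t_0}\mu$, so it equals $H(\slepian_{t_0}\mu\mid\gamma_d)$. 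Convexity of relative entropy along the Slepian (Ornstein--Uhlenbeck) semigroup gives $H(\slepian_t\mu\mid\gamma_d)\le tH(\mu\mid\gamma_d)$. Alternatively, by \eqref{debruijn} and monotonicity of $t\mapsto I(\slepian_t\mu\mid\gamma_d)/t$, we get $\int_0^{t_0}\frac{I(\slepian_t\mu)}{2t}dt\le t_0\int_0^1\frac{I(\slepian_t\mu)}{2t}dt$. Either route yields the first term $t_0H(\mu\mid\gamma_d)$.

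For the path term, Girsanov (justified by \cref{thm:weak}, which makes $X$ a weak solution with drift $v_t$, plus a standard localization) gives
\[
\frac12\sum_i\int_{t_i}^{t_{i+1}}\E\bigl|v_t-X_{t_i}/t_i-\score(X_{t_i},t_i)\bigr|^2dt .
\]
Writing $v_t-X_{t_i}/t_i-\score=(v_t-v_{t_i})+(\nabla\log p_{t_i}(X_{t_i})-\score(X_{t_i},t_i))$ and using $(a+b)^2\le 2a^2+2b^2$, the score part gives exactly $\eps_{\mathrm{score}}^2$ by \ref{ass:score}. The discretization part is $\sum_i\int_{t_i}^{t_{i+1}}\E|v_t-v_{t_i}|^2dt$.

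The key step is to bound the discretization part. By \cref{prop:vt-mar}, $v$ is a martingale, so $\E|v_t-v_{t_i}|^2=\E|v_t|^2-\E|v_{t_i}|^2\le \E|v_{t_{i+1}}|^2-\E|v_{t_i}|^2$, since $t\mapsto\E|v_t|^2$ is nondecreasing. Hence the sum is at most
\[
\sum_i h_i(\E|v_{t_{i+1}}|^2-\E|v_{t_i}|^2)\le\bar h\,\E|v_1|^2 .
\]
This telescopes to $\bar h\sup_{t<1}\E|v_t|^2$. By \cref{fi-formula}, $\E|v_t|^2=I(\slepian_t\mu\mid\gamma_d)/t$. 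I expect the main obstacle to be showing that $\sup_{t<1}I(\slepian_t\mu\mid\gamma_d)/t\le I(\mu\mid\gamma_d)$, including the endpoint $t_N=1$ where $v_1$ must be identified. I would use the martingale property: $v_t=\E[v_1\mid\mcl F^X_t]$, with $v_1=\nabla\log f(X_1)$, where $f=d\mu/d\gamma_d$. This identification can be proved by showing $\sqrt t\,\nabla\log f_t(y)=\E[\nabla\log f(\xi)\mid \sqrt t\xi+\sqrt{1-t}Z=y]$ via integration by parts, using a.e. differentiability. Jensen's inequality then yields $\E|v_t|^2\le I(\mu\mid\gamma_d)$, and if $I=\infty$ there is nothing to prove. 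Keeping track of the factor $\tfrac12$ against the factor $2$ from splitting, the constants combine to exactly $\eps_{\mathrm{score}}^2+I(\mu\mid\gamma_d)\bar h$.
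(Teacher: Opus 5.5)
Your proof is correct at the paper's level of rigor and has the same skeleton as the paper's argument: Girsanov plus data processing, the split into $v_t-v_{t_i}$ and score error, the martingale property of $(v_t)$, and telescoping to $\bar h\,\E[|v_1|^2]$. Two sub-steps differ.

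\textbf{The interval $[0,t_0]$.} You condition on the state at time $t_0$ and use the chain rule, which gives $H(\slepian_{t_0}\mu\mid\gamma_d)$. The paper instead lets the comparison process run with zero drift on $[0,t_0)$. Then \cref{lem:girsanov} applies verbatim to processes started at $0$ and yields $\frac12\int_0^{t_0}\E[|v_t|^2]dt$. The two quantities are equal by \eqref{debruijn} applied to $\slepian_{t_0}\mu$, since $\slepian_s\slepian_{t_0}\mu=\slepian_{st_0}\mu$. So the paper's version only saves you the conditional-Girsanov bookkeeping.

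Your averaging step is sound: the mean of the nondecreasing function $t\mapsto\E[|v_t|^2]$ over $[0,t_0]$ is at most its mean $2H(\mu\mid\gamma_d)$ over $[0,1]$. This gives exactly $t_0H(\mu\mid\gamma_d)$, and does not even need $t_0\le 1/2$. By the same monotonicity, the paper's intermediate quantity $\frac{t_0}{2(1-t_0)}\int_{t_0}^1\E[|v_t|^2]dt$ is in fact $\ge t_0H(\mu\mid\gamma_d)$. So your averaging argument is what actually delivers that constant.

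\textbf{The bound $\E[|v_t|^2]\le I(\mu\mid\gamma_d)$.} The paper uses three separate ingredients:
\begin{itemize}
\item it proves \cref{lem:fi} by approximating $\sqrt f$ in $W^{1,2}(\gamma_d)$ and applying Cauchy--Schwarz to the Mehler form of $f_t$;
\item it gets the martingale property from \cref{prop:vt-mar}, after deriving $\E[|X_1|^2]<\infty$ from the log-Sobolev inequality and \cref{prop:vt-mom};
\item it builds $v_1$ as an $L^2$ limit via Doob's theorem.
\end{itemize}
Your identification $v_t=\E[\nabla\log f(X_1)\mid\mcl F^X_t]$ does all of this at once. It exhibits $v$ as a closed martingale with explicit terminal value, handles the endpoint $t_N=1$ directly, and gives $\E[|v_t|^2]\le\E[|v_1|^2]=I(\mu\mid\gamma_d)$ by Jensen.

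\textbf{Two corrections.}

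First, your identity has the $\sqrt t$ on the wrong side. From $f_t(y)=\E[f(\sqrt ty+\sqrt{1-t}Z)]$ one gets $\nabla f_t(y)=\sqrt t\,\E[\nabla f(\sqrt ty+\sqrt{1-t}Z)]$. The conditional law of $\xi$ given $\sqrt t\xi+\sqrt{1-t}Z=y$ has Lebesgue density proportional to $f(x)\phi_{(1-t)I_d}(x-\sqrt ty)$. So the correct statement is $\nabla\log f_t(y)/\sqrt t=\E[\nabla\log f(\xi)\mid\sqrt t\xi+\sqrt{1-t}Z=y]$. Taken literally, your version gives $v_t=t^{-1}\E[\nabla\log f(X_1)\mid X_t]$, which is not a martingale.

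Second, that integration by parts needs $f$ to be weakly differentiable, with $\nabla f$ its weak gradient. The required integrability then follows from $I(\mu\mid\gamma_d)<\infty$ by Cauchy--Schwarz, because $\phi_{(1-t)I_d}(x-\sqrt ty)/\phi_{I_d}(x)$ is bounded in $x$. Pointwise a.e. differentiability alone is not enough. For $\mu$ uniform on $[0,1]$, the pointwise gradient gives $I(\mu\mid\gamma_1)=1/3$, while $H(\mu\mid\gamma_1)=\frac12\log(2\pi)+\frac16>\frac16$. So the Gaussian log-Sobolev inequality, and hence \cref{lem:fi}, fails under the literal hypothesis. The paper's \cref{lem:fi} relies on the same Sobolev reading, since its approximation step requires $\sqrt f\in W^{1,2}(\gamma_d)$. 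This is therefore a caveat on how the assumption must be read, not a weakness of your route compared with the paper's.
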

We show this result in \cref{proof:fi}. 

\begin{rmk}[Comparison to \citet{CDGS24}]\label{rmk:fi}
When $\ddpm_i$ are obtained as in \cref{rmk:bddd} and $\tau_{i}$ are equidistant, 
Theorem 2.1 in \cite{CDGS24} gives the following upper bound for $H(\slepian_{t_N}\mu\mid P^{\ddpm_N})$ up to a universal constant under \eqref{eq:tau-score}:
\[
e^{-2T}H(\mu\mid\gamma_d)+\eps^2+I(\mu\mid\gamma_d)\frac{T}{N}.
\]
Since the assumptions of \cref{thm:fi} are satisfied with $\bar h=T/N$ in this case, our result recovers this bound. 
In addition, if we take $t_i$ to be equidistant instead of $\tau_i$, the assumptions of \cref{thm:fi} are satisfied with $\bar h=1/(N+1)$, so the coefficient of $I(\mu\mid\gamma_d)$ is improved while keeping the number of sampling steps $N$. 
Observe that this choice is inappropriate for discretizing the reverse SDE because $2(\tau_{i+1}-\tau_i)\geq 1-t_i/t_{i+1}=(i+2)^{-1}$ in this case.
\end{rmk}


\subsection{Adaptation to intrinsic dimensionality}

In practice, data are often distributed around a low-dimensional manifold. 
Recently, \citet{LiYa24} have shown that the DDPM sampler \eqref{eq:ddpm} must satisfy
\ben{\label{ddpm-ada}
\eta_i=1-\alpha_i,\qquad
\sigma_i^2=\frac{(1-\alpha_i)(\alpha_i-\ol{\alpha}_i)}{1-\ol{\alpha}_i}\qquad(i=0,1,\dots,N-1)
}
for $H(P^{\ddpm_N}\mid \slepian_{\alpha_N}\mu)$ having a bound independent of the ambient dimension $d$; see Theorem 2 ibidem. 
Interestingly, this choice matches the hyper-parameter setting optimized for the case that $\mu$ is the Dirac measure; see \cite[Section 3.2]{HJA20}.  
\citet{ADR24} and \citet{HWC26} give an interpretation for this specification from the SDE perspective. 
In particular, \citet{HWC26} have shown that this specification is interpreted as discretizing only the posterior mean component in the drift of \eqref{eq:score-diffusion}. 
For the F\"ollmer process, this corresponds to discretizing only $m(t,X_t)$ in the expression \eqref{vt-formula}. 
This leads to the following linear SDE:
\ben{\label{eq:sde-ada}
\begin{cases}
\wt X_0=0,\\
d\wt X_t=\frac{m(t_i,\wt X_{t_i})-\wt X_t}{1-t}dt+dW_t\quad\text{when }t\in[t_i,t_{i+1})\text{ for some }i=-1,0,\dots,N-1,
\end{cases}
}
where we set $t_{-1}:=0$ and $m(0,x)\equiv0$ by convention. 
Note that although \eqref{mt-formula} suggests setting $m(0,x)\equiv\E[X_1]$ when $\E[|X_1|]<\infty$, we always set $m(0,x)\equiv0$ because $\E[X_1]$ is unknown in practice. 
The above SDE can be explicitly solved as
\ben{\label{eq:ada-solved}
\wt X_t=\frac{1-t}{1-t_i}\bra{\wt X_{t_i}+\frac{t-t_i}{1-t}m(t_i,\wt X_{t_i})+\int_{t_i}^t\frac{1-t_i}{1-s}dW_s}\quad\text{for }t\in[t_i,t_{i+1}).
}
See e.g.~Section 5.6 in \cite{KaSh98}. 
Observe that
\[
\frac{1-t_{i+1}}{1-t_i}\int_{t_i}^t\frac{1-t_i}{1-s}dW_s\sim N\bra{0,(t_{i+1}-t_i)\frac{1-t_{i+1}}{1-t_i}I_d}\quad(i=-1,\dots,N-1)
\]
and they are independent. 
Hence the stochastic integral term of \eqref{eq:ada-solved} can be simulated exactly. 
Meanwhile, by \eqref{vt-formula} and the definition of $v_t$,
\[
m(t,x)=\frac{x}{t}+(1-t)\nabla\log p_t(x)\quad\text{for all }0<t<1,~x\in\mathbb R^d.
\]
Replacing the score function by its estimate, we obtain the following alternative discretization scheme:
\[
\begin{cases}
\ada_{t_0}=\sqrt{t_0(1-t_0)}Z_0,\\
\ada_{t_{i+1}}=\frac{t_{i+1}}{t_i}\ada_{t_i}+h_i\score(\ada_{t_i},t_i)+\sqrt{h_i\frac{1-t_{i+1}}{1-t_{i}}}Z_i\quad(i=0,1,\dots,N-1).
\end{cases}
\]
One can easily check that if we change the initialization in \eqref{eq:ddpm} to $\ddpm_0\sim N(0,\sqrt{1-t_0}I_d)$, $\ddpm_i=\ada_{t_i}/\sqrt{t_i}$ $(i=1,\dots,N)$ satisfy \eqref{eq:ddpm} with $\scored_i(x)=\sqrt{t_i}\score(\sqrt{t_i}x,t_i)$, $\alpha_i=t_i/t_{i+1}$ for $i=0,1,\dots N-1$, $\alpha_N=t_N$ and specification \eqref{ddpm-ada}. 
Below we will see that the different initialization serves as avoiding the appearance of the ambient dimension $d$ caused by initialization.

%
%
%

The following result gives a bound corresponding to \cref{thm:em} for $(\ada_{t_i})_{i=0}^N$. We prove it in \cref{proof:ada}. 
\begin{theorem}\label{thm:ada}
Assume \ref{ass:sampling}--\ref{ass:score} and $t_0\leq1/2$. Then
\[
H(P^{X_{t_N}}\mid P^{\ada_{t_N}})
\leq t_0\E[|X_1|^2]
+\kappa d\log\bra{\frac{1-t_0}{\delta}}
+3\kappa\E[|X_1|^2]
+\eps_\mathrm{score}^2.
\]
\end{theorem}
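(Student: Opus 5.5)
We compare path measures via Girsanov and the data processing inequality (\cref{lem:dpi}), following the same route as for \cref{thm:em}. First, note that $(\ada_{t_i})_{i=0}^N$ has the same law as $(\wh X^{\mathrm{ada}}_{t_i})_{i=0}^N$, where $\wh X^{\mathrm{ada}}$ solves the continuous-time linear SDE \eqref{eq:sde-ada} with $m(t_i,\cdot)$ replaced by $\wh m(t_i,x):=x/t_i+(1-t_i)\score(x,t_i)$ for $i\geq0$. On $[0,t_0)$ we keep $m\equiv0$. The exact solution \eqref{eq:ada-solved} then gives $\wh X^{\mathrm{ada}}_{t_0}\sim N(0,t_0(1-t_0)I_d)$, which matches the initialization of $\ada_{t_0}$. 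One also has to check that the linear-in-$x$ part of the one-step map reproduces the coefficient $t_{i+1}/t_i$.

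On the other hand, by \eqref{vt-formula} the true process $X$ solves $dX_t=\frac{m(t,X_t)-X_t}{1-t}dt+dW^X_t$, with $W^X$ as in \cref{thm:weak}. Both processes therefore have the same diffusion coefficient and the same $-x/(1-t)$ part of the drift. By the data processing inequality and the chain rule for relative entropy (Girsanov, justified by localization as in \cref{proof:em}),
\[
H(P^{X_{t_N}}\mid P^{\ada_{t_N}})\leq\frac12\int_0^{t_0}\frac{\E[|m(t,X_t)|^2]}{(1-t)^2}dt+\frac12\sum_{i=0}^{N-1}\int_{t_i}^{t_{i+1}}\frac{\E[|m(t,X_t)-\wh m(t_i,X_{t_i})|^2]}{(1-t)^2}dt.
\]

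\textbf{Initialization term.} By \cref{lem:sl} and Jensen, $\E|m(t,X_t)|^2\leq\E|X_1|^2$, and $(1-t)^{-2}\leq4$ for $t\leq t_0\leq1/2$. So this term is at most $2t_0\E|X_1|^2$. To reach the sharper constant $t_0\E|X_1|^2$, I would instead drop the path-level comparison on $[0,t_0]$ and compare only the laws at $t_0$. This means bounding $H(P^{X_{t_0}}\mid N(0,t_0(1-t_0)I_d))$, which by convexity is at most $\E_{\xi}[t_0^2|\xi|^2/(2t_0(1-t_0))]\leq t_0\E|X_1|^2$. This is the step where the ambient dimension $d$ disappears from the initialization error.

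\textbf{Score and discretization terms.} Write $m(t,X_t)-\wh m(t_i,X_{t_i})=[m(t,X_t)-m(t_i,X_{t_i})]+(1-t_i)[\nabla\log p_{t_i}(X_{t_i})-\score(X_{t_i},t_i)]$. For the score part I use $(1-t_i)/(1-t)\leq1$ together with the inequality $|a+b|^2\leq(1+\theta)|a|^2+(1+\theta^{-1})|b|^2$ (or a more careful orthogonality argument); after absorbing constants, this part is controlled by $\eps_{\rm score}^2$ from \ref{ass:score}.

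For the discretization part, $m(t,X_t)=\E[X_1\mid\mcl F^X_t]$ is a martingale. Hence
\[
\E|m(t,X_t)-m(t_i,X_{t_i})|^2=\E|m(t,X_t)|^2-\E|m(t_i,X_{t_i})|^2=:g(t)-g(t_i),
\]
with $g$ nondecreasing, $g\leq\E|X_1|^2$, and $g(t)=\E|X_1|^2-(1-t)\E\trace\Gamma_t$. This gives
\[
\int_{t_i}^{t_{i+1}}\frac{g(t)-g(t_i)}{(1-t)^2}dt\leq\frac{h_i}{(1-t_{i+1})^2}\bigl(g(t_{i+1})-g(t_i)\bigr)\leq\frac{\kappa}{1-t_{i+1}}\bigl(g(t_{i+1})-g(t_i)\bigr),
\]
using \ref{ass:sampling}.

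Summing by parts against the weights $1/(1-t_{i+1})$ and writing $g(t)=\E|X_1|^2-(1-t)\E\trace\Gamma_t$, the sum splits into two pieces. One piece is $\kappa\sum_i(\E\trace\Gamma_{t_i}(1-t_i)/(1-t_{i+1})-\E\trace\Gamma_{t_{i+1}})$, whose telescoping remainder involves $\E\trace\Gamma_t\leq d$ and $\sum_i h_i/(1-t_{i+1})$. The other piece consists of boundary terms bounded by $\kappa\E|X_1|^2/\delta\cdot(1-t_N)$-type quantities. I expect this to yield $\kappa d\log((1-t_0)/\delta)+3\kappa\E|X_1|^2$.

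This summation-by-parts bookkeeping is the main obstacle. Getting exactly the constants $1$ for $\kappa d\log$ and $3$ for $\kappa\E|X_1|^2$ requires mirroring the computation in \cref{proof:em}, using $\E\trace\Gamma_t\leq d$ and monotonicity of $(1-t)\E\trace\Gamma_t$. I would follow that proof line by line, since the drift difference here is exactly $(1-t)^{-1}$ times a martingale increment, as in the Euler--Maruyama case.
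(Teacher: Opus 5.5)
Your skeleton is the paper's: the interpolation \eqref{eq:sde-ada} with the bridge drift $-x/(1-t)$ on $[0,t_0)$, Girsanov plus \cref{lem:dpi}, the martingale $m(t,X_t)=\E[X_1\mid\mcl F^X_t]$, and summation by parts through $\Gamma_t$. However, the quantitative steps that fix the constants are wrong or missing, and the bookkeeping you defer is exactly where the stated bound is decided.

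First, $\E[\trace(\Gamma_t)]\leq d$ is false in general. Indeed $\E[\trace(\Gamma_t)]\to\trace(\covariance[X_1])$ as $t\downarrow0$, and this limit can be arbitrarily large. What \eqref{eq:v2} together with $\E[|v_t|^2]\geq0$ gives is $\E[\trace(\Gamma_t)]\leq td+(1-t)\E[|X_1|^2]$, and its second summand is the source of one of the three $\kappa\E[|X_1|^2]$'s.

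Second, your pointwise bound $(1-t)^{-2}\leq(1-t_{i+1})^{-2}$ puts the weight $\kappa/(1-t_{i+1})$ on $A_{t_i}-A_{t_{i+1}}$, where $A_t:=(1-t)\E[\trace(\Gamma_t)]$. Summing by parts then leaves $\sum_i\frac{h_i}{1-t_{i+1}}\E[\trace(\Gamma_{t_i})]$. Because $h_i/(1-t_{i+1})\geq\int_{t_i}^{t_{i+1}}\frac{dt}{1-t}$, this only reaches $(1+\kappa)\kappa d\log((1-t_0)/\delta)$. The paper instead integrates exactly, $\int_{t_i}^{t_{i+1}}(1-t)^{-2}dt=\frac{h_i}{(1-t_i)(1-t_{i+1})}\leq\frac{\kappa}{1-t_i}$, and uses
\[
\sum_{i=0}^{N-1}\frac{A_{t_i}-A_{t_{i+1}}}{1-t_i}=\E[\trace(\Gamma_{t_0})]-\E[\trace(\Gamma_{t_N})]+\sum_{i=0}^{N-1}\frac{h_i}{1-t_i}\E[\trace(\Gamma_{t_{i+1}})].
\]
Here $h_i/(1-t_i)\leq\int_{t_i}^{t_{i+1}}\frac{dt}{1-t}$ goes the right way. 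Combining this with $\E[\trace(\Gamma_{t_0})]=A_{t_0}/(1-t_0)\leq2\E[|X_1|^2]$ and $\sum_ih_i\frac{1-t_{i+1}}{1-t_i}\leq1$, and dropping the $t_N$ term by sign, yields exactly $\kappa d\log((1-t_0)/\delta)+3\kappa\E[|X_1|^2]$.

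For the score part, the inequality $(1-t_i)/(1-t)\leq1$ is backwards: on $[t_i,t_{i+1})$ this ratio is at least $1$, and $\int_{t_i}^{t_{i+1}}\frac{(1-t_i)^2}{(1-t)^2}dt=h_i\frac{1-t_i}{1-t_{i+1}}\leq(1+\kappa)h_i$. Since the statement has coefficient exactly $1$ on $\eps_{\mathrm{score}}^2$, there is nothing to absorb constants into. The right tool is the orthogonality you mention only in passing. The increment $m(t,X_t)-m(t_i,X_{t_i})$ is a martingale increment, and the score error at time $t_i$ is $\mcl F^X_{t_i}$-measurable, so the cross term vanishes and the discretization term keeps its factor $\frac12$. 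The score part then equals $\frac12\sum_ih_i\frac{1-t_i}{1-t_{i+1}}\E[|\nabla\log p_{t_i}(X_{t_i})-\score(X_{t_i},t_i)|^2]\leq\frac{1+\kappa}{2}\eps_{\mathrm{score}}^2$. This is at most $\eps_{\mathrm{score}}^2$ when $\kappa\leq1$; the paper's proof, which transplants \eqref{eq:kl-basic-1} verbatim, does not track this factor either.

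Finally, your detour at the initialization is unnecessary. Integrating exactly gives $\frac12\E[|X_1|^2]\int_0^{t_0}(1-t)^{-2}dt=\frac{t_0}{2(1-t_0)}\E[|X_1|^2]\leq t_0\E[|X_1|^2]$, the same number as your convexity computation. So the path-level comparison already suffices, and you avoid a conditional Girsanov/chain-rule step that \cref{lem:girsanov} (stated for processes started at $0$) does not supply.
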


Next we state a bound adaptive to a low-dimensionality structure of data as in \cite{HWC26}. 
We need to introduce some notation. 
Given a subset $A\subset\mathbb R^d$ and $\eps>0$, we call a subset $\mcl N\subset A$ an \textit{$\eps$-net} of $A$ if $A\subset\bigcup_{x\in \mcl N}\mcl B(x;\eps)$, where $\mcl B(x;\eps):=\{y\in\mathbb R^d:|y-x|<\eps\}$. 
The minimal cardinality of $\eps$-nets of $A$ is called the \textit{covering number} of $A$ and denoted by $\mcl N(A,\eps)$. 
Also, let $\mcl X$ be the support of $\mu$. 

\begin{enumerate}[label={\normalfont[A\arabic*]},start=3]

\item\label{ass:intrinsic}
There exist constants $0<\eps_0\leq e^{-1}$ and $k\geq1$ such that
\ba{
\mcl N(\mcl X,\eps_0)\leq \eps_0^{-k},\qquad
\eps_0\leq\sqrt\delta\log(1/\eps_0)
}
and 
\ban{
\E\sbra{\sup_{x,y\in\mcl X:|x-y|\leq\eps_0}\abs{(x-y)\cdot Z}}&\leq\sqrt\delta k\log(1/\eps_0),\label{eq:gw}
}
where $Z\sim N(0,1)$. 

\item\label{ass:mom} $\E[|X_1|^4]<\infty$ and there exists a constant $R\geq3$ such that $\E[|X_1-\E[X_1]|^4]\leq R^4$. 

\end{enumerate}
\begin{rmk}
(a) \ref{ass:intrinsic} is essentially equivalent to Assumption 1 in \cite{HWC26}: A seemingly additional condition \eqref{eq:gw} is satisfied as long as $\eps_0$ is sufficiently small because we always have
\ben{\label{gw-naive}
\E\sbra{\sup_{x,y\in\mcl X:|x-y|\leq\eps_0}\abs{(x-y)\cdot Z}}
\leq\eps_0\E[|Z|]\leq\eps_0\sqrt d.
}
We keep the condition \eqref{eq:gw} in this form because the left hand side of \eqref{eq:gw} is known as the \textit{Gaussian width} of the set $\{x-y:x,y\in\mcl X,|x-y|\leq\eps_0\}$ whose bound is well-studied in the literature; see e.g.~\cite[Chapters 7--8]{Ve25}. See also \cref{coro:dim-free}. 

\noindent(b) \ref{ass:intrinsic} is closely related to the notion of \textit{Minkowski dimension} (see e.g.~\cite[Chapter 5]{Ma95} for a detailed account). 
For a subset $A\subset\mathbb R^d$, the \textit{upper Minkowski dimension} of $A$ is defined as
\[
\ol{\dim}_M(A):=\limsup_{\eps\downarrow0}\frac{\log\mcl N(A,\eps)}{\log(1/\eps)}. 
\]
Also, for $\eps>0$, the \textit{$\eps$-enlargement} of $A$ is defined as the set $\{y\in\mathbb R^d:|y-x|<\eps\text{ for some }x\in A\}$. 
One can easily check that \ref{ass:intrinsic} is satisfied if $\mcl X$ is contained in the $(\eps_0/2)$-enlargement of a set $A\subset\mathbb R^d$ with $\ol{\dim}_M(A)<k/(1+\log 2)$ for sufficiently small $\eps_0>0$ (see also \cite[Lemma 3]{HWC26}). 
We have $\ol{\dim}_M(A)\leq m$ for some $m\in\mathbb N$ when $A=g(B)$ for some bounded $B\subset\mathbb R^m$ and Lipschitz map $g:B\to\mathbb R^n$. This directly follows from definition once we note that $\ol{\dim}_M(B)\leq m$ for any bounded subset $B\subset\mathbb R^m$ by Eq.(4.17) in \cite{Ve25}. 
More precisely, if the diameter of $B$ is bounded by $D\geq1$ and the Lipschitz constant of $g$ is bounded by $L\geq1$, then $\mcl N(A,\eps)\leq\eps^{-m-\log(3DL)}$ for all $0<\eps<e^{-1}$. 
\citet{HWC26} give more examples satisfying \ref{ass:intrinsic}.

\noindent(c) \ref{ass:mom} is weaker than Assumption 2 in \cite{HWC26} because we always have $\E[|X_1-\E[X_1]|^4]\leq(2\sup_{x\in\mcl X}|x|)^4$. There exist situations where the left hand side of this inequality is substantially smaller than the right hand side. For instance, let $a\in\mathbb R^d$ and assume $\mcl X=\{0,a\}$, i.e., $\mu$ is a two-points distribution. Then, with $p:=P(X_1=a)$, we have $\sup_{x\in\mcl X}|x|=|a|$ while $\E[|X_1-\E[X_1]|^4]=p(1-p)\{p^3+(1-p)^3\}|a|^4$. 
Therefore, $\E[|X_1-\E[X_1]|^4]$ is arbitrarily smaller than $\sup_{x\in\mcl X}|x|$ as $p$ is closer to 0 or 1.
\end{rmk}

\begin{theorem}\label{thm:ada-m}
Assume \ref{ass:sampling}--\ref{ass:mom} and $t_0\leq1/2$.
There exists a universal constant $C>0$ such that
\ben{\label{eq:ada-m}
H(P^{X_{t_N}}\mid P^{\ada_{t_N}})
\leq t_0\E[|X_1|^2]
+4\kappa\E[|X_1|^2]
+C\kappa L\log\bra{\frac{1-t_0}{\delta}}
+\eps_\mathrm{score}^2,
}
where $L:=k\log(1/\eps_0)+\log (R/\delta)$. If we further assume
\ben{\label{eq:sampling-r}
h_i\leq6\kappa t_i\qquad(i=0,1,\dots,N-1),
}
then $4\kappa\E[|X_1|^2]$ in \eqref{eq:ada-m} can be replaced by $C\kappa L\log(1/t_0)$. 
\end{theorem}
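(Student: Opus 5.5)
We follow the route of \cref{thm:ada}, replacing only the step that produces the dimension-dependent term. By the chain rule for relative entropy (data processing, \cref{lem:dpi}) and Girsanov's theorem, we compare the path law of $X$ on $[0,t_N]$ with that of the interpolated scheme \eqref{eq:sde-ada} driven by $\score$. The initial term is $H(P^{X_{t_0}}\mid N(0,t_0(1-t_0)I_d))$, where $X_{t_0}=t_0X_1+\beta_{t_0}$. By convexity of relative entropy in the mixing measure, it is at most $\E[|t_0X_1|^2]/(2t_0(1-t_0))\le t_0\E[|X_1|^2]$ since $t_0\le1/2$. This is the source of the first term, and it is why the variance $t_0(1-t_0)$ avoids any $d$. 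The Girsanov term splits, via $(a+b)^2\le(1+\theta)a^2+(1+\theta^{-1})b^2$ or directly as in \cref{thm:ada}, into the score error $\eps_\mathrm{score}^2$ (using $m(t,x)=x/t+(1-t)\nabla\log p_t(x)$ and $(1-t)/(1-t_i)\le1$) and the discretization error
\[
\sum_{i}\int_{t_i}^{t_{i+1}}\frac{\E|m(t,X_t)-m(t_i,X_{t_i})|^2}{(1-t)^2}dt,
\]
together with the interval $[0,t_0]$ where $m(0,\cdot)=0$. That last piece contributes $\int_0^{t_0}\E|m(t,X_t)|^2dt/(1-t)^2\lesssim t_0\E|X_1|^2$. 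In \cref{thm:ada} this is presumably absorbed into the constants, and here we expect to pay $t_0\E[|X_1|^2]$ or fold it into $4\kappa\E[|X_1|^2]$.

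Since $m(t,X_t)=\E[X_1\mid\mcl F^X_t]$ is a martingale (\cref{lem:sl}, \cref{prop:markov}), we have $\E|m(t,X_t)-m(t_i,X_{t_i})|^2=\E\trace\Gamma_t^{\mathrm{cov}}$-increments. Concretely, this equals $\E[\trace\covariance(X_1\mid X_{t_i})]-\E[\trace\covariance(X_1\mid X_t)]$. Write $g(t):=\E\trace\covariance[X_1\mid X_t]$, which is nonincreasing. Bounding $(1-t)^{-2}\le(1-t_{i+1})^{-2}$ and applying \eqref{eq:sampling}, the error is at most $\kappa\sum_i (g(t_i)-g(t_{i+1}))/(1-t_{i+1})$. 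Summation by parts turns this into roughly $\kappa\int_{t_0}^{t_N} g(t)(1-t)^{-2}dt$ plus boundary terms, with $g(t_0)\le\E|X_1-\E X_1|^2$ controlled; the factor $4\kappa\E|X_1|^2$ comes from handling the crude first step. It therefore suffices to prove the key estimate
\[
g(t)\le C(1-t)L\quad\text{for }t\in[t_0,t_N],
\]
since integrating $(1-t)^{-1}$ then gives $C\kappa L\log((1-t_0)/\delta)$. This replaces the trivial bound $g(t)\le d(1-t)$, which follows from \eqref{eq:v2} and was used in \cref{thm:ada}.

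The main obstacle is this posterior-covariance bound. We plan to prove it in the style of \cite{HWC26}. Fix an $\eps_0$-net $\{x_j\}$ of $\mcl X$ with at most $\eps_0^{-k}$ points. Under $\mu_{t,x}$ the posterior weights are proportional to $\exp(-|ty-X_t|^2/(2t(1-t)))$, with $X_t=tX_1+\sqrt{t(1-t)}Z'$. A union bound over the net, using Gaussian tails of $Z'\cdot(x_j-X_1)$, shows that with high probability all posterior mass sits within distance $r\approx\sqrt{(1-t)L/t}+\eps_0$ of $X_1$. On this event we get $\trace\covariance\le\E_{\mu_{t,X_t}}|Y-X_1|^2\lesssim (1-t)L/t$. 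Fluctuations within an $\eps_0$-ball are controlled by the Gaussian-width condition \eqref{eq:gw} together with $\eps_0\le\sqrt\delta\log(1/\eps_0)\le\sqrt{1-t}\log(1/\eps_0)$. On the complementary event, of probability $\lesssim\delta^2/R^2$-type, we use Cauchy--Schwarz with \ref{ass:mom} to get a contribution $\le R^2\sqrt{P}$, which is $O(1-t)$ thanks to the $\log(R/\delta)$ in $L$. The result carries a factor $1/t$, which is harmless for $t\ge1/2$ but problematic for small $t$.

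For $t<1/2$, the default is to use $g(t)\le\E|X_1-\E X_1|^2$, which through the summation above yields the $4\kappa\E|X_1|^2$ term. Under \eqref{eq:sampling-r} we instead keep the $1/t$ and bound the small-$t$ part by $\sum_i h_i L/(t_{i}\cdot)$. This gives $\kappa\sum_i h_iL/t_i\lesssim\kappa L\log(1/t_0)$, which explains the replacement by $C\kappa L\log(1/t_0)$.
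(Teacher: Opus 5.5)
Your architecture is the paper's: the Girsanov decomposition of \cref{thm:ada}, the identity $\E|m(t,X_t)-m(t_i,X_{t_i})|^2=A_{t_i}-A_t$ with $A_t=\E[\trace(\covariance[X_1\mid X_t])]$ (your $g$), a split at $t=1/2$, and the bound $A_t\lesssim L(1-t)/t$ of \cref{lem:hwc}. However, the refinement under \eqref{eq:sampling-r} does not follow from your bookkeeping.

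You apply \eqref{eq:sampling} on every interval, so the weight in front of $A_{t_i}-A_{t_{i+1}}$ becomes $\kappa/(1-t_{i+1})\ge\kappa$. Your intermediate upper bound is therefore already at least $\kappa(A_{t_0}-A_{t_j})$, where $t_j$ is the last grid point $\le 1/2$. Its Abel boundary term $\kappa A_{t_0}/(1-t_1)$, which you correctly identify as the source of $4\kappa\E|X_1|^2$, can only be bounded by $\kappa\min\{\E|X_1|^2,\,CL/t_0\}$, never by $\kappa L\log(1/t_0)$. The loss is genuine. Take a symmetric two-point law with separation $D$ and $t_0=D^{-3}$: then $A_{t_0}-A_{1/2}\asymp D^2$, while $L\log(1/t_0)=O((\log D)^2)$.

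The paper's route is to keep the exact weight $\int_{t_i}^{t_{i+1}}(1-t)^{-2}dt=h_i/((1-t_i)(1-t_{i+1}))$, which is $O(h_i)$ for $t_i\le1/2$, and only then invoke \eqref{eq:sampling-r} to bound it by $24\kappa t_i$. Abel summation then gives $24\kappa\bigl(t_0A_{t_0}+\sum_{i<i_0}A_{t_{i+1}}h_i\bigr)$. Here the factor $t_0$ cancels the $1/t_0$ in $A_{t_0}\lesssim L/t_0$, and the sum is $\lesssim\kappa L\log(1/t_0)$. Since the weight vanishes like $t$ near $0$, the singularity of the covariance bound is harmless.

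The same exact weight gives $\kappa/(1-t_i)$ for $t_i>1/2$ and the constant $4$ in the first claim. Your cruder $(1-t)^{-2}\le(1-t_{i+1})^{-2}$ introduces extra $(1+\kappa)$ factors. Also, count $[0,t_0]$ only once: use either the initial entropy term at $t_0$ or the Girsanov integral over $[0,t_0]$. Each is $\le t_0\E|X_1|^2/(2(1-t_0))$, and neither can be folded into a $\kappa$-term.

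Your sketch of the posterior covariance bound also misses two ingredients. First, $\mu_{t,X_t}$ is equivalent to $\mu$, so the posterior never puts all its mass near $X_1$. The correct statement (\cref{lem:hwc-lem11}) is that, off an event of probability $2\rho$, the set $\mcl E_t$ has posterior mass at most $e^{-3C_0L_\rho}\le\rho$. Because \ref{ass:mom} gives only fourth moments, the far contribution $\int_{\mcl E_t}|y-X_1|^2\mu_{t,X_t}(dy)$ is not pointwise small even on the good event. The paper handles it, together with the bad event, by Cauchy--Schwarz against $\E[\mu_{t,X_t}(\mcl E_t)]\le3\rho$, using $\E\int|y-\E X_1|^4\mu_{t,X_t}(dy)=\E|X_1-\E X_1|^4\le R^4$. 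It takes $\rho=\delta^2/R^4$ (not $\delta^2/R^2$), so that $R^2\sqrt\rho=\delta$.

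Second, concentration of the posterior needs a lower bound on the normalising constant, and a union bound over the Gaussian fluctuations $Z'\cdot(x_j-X_1)$ only controls the numerator. The paper restricts to the event $X_1\in\bigcup_{i\in\mcl I}\mcl B_i$, where each such ball has $\mu$-mass at least $\rho/m$ (\cref{lem:prob-lb}). On this event the denominator is at least $\mu(\mcl B_{i^*})e^{-5C_0L_\rho/2}\ge e^{-3C_0L_\rho}$. Without it, if $X_1$ lands where $\mu$ has tiny mass, nothing forces the posterior to concentrate near $X_1$.
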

This result is proven in \cref{proof:ada-m}. 
Observe that 
$
h_i/t_i\leq 2e^{\tau_{i+1}-\tau_i}(\tau_{i+1}-\tau_i)
$
for every $i$. Hence \eqref{eq:sampling-r} follows from \eqref{eq:tau-samp} if $\kappa\leq1$. 
To compare our bound to \cite{HWC26}'s one, we focus on the case that the order of $R,\delta,d$ and $\eps_0$ are controlled by $k$. 
Noting \cref{rmk:bddd} and \eqref{gw-naive}, we have the following corollary: 
\begin{corollary}\label{coro:hwc}
Assume \ref{ass:sampling}--\ref{ass:mom}, $t_0\leq1/2$ and \eqref{eq:sampling-r}. 
Moreover, suppose that there exists constants $c_1,c_2\geq1$ such that $R\leq k^{c_1k},\delta\geq k^{-c_2},d\leq k^{c_2}$ and $\eps_0= k^{1-c_2}$. 
Then we have
\[
H(P^{X_{t_N}}\mid P^{\ada_{t_N}})
\leq t_0\E[|X_1|^2]
+C_1\kappa\min\cbra{\E[|X_1|^2],(k\log k)\log(1/t_0)}
+C_1\kappa^2Nk\log k
+\eps_\mathrm{score}^2,
\]
where $C_1>0$ is a constant depending only on $c_1$ and $c_2$. 
\end{corollary}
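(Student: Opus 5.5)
The corollary follows from \cref{thm:ada-m}: I will bound $L$ by a multiple of $k\log k$ and then convert the factors $\log((1-t_0)/\delta)$ into $\kappa N$.

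First, under the stated scaling, $\log(1/\eps_0)=(c_2-1)\log k$. This gives $k\log(1/\eps_0)\le c_2k\log k$. Also $\log(R/\delta)\le c_1k\log k+c_2\log k$. Hence $L\le C'k\log k$ with $C'$ depending only on $c_1,c_2$, using $k\ge1$ (and treating $k\ge2$ separately; for $k=1$ the constants can be absorbed or the assumptions force trivialities, since $\eps_0=1$ would violate $\eps_0\le e^{-1}$, so in fact $k\ge2$ is implicit when $c_2>1$).

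Second, I must check that \ref{ass:intrinsic} holds in the stated form. Only the Gaussian width condition \eqref{eq:gw} needs attention; the covering condition and $\eps_0\le\sqrt\delta\log(1/\eps_0)$ are part of the hypothesis \ref{ass:intrinsic}, which is assumed. So nothing extra is needed. The remark with \eqref{gw-naive} just indicates consistency: $\eps_0\sqrt d\le k^{1-c_2}k^{c_2/2}$, which is at most $\sqrt\delta\,k\log(1/\eps_0)$ up to the constant orders.

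Third, I apply \cref{thm:ada-m} twice. The first application uses the term $4\kappa\E[|X_1|^2]$. The second uses the variant under \eqref{eq:sampling-r}, where that term becomes $C\kappa L\log(1/t_0)\le CC'\kappa(k\log k)\log(1/t_0)$. Taking the better of the two bounds yields the $\min$.

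Finally, the term $C\kappa L\log((1-t_0)/\delta)$ is handled as in \cref{rmk:bddd}. Summing \eqref{eq:sampling} gives $(\kappa N+1)\delta\ge1-t_0$, hence $\log((1-t_0)/\delta)\le\kappa N$. This turns the term into $CC'\kappa^2Nk\log k$. Setting $C_1$ as the maximum of the resulting constants completes the proof.

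The main (minor) obstacle is the bookkeeping that $L\lesssim k\log k$ uniformly with constants depending only on $c_1,c_2$, including the small-$k$ edge cases.
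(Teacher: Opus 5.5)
Your overall route matches the paper's: apply \cref{thm:ada-m} in both variants to get the $\min$, bound $L\le(c_1+2c_2)k\log k$, and convert $\log((1-t_0)/\delta)$ into $\kappa N$. The gap is in the justification of that last conversion.

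Summing \eqref{eq:sampling} does \emph{not} give $(\kappa N+1)\delta\ge1-t_0$. It gives $1-t_0=\delta+\sum_i h_i\le\delta+\kappa\sum_i(1-t_{i+1})$, and since $1-t_{i+1}\ge1-t_N=\delta$, this points the wrong way for a bound by $\kappa N\delta$. The claim actually fails for the geometric schedule \eqref{eq:geom}. There $\delta=(1+\kappa)^{-N}(1-t_0)$, so $(\kappa N+1)\delta=(1+\kappa N)(1+\kappa)^{-N}(1-t_0)<1-t_0$ whenever $N\ge2$ and $\kappa>0$. Likewise $(1-t_0)/\delta-1=(1+\kappa)^N-1$ can be far larger than $\kappa N$. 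The same faulty intermediate inequality appears in \cref{rmk:bddd}, so citing that remark does not close the step.

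The inequality you need, $\log((1-t_0)/\delta)\le\kappa N$, is nevertheless true, by a multiplicative rather than additive argument. Condition \eqref{eq:sampling} says $1-t_i=(1-t_{i+1})+h_i\le(1+\kappa)(1-t_{i+1})$. Iterating over $i=0,\dots,N-1$ gives $1-t_0\le(1+\kappa)^N\delta$, hence $\log((1-t_0)/\delta)\le N\log(1+\kappa)\le\kappa N$. With this replacement the rest of your argument goes through and yields the stated bound.

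A minor point: your aside that $k\ge2$ is implicit is not accurate. The hypotheses $\eps_0=k^{1-c_2}\le e^{-1}$ and $3\le R\le k^{c_1k}$ force $c_2>1$ and $k>1$, but for large $c_1,c_2$ they allow $k$ close to $1$. No case split is needed, though: $k\log(1/\eps_0)$, $\log R$ and $\log(1/\delta)$ are each bounded by a constant times $k\log k$ for every $k\ge1$.
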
 
\begin{rmk}[Comparison to \citet{HWC26}]\label{rmk:hwc}
Under similar assumptions to \cref{coro:hwc}, Theorem 2 in \cite{HWC26} gives the same bound as in \cref{coro:hwc} but with an additional term $t_0d$ up to a constant. 
This improvement comes from the difference in their initialization steps. 
In fact, the standard initialization corresponds to using the Euler--Maruyama scheme to discretize the ``augmented'' process $(X_t)_{t\in[0,t_0]}$ in the F\"ollmer process, causing the $t_0\bra{d+\E[|X_1|^2]}$ error. 
We also remark that our assumption on $R$ is weaker than that of \cite{HWC26}. 
In fact, the latter assumes that $R$ is an upper bound of the diameter of $\mcl X$ and of polynomial growth in $k$. 
\end{rmk}

The following corollary states the bound in terms of the Minkowski dimension of $\mcl X$ and completely avoids any growth conditions on $d$. 
\begin{corollary}\label{coro:dim-free}
Assume \ref{ass:sampling}--\ref{ass:score}, \ref{ass:mom} and $t_0\leq1/2$. 
Also, assume that there exist constants $k,c\geq1$ such that $\mcl N(\mcl X,\eps)\leq c\eps^{-k}$ for all $0<\eps\leq e^{-1}$ and $R\leq c\delta^{-k}$. Then we have
\ben{\label{eq:dim-free}
H(P^{X_{t_N}}\mid P^{\ada_{t_N}})
\leq t_0\E[|X_1|^2]
+4\kappa\E[|X_1|^2]
+C_2\kappa k\log(1/\delta)\log\bra{\frac{1-t_0}{\delta}}
+\eps_\mathrm{score}^2,
}
where $C_2$ is a constant depending only on $c$. 
If we further assume \eqref{eq:sampling-r}, then $4\kappa\E[|X_1|^2]$ in \eqref{eq:dim-free} can be replaced by $C_2\kappa k\log(1/\delta)\log(1/t_0)$. 
\end{corollary}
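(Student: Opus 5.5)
The plan is to derive \cref{coro:dim-free} directly from \cref{thm:ada-m}. We will choose $\eps_0$ as a suitable power of $\delta$ so that \ref{ass:intrinsic} holds with intrinsic dimension $k'\asymp k$. We then check that $L=k'\log(1/\eps_0)+\log(R/\delta)$ is $O(k\log(1/\delta))$ with a constant depending only on $c$.

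First, observe that we may assume $\delta$ is small, say $\delta\leq\delta_c$ for a constant $\delta_c$ depending only on $c$. Otherwise $\log((1-t_0)/\delta)\lesssim 1$, and we still need a bound, so we run the same argument with $\eps_0:=\min\{\delta,e^{-1}\}$ and absorb the constants into $C_2$. The case $\delta\geq1$ cannot occur since $t_N>0$.

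The main choice is $\eps_0:=\delta/c'$ for some $c'\geq e$ depending on $c$.
\begin{itemize}
\item The covering condition gives $\mcl N(\mcl X,\eps_0)\leq c\eps_0^{-k}\leq\eps_0^{-k'}$ with $k':=k+\log c$. This uses $\eps_0\leq e^{-1}$, so that $c=e^{\log c}\leq\eps_0^{-\log c}$.
\item The condition $\eps_0\leq\sqrt\delta\log(1/\eps_0)$ holds trivially, since $\eps_0\leq\delta\leq\sqrt\delta$ and $\log(1/\eps_0)\geq1$.
\end{itemize}

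The genuine obstacle is the Gaussian width condition \eqref{eq:gw}, since the corollary is meant to avoid any dependence on $d$; the naive bound \eqref{gw-naive} introduces $\sqrt d$. To bypass it, I would bound the Gaussian width by Dudley's entropy integral using only the covering numbers of $\mcl X$. The set $T:=\{x-y:x,y\in\mcl X,|x-y|\leq\eps_0\}$ has diameter at most $2\eps_0$, and $\mcl N(T,2\eps)\leq\mcl N(\mcl X,\eps)^2$. This gives
\[
\E\Bigl[\sup_{u\in T}|u\cdot Z|\Bigr]\lesssim\int_0^{2\eps_0}\sqrt{\log\mcl N(T,\eps)}\,d\eps\lesssim\int_0^{\eps_0}\sqrt{k'\log(1/\eps)}\,d\eps\lesssim\eps_0\sqrt{k'\log(1/\eps_0)}.
\]
The absolute value costs at most an extra $\E|u_0\cdot Z|$ for a fixed $u_0\in T$ such as $u_0=0$, so it is harmless. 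Since $\eps_0\leq\sqrt\delta$, this quantity is at most $\sqrt\delta k'\log(1/\eps_0)$ once the universal Dudley constant is absorbed, for instance by enlarging $k'$ by a universal factor or by decreasing $\eps_0$ by a universal factor. One should make sure that this enlargement is consistent with the covering condition; enlarging $k'$ only weakens that condition, so it is fine. Thus \ref{ass:intrinsic} holds with $(\eps_0,k')$ and $k'\leq C(c)k$.

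Finally we apply \cref{thm:ada-m}. We have $L=k'\log(c'/\delta)+\log(R/\delta)$. Using $R\leq c\delta^{-k}$ gives $\log(R/\delta)\leq\log c+(k+1)\log(1/\delta)$. Both terms are therefore at most $C(c)\,k\log(1/\delta)$ when $\delta\leq\delta_c<1$, so that $\log(1/\delta)\gtrsim1$ absorbs the additive constants. Substituting this into \eqref{eq:ada-m} gives \eqref{eq:dim-free}, and under \eqref{eq:sampling-r} the replacement term $C\kappa L\log(1/t_0)$ becomes $C_2\kappa k\log(1/\delta)\log(1/t_0)$. \ref{ass:mom} is assumed directly.

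The one subtle point is when $\delta$ is not small, so that $\log(1/\delta)$ could be tiny or zero. I would handle it by noting that $\delta\leq1-t_0\leq1$, and by treating $\delta>1/e$ separately. In that case I would choose $\eps_0=e^{-1}/c'$, so that $L\lesssim_c k$, and bound the result using $\log((1-t_0)/\delta)\leq 1$. If this still fails to match the form $k\log(1/\delta)$ when $\delta$ is near $1$, I would check whether the statement implicitly takes $\delta$ bounded away from $1$, for instance $\delta\leq e^{-1}$; this is the typical regime, where $1-t_0\geq1/2$ and early stopping is small.
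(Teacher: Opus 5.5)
Your proposal is essentially the paper's proof. The paper likewise verifies the Gaussian-width condition \eqref{eq:gw} by Dudley's entropy integral using only $\mcl N(\mcl X,\eps)\le c\eps^{-k}$. It obtains \ref{ass:intrinsic} with $k$ replaced by $k+\log c$ and $\eps_0=\min\{\sqrt\delta/C',e^{-1}\}$, then reads the bound off \cref{thm:ada-m}; your $\eps_0=\delta/c'$ works equally well, since both choices give $\log(1/\eps_0)\lesssim_c 1+\log(1/\delta)$.

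The paper also passes over the regime $\delta$ close to $1$ that you flag. For \eqref{eq:dim-free} this regime is harmless. If $t_N\le 1/2$, the sum $II_2$ in the proof of \cref{thm:ada-m} is empty, so no $L$-term appears. If $t_N>1/2$, then $\log(1/\delta)>\log 2$, which absorbs the additive constants.
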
 

\begin{proof}
By Dudley's maximal inequality (e.g.~Theorem 2.2.4 in \cite{VaWe23}), there exists a universal constant $C>0$ such that
\[
\E\sbra{\sup_{x,y\in\mcl X:|x-y|\leq\eps_0}\abs{(x-y)\cdot Z}}\leq C\int_0^{\eps_0}\sqrt{\log\mcl N(\mcl X,\eps/2)}d\eps
\]
for any $\eps_0>0$. Therefore, for any $0<\eps_0\leq e^{-1}$, 
\ba{
\E\sbra{\sup_{x,y\in\mcl X:|x-y|\leq\eps_0}\abs{(x-y)\cdot Z}}
&\leq C\bra{\eps_0\sqrt{\log c}+\int_0^{\eps_0/2}\sqrt{k\log(1/\eps)} d\eps}
\leq C'\sqrt k\eps_0\sqrt{\log(1/\eps_0)},
}
where $C'\geq1$ is a constant depending only on $c$. 
Hence \ref{ass:intrinsic} is satisfied with $\eps_0=\min\{\sqrt{\delta}/C',e^{-1}\}$ and $k$ replaced by $k+\log c$. 
Consequently, the desired result follows from \cref{thm:ada-m}. 
\end{proof}

\appendix

\section{Additional proofs}

\subsection{Proof of Proposition \ref{prop:f-rc}}\label{proof:f-rc}

\begin{lemma}\label{lem:f-rc}
Let $g:\mathbb R^d\to\mathbb R$ be a bounded Borel function and $\nu$ a probability distribution on $\mathbb R^d$. 
Then, for any $s>0$,
\[
\int_{\mathbb R^d}g(y)e^{\frac{X_r\cdot y}{s-r}-\frac{r|y|^2}{2s(s-r)}}\nu(dy)\to^p\int_{\mathbb R^d}g(y)\nu(dy)\quad(r\downarrow0).
\]
\end{lemma}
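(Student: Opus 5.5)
We show that the integrand converges to $g(y)$ pointwise in probability and then pass to the limit. Since $X_r\to X_0=0$ a.s.\ as $r\downarrow0$ by continuity of $X$, for each fixed $y$ the exponent $\frac{X_r\cdot y}{s-r}-\frac{r|y|^2}{2s(s-r)}$ tends to $0$ a.s. The difficulty is that the exponential factor is not uniformly bounded in $y$: for large $|y|$ the linear term $X_r\cdot y/(s-r)$ can dominate. We therefore use the negative quadratic term to get domination, and control the random linear term through the size of $X_r$.

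Concretely, complete the square. Writing $a_r:=X_r/(s-r)$ and $b_r:=r/(s(s-r))$, the exponent equals $a_r\cdot y-\frac{b_r}{2}|y|^2$. This is at most $|a_r||y|$, and it is also at most $\frac{|a_r|^2}{2b_r}=\frac{s|X_r|^2}{2r(s-r)}$. Now $X_r=rX_1+\beta_r$ with $\beta_r\sim N(0,r(1-r)I_d)$ by \cref{prop:basic}. Hence $|X_r|^2/r=O_p(1)$ as $r\downarrow0$ (indeed $r|X_1|^2\to0$ a.s.\ and $|\beta_r|^2/r\sim(1-r)\chi^2_d$). So $\sup_y e^{a_r\cdot y-b_r|y|^2/2}\le e^{s|X_r|^2/(2r(s-r))}$ is bounded in probability. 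This is not yet enough for convergence, since boundedness alone does not give a limit.

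The plan is to fix $\eta>0$ and pick $M$ with $\nu(|y|>M)<\eta$. On $\{|y|\le M\}$ we have $\sup_{|y|\le M}|e^{\text{exponent}}-1|\to0$ a.s., because $|a_r|M+b_rM^2/2\to0$; hence that part of the integral converges a.s.\ to $\int_{|y|\le M}g\,d\nu$. The tail part is bounded by $\|g\|_\infty\,\eta\, e^{s|X_r|^2/(2r(s-r))}$. Given $\epsilon>0$, choose $K$ with $\limsup_{r}P(s|X_r|^2/(2r(s-r))>K)<\epsilon$; then the tail is at most $\|g\|_\infty\eta e^K$ off a set of probability below $\epsilon$. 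Letting $\eta\downarrow0$ with $K$ fixed gives convergence in probability.

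The main obstacle is this tail control: we must make sure the $O_p(1)$ bound on $|X_r|^2/r$ holds. It does, because the $rX_1$ contribution gives $r|X_1|^2\to0$ regardless of the moments of $\mu$, so no integrability of $\mu$ is needed. Everything else is routine.
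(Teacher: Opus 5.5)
Your proof is correct and is essentially the paper's argument: both use the completing-the-square bound $\frac{X_r\cdot y}{s-r}-\frac{r|y|^2}{2s(s-r)}\le\frac{s|X_r|^2}{2r(s-r)}$, combined with tightness of $|X_r|/\sqrt r$ obtained from $X_r=rX_1+\beta_r$. The one difference is minor. The paper works on the event $\{|X_r|\le K\sqrt r\}$, where the integrand is uniformly bounded by $\|g\|_\infty e^{K^2}$, and applies bounded convergence in $y$ directly; boundedness plus pointwise convergence does suffice there. You instead truncate at $|y|\le M$ and use uniform convergence on that ball, which amounts to a hands-on version of the same dominated-convergence step.
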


\begin{proof}
Fix a constant $K>0$ arbitrarily. For any $r\in(0,s/2)$, we have on the event $\{|X_r|\leq K\sqrt r\}$
\ba{
\frac{X_r\cdot y}{s-r}-\frac{r|y|^2}{2s(s-r)}
\leq \frac{K\sqrt r|y|}{s-r}-\frac{r|y|^2}{2s(s-r)}
=\frac{-(\sqrt r|y|-sK)^2+s^2K^2}{2s(s-r)}
\leq K^2.
}
Therefore, by the bounded convergence theorem,
\[
1_{\{|X_r|\leq K\sqrt r\}}\int_{\mathbb R^d}g(y)\bra{e^{\frac{X_r\cdot y}{s-r}-\frac{r|y|^2}{2s(s-r)}}-1}\nu(dy)\to0\quad(r\downarrow0)\quad\text{a.s.}
\]
Meanwhile, observe that
\[
P(|X_r|> K\sqrt r)\leq P(r|X_1|>K\sqrt r/2)+P(|\beta_r|> K\sqrt r/2).
\]
By the Markov inequality, $P(|\beta_r|> K\sqrt r/2)\leq\frac{4}{K^2r}\E[|\beta_r|^2]\leq\frac{4}{K^2}$. Hence,
\[
\limsup_{r\downarrow0}P(|X_r|> K\sqrt r)\leq\frac{4}{K^2}.
\]
Consequently, for any $\eps>0$,
\ba{
\limsup_{r\downarrow0}P\bra{\abs{\int_{\mathbb R^d}g(y)e^{\frac{X_r\cdot y}{s-r}-\frac{r|y|^2}{2s(s-r)}}\nu(dy)-\int_{\mathbb R^d}g(y)\nu(dy)}>\eps}
\leq\frac{4}{K^2}.
}
Letting $K\to\infty$, we obtain the desired result. 
\end{proof}

\begin{proof}[\bf\upshape Proof of \cref{prop:f-rc}]
We follow the same proof strategy as \cite[Proposition III.2.10]{ReYo99}. 
It suffices to prove 
\ben{\label{frc:aim}
\E\sbra{\xi\mid\mcl F_t^X}=\E\sbra{\xi\mid\mcl  F_{t+}^X}
}
for any $t\in[0,1)$ and any $\sigma(X_s:s\in[0,1])$-measurable bounded random variable $\xi$. 
By the monotone class theorem, we may assume that $\xi$ is of the form $\xi=\prod_{i=1}^ng_i(X_{s_i})$ for some $0=:s_0<s_1<\cdots<s_n\leq1$ and bounded Borel functions $g_i:\mathbb R^d\to\mathbb R$, $i=1,\dots,n$. 
Let $k$ be the unique index such that $s_{k-1}\leq t<s_k$. 
We separately consider the following two cases:

\textbf{Case 1:} $t>0$. 
By \cref{prop:markov}, we have for any $r\in[t,s_k)$
\ben{\label{eq:markov}
\E\sbra{\xi\mid\mcl F_{r}^X}
=\bra{\prod_{i=1}^{k-1}g_i(X_{s_i})}G_r(X_{r}),
}
where
\[
G_r(x)=\int g_k(x_k)P^X_{r,s_k}(x,dx_k)\int g_{k+1}(x_{k+1})P^X_{s_k,s_{k+1}}(x_k,dx_{k+1})\cdots\int g_n(x_{n})P^X_{s_{n-1},s_{n}}(x_{n-1},dx_{n}).
\]
From the expression \eqref{eq:trans}, we can easily see that, as $r\downarrow t$, $G_r$ converges uniformly on every compact set of $\mathbb R^d$ to $G_t$. Moreover, $X_{r}\to X_t$ as $r\downarrow t$ a.s.~by continuity. 
Hence $G_r(X_{r})\to G_t(X_t)$ as $r\downarrow t$ a.s. 
Therefore, $\E\sbra{\xi\mid\mcl F_{r}^X}\to\E\sbra{\xi\mid\mcl F_{t}^X}$ as $r\downarrow t$ a.s. 
Meanwhile, $\E\sbra{\xi\mid\mcl F_{r}^X}\to\E\sbra{\xi\mid\mcl F_{t+}^X}$ as $r\downarrow t$ a.s.~by \cite[Corollary III.2.4]{ReYo99}. Hence we obtain \eqref{frc:aim}. 

\textbf{Case 2:} $t=0$. 
In this case, we have $k=1$ and \eqref{eq:markov} still holds for $r\in(0,s_1)$. 
Observe that
\ba{
\frac{\phi_{\frac{r}{s_1}(s_1-r)I_d}(x-\frac{r}{s_1}y)}{p_{r}(x)}
&=\bra{\frac{1-r}{1-r/s_1}}^{d/2}e^{-\frac{1-s_1}{2(s_1-r)(1-r)}|x|^2}\frac{e^{\frac{x\cdot y}{s_1-r}-\frac{r|y|^2}{2s_1(s_1-r)}}}{\int e^{\frac{x\cdot z}{1-r}-\frac{r|z|^2}{2(1-r)}}\mu(dz)}.
}
By this expression and \cref{lem:f-rc}, 
\ba{
G_r(X_r)\to^p\int g_1(x_1)P^{X_{s_1}}(dx_1)\int g_{2}(x_{2})P^X_{s_1,s_{2}}(x_1,dx_{2})\cdots\int g_n(x_{n})P^X_{s_{n-1},s_{n}}(x_{n-1},dx_{n})
}
as $r\downarrow 0$. The last expression is equal to $\E[\xi]$. 
Hence, \eqref{frc:aim} follows from \eqref{eq:markov} and \cite[Corollary III.2.4]{ReYo99}.
\end{proof}

\subsection{Proof of Theorem \ref{thm:weak}}\label{proof:weak}

We need two auxiliary results. 
The first one is the so-called \textit{Tweedie's formula} \cite{Ef11,Ef24} that expresses scores as conditional expectations. 
\begin{lemma}[Tweedie's formula]
Let $Y$ and $Z$ be independent random vectors in $\mathbb R^d$ such that $Z\sim N(0,\Sigma)$ for some positive definite symmetric matrix $\Sigma$. 
Then $Y+Z$ has a positive, smooth density $f_{Y+Z}$ and
\[
\nabla\log f_{Y+Z}(Y+Z)=-\E[\Sigma^{-1}Z\mid Y+Z].
\]
\end{lemma}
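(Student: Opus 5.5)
We will prove the formula by writing the density of $Y+Z$ explicitly and differentiating under the integral sign. Let $\nu$ denote the law of $Y$. Since $Y$ and $Z$ are independent, $Y+Z$ has the density
\[
f_{Y+Z}(x)=\int_{\mathbb R^d}\phi_\Sigma(x-y)\,\nu(dy),\qquad x\in\mathbb R^d .
\]
This density is strictly positive because $\phi_\Sigma>0$ everywhere. To see that it is smooth, note that every derivative $\partial^\alpha_x\phi_\Sigma(x-y)$ equals a polynomial in $x-y$ times $\phi_\Sigma(x-y)$, and such a function is bounded in $y$ uniformly for $x$ in a compact set. Since $\nu$ is a probability measure, dominated convergence then justifies differentiating under the integral sign to all orders.

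Next we compute the gradient. Using $\nabla\phi_\Sigma(z)=-\Sigma^{-1}z\,\phi_\Sigma(z)$, we get
\[
\nabla\log f_{Y+Z}(x)=-\frac{\int\Sigma^{-1}(x-y)\phi_\Sigma(x-y)\,\nu(dy)}{\int\phi_\Sigma(x-y)\,\nu(dy)} .
\]

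The main point is to identify the right-hand side as $-\E[\Sigma^{-1}Z\mid Y+Z=x]$ by Bayes' rule. The joint law of $(Y,Y+Z)$ is $\nu(dy)\,\phi_\Sigma(x-y)\,dx$. Hence the regular conditional law of $Y$ given $Y+Z=x$ is
\[
\frac{\phi_\Sigma(x-y)\,\nu(dy)}{f_{Y+Z}(x)} .
\]
Since $Z=(Y+Z)-Y$, this gives
\[
\E\bigl[\Sigma^{-1}Z\mid Y+Z=x\bigr]=\int\Sigma^{-1}(x-y)\,\frac{\phi_\Sigma(x-y)}{f_{Y+Z}(x)}\,\nu(dy),
\]
which is exactly minus the expression for $\nabla\log f_{Y+Z}(x)$ above.

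It remains to make sure the conditional expectation is meaningful. We have $\E|Z|<\infty$ because $Z$ is Gaussian, so $\E[\Sigma^{-1}Z\mid Y+Z]$ is well defined. Moreover, the integral above is finite for every $x$, by the same boundedness of $|z|\phi_\Sigma(z)$ used for smoothness. To verify the identification rigorously, we check $\E[\Sigma^{-1}Z\,g(Y+Z)]=\E[h(Y+Z)g(Y+Z)]$ for every bounded Borel $g$, where $h(x):=-\nabla\log f_{Y+Z}(x)$. This follows by writing both sides as double integrals against $\nu(dy)\,dx$ and applying Fubini's theorem. Evaluating the resulting identity $h(x)=\E[\Sigma^{-1}Z\mid Y+Z=x]$ at $x=Y+Z$ yields the claim.
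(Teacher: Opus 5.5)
Your proof is correct, but the key identification step differs from the paper's. The paper never writes down the posterior of $Y$ given $Y+Z$. It first bounds $\E[|\nabla\log f_{Y+Z}(Y+Z)|]=\int|\nabla f_{Y+Z}(x)|\,dx\le\int|\nabla\phi_\Sigma(x)|\,dx<\infty$. It then checks the defining identity $\E[\nabla\log f_{Y+Z}(Y+Z)g(Y+Z)]=-\E[\Sigma^{-1}Zg(Y+Z)]$ only for compactly supported smooth $g$. On the left it integrates by parts in $x$ ($\int\nabla f_{Y+Z}\,g\,dx=-\int f_{Y+Z}\nabla g\,dx$). On the right it applies the Gaussian Stein identity $\E[\Sigma^{-1}Zg(Y+Z)]=\E[\nabla g(Y+Z)]$ conditionally on $Y$. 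Both sides then reduce to $-\E[\nabla g(Y+Z)]$.

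You instead keep the derivative on $\phi_\Sigma$, use $\nabla\phi_\Sigma(z)=-\Sigma^{-1}z\,\phi_\Sigma(z)$, and match the two sides pointwise through the explicit posterior and Fubini. Your route is more elementary and gives slightly more:
\begin{itemize}
\item an explicit, everywhere-defined version of $x\mapsto\E[\Sigma^{-1}Z\mid Y+Z=x]$ as a posterior mean, the same Bayes-rule computation the paper uses for $\mu_{t,x}$;
\item no need for the standard but unstated fact that smooth compactly supported test functions suffice to identify a conditional expectation, because you test against all bounded Borel $g$.
\end{itemize}
The paper's route avoids disintegration and regular conditional laws entirely, packaging the argument as two integration-by-parts identities.

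One detail you should state explicitly: $h(Y+Z)$ must be integrable to be a version of the conditional expectation. This follows from the same Fubini bound, $\E[|h(Y+Z)|]=\int|\nabla f_{Y+Z}(x)|\,dx\le\E[|\Sigma^{-1}Z|]<\infty$.
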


\begin{proof}
It is well-known that $f_{Y+Z}(x)=\E[\phi_{\Sigma}(x-Y)]$ is the density of $Y+Z$. 
So the first claim follows. 
We also obtain
\[
\E[|\nabla\log f_{Y+Z}(Y+Z)|]
=\int_{\mathbb R^d}|\nabla f_{Y+Z}(x)|dx
\leq\E\sbra{\int_{\mathbb R^d}|\nabla \phi_{\Sigma}(x-Y)|dx}
=\int_{\mathbb R^d}|\nabla \phi_{\Sigma}(x)|dx<\infty.
\]
Hence the second claim follows once we show that
\[
\E[\nabla\log f_{Y+Z}(Y+Z)g(Y+Z)]=-\E[\Sigma^{-1}Zg(Y+Z)]
\] 
for any compactly supported smooth function $g:\mathbb R^d\to\mathbb R$. 
Integration by parts gives
\ba{
\E[\nabla\log f_{Y+Z}(Y+Z)g(Y+Z)]
=\int_{\mathbb R^d}\nabla f_{Y+Z}(x)g(x)dx
=-\int_{\mathbb R^d} f_{Y+Z}(x)\nabla g(x)dx.
}
Meanwhile, by the multivariate Stein identity
\[
\E[\Sigma^{-1}Zg(Y+Z)]=\E[\nabla g(Y+Z)]=\int_{\mathbb R^d} \nabla g(x)f_{Y+Z}(x)dx.
\]
Hence we obtain the desired result. 
\end{proof}

The second one is a consequence of the Markov property of $X$:
\begin{lemma}\label{lem:beta}
$\E[\beta_t\mid\mcl F_t^X]=\E[\beta_t\mid X_t]$ for all $t\in[0,1]$. 
\end{lemma}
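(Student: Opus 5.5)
The plan is to reduce the claim to the Markov property of $X$ (\cref{prop:markov}) by expressing $\beta_t$ through the time-reversed process, for which the Markov property gives a clean conditional-independence statement. The case $t=0$ is trivial: $\beta_0=0$ and both sides vanish. For $t=1$ we have $\beta_1=X_1-X_1=0$, so this case is trivial too. Fix $t\in(0,1)$.

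Since $\beta_t=X_t-tX_1$ and $X_t$ is $\mcl F^X_t$-measurable, it suffices to show
\[
\E[h(X_1)\mid\mcl F^X_t]=\E[h(X_1)\mid X_t]
\]
for bounded Borel $h$. I will then extend this to $h(x)=x$ by truncation, using conditional monotone/dominated convergence. Integrability is the issue here. $\beta_t$ is Gaussian, hence integrable, but $X_1$ need not be integrable. So rather than split off $X_1$, I will work directly with $\beta_t$, which is measurable with respect to $\sigma(X_t,X_1)$. For bounded $h$, the identity for $h(X_t,X_1)$ follows from the bounded Markov-type statement. For $\beta_t$ itself, I will apply it to the truncations $\beta_t 1_{\{|\beta_t|\le n\}}$, each of which is a bounded function of $(X_t,X_1)$, and let $n\to\infty$ using $\E|\beta_t|<\infty$ and conditional dominated convergence.

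For the key identity $\E[g(X_t,X_1)\mid\mcl F^X_t]=\E[g(X_t,X_1)\mid X_t]$ with $g$ bounded, I reuse the argument from the proof of \cref{prop:markov}. The time reversal $\tilde X$ is Markov, and by property (iii) of \cite[Section 1.1]{ChWa05}, $P(A\mid\mcl F^X_t)=P(A\mid X_t)$ for every $A\in\mcl F^{\tilde X}_{1-t}$. This $\sigma$-field contains $\sigma(X_s:s\ge t)$, in particular $X_1$ and $X_t$. A monotone class argument then upgrades the statement from indicators to bounded $\sigma(X_t,X_1)$-measurable functions. Alternatively, \cref{prop:markov} gives $P(X_1\in B\mid\mcl F^X_t)=P(X_1\in B\mid X_t)$ directly, and since $X_t$ is $\mcl F^X_t$-measurable, a monotone class argument over products $f(X_t)h(X_1)$ handles general bounded $g$.

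The only delicate point is the non-integrability of $X_1$, and the truncation on $\beta_t$ described above resolves it. I expect no real obstacle beyond this.
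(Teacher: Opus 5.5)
Your proposal is correct and is essentially the paper's argument. The paper also truncates to the bounded variables $\beta_t1_{\{|\beta_t|\le n\}}$, which are measurable with respect to $\sigma(X_s:s\in[t,1])$, and applies the Markov property of $X$ for bounded future functionals by citing property (iia) of Chung--Walsh directly, so it needs no separate monotone class step; it then concludes by dominated convergence using $\E[|\beta_t|]<\infty$, and your separate treatment of $t\in\{0,1\}$ is harmless but unnecessary since the same argument covers every $t$.
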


\begin{proof}
Define $\eta_n:=\beta_t1_{\{|\beta_t|\leq n\}}$ for each $n\in\mathbb N$. Since $X$ is a Markov process and $\eta_n$ is bounded and measurable with respect to $\sigma(X_s,s\in[t,1])$, we have $\E[\eta_n\mid\mcl F^X_t]=\E[\eta_n\mid X_t]$ by property (iia) in \cite[Section 1.1]{ChWa05}. 
Since $\eta_n\to\beta_t$ as $n\to\infty$, $|\eta_n|\leq|\beta_t|$ and $\E[|\beta_t|]<\infty$, we obtain the desired result by the dominated convergence theorem. 
\end{proof}

\begin{proof}[\bf\upshape Proof of \cref{thm:weak}]
Observe that
\[
\E\sbra{\int_0^1\frac{|\beta_t|}{1-t}dt}
\leq\sqrt d\int_0^1\frac{\sqrt t}{\sqrt{1-t}}dt<\infty.
\]
Hence we can define a process $B=(B_t)_{t\in[0,1]}$ as
\ben{\label{bb-bm}
B_t=\beta_t+\int_0^t\frac{\beta_s}{1-s}ds\quad(0\leq t\leq1).
}
Recall that $\beta$ is a unique strong solution to the following SDE:
\[
d\beta_t=\frac{-\beta_t}{1-t}dt+dW_t\quad(0\leq t<1),\qquad \beta_0=0.
\]
This fact and the continuity of $B$ at $t=1$ imply that $B$ is a standard Wiener process in $\mathbb R^d$ such that $\mathbf F^B=\mathbf F^\beta$.  

Now, by Tweedie's formula and \cref{lem:beta},
\ben{\label{eq:tweedie}
\nabla \log p_s(X_s)=-\frac{\E[\beta_s\mid X_s]}{s(1-s)}=-\frac{\E[\beta_s\mid \mcl F^X_s]}{s(1-s)}.
}
Hence
\ba{
W^X_t=tX_1+\beta_t-\int_0^t\bra{X_1+\frac{\beta_s}{s}-\frac{\E[\beta_s\mid \mcl F^X_s]}{\sqrt{s(1-s)}}}ds
=B_t-\int_0^t\bra{\frac{\beta_s}{s(1-s)}-\frac{\E[\beta_s\mid \mcl F^X_s]}{s(1-s)}}ds.
}
This expression shows that $\E[|W^X_t|]<\infty$. 
Moreover, for $0\leq s<t\leq1$, $\E[B_t-B_s\mid\mcl F^X_s]=\E[\E[B_t-B_s\mid\mcl F^\beta_s\vee\sigma(X_1)]\mid\mcl F^X_s]=0$ because $\beta$ is independent of $X_1$ and $B$ is a standard $\mathbf F^\beta$-Wiener process in $\mathbb R^d$. 
Therefore, 
\ba{
\E[W^X_t-W^X_s\mid\mcl F_s^X]
=\int_s^t\bra{\frac{\E[\beta_u\mid\mcl F^X_s]}{u(1-u)}-\frac{\E[\E[\beta_u\mid\mcl F^X_u]\mid \mcl F^X_s]}{u(1-u)}}du
=0.
}
Since $W^X$ is continuous and $\mathbf F^X$-adapted by construction, this means that $W^X$ is a continuous $\mathbf F^X$-martingale. 
Also, $[W^X,W^X]_t=[B,B]_t=tI_d$ for all $t\in[0,1]$. 
Hence $W^X$ is a standard $\mathbf F^X$-Wiener process in $\mathbb R^d$ by L\'evy's characterization. 
\end{proof}

\subsection{Proof of Proposition \ref{ent-formula}}\label{sec:ent-formula}

The proof is an application of Girsanov's theorem. 
For later use, we formulate it in a general form. 
Let $w=(w_t)_{t\in[0,1]}$ be the canonical process on $\mathbb W^d$. 
Set $\mcl B_t=\sigma(w_s:s\leq t)$ for $t\in[0,1]$. 
For two probability distributions $P$ and $Q$ on $\mathbb W^d$ and $t\in[0,1]$, we write
$
H_t(P\mid Q):=H(P|_{\mcl B_t}\mid Q|_{\mcl B_t})
$
for short.

\begin{lemma}\label{lem:girsanov}
For each $i=1,2$, let $b_i:(0,1)\times\mathbb W^d\to\mathbb R^d$ be a progressively measurable functional. 
Suppose that $X^{(i)}=(X^{(i)}_t)_{t\in[0,1]}$ be a weak solution to the following SDE:
\[
dX^{(i)}_t=b_i(t,X^{(i)})dt+dW_t,\qquad X^{(i)}_0=0.
\]
Suppose also that for all $t\in[0,1)$ 
\ben{\label{ass:girsanov}
\begin{cases}
P^{X^{(1)}}\bra{\int_0^t|b_1(s,w)|^2ds<\infty}
=P^{X^{(2)}}\bra{\int_0^t|b_2(s,w)|^2ds<\infty}=1,\\
P^{W}\bra{\int_0^t|b_2(s,w)|^2ds<\infty}=1.
\end{cases}
}
Then 
\ben{\label{eq:girsanov}
H_t(P^{X^{(1)}}\mid P^{X^{(2)}})
=\frac{1}{2}\int_0^{t}\E\sbra{|b_1(s,X^{(1)})-b_2(s,X^{(1)})|^2}ds
}
for all $t\in[0,1)$. 
Moreover, if \eqref{ass:girsanov} holds with $t=1$, \eqref{eq:girsanov} also holds with $t=1$. 
\end{lemma}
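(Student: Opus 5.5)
The plan is to prove \eqref{eq:girsanov} by Girsanov's theorem, applied through a localization that only needs the finiteness conditions \eqref{ass:girsanov}. Novikov's condition is not required. The main tool is the standard result (e.g.\ Liptser--Shiryaev, Theorem 7.7, or the argument of \cite{Le13}) for diffusion-type processes $dX^{(i)}_t=b_i(t,X^{(i)})dt+dW_t$. If $\int_0^t|b_i(s,w)|^2ds<\infty$ a.s.\ under both $P^{X^{(i)}}$ and $P^W$, then $P^{X^{(i)}}|_{\mcl B_t}\sim P^W|_{\mcl B_t}$, with density
\[
\frac{dP^{X^{(i)}}|_{\mcl B_t}}{dP^W|_{\mcl B_t}}=\exp\Bigl(\int_0^tb_i(s,w)\cdot dw_s-\frac12\int_0^t|b_i(s,w)|^2ds\Bigr).
\]
For $i=2$ this gives equivalence, using both lines of \eqref{ass:girsanov}. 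For $i=1$ we only know finiteness under $P^{X^{(1)}}$. That half of the theorem still gives absolute continuity $P^{X^{(1)}}|_{\mcl B_t}\ll P^W|_{\mcl B_t}$ with the same formula for the density. Dividing the two densities then gives, $P^{X^{(1)}}$-a.s.,
\[
\log\frac{dP^{X^{(1)}}|_{\mcl B_t}}{dP^{X^{(2)}}|_{\mcl B_t}}=\int_0^t(b_1-b_2)(s,w)\cdot dw_s-\frac12\int_0^t\bigl(|b_1|^2-|b_2|^2\bigr)(s,w)ds .
\]
Under $P^{X^{(1)}}$ we have $dw_s=b_1ds+d\tilde W_s$ for a Brownian motion $\tilde W$. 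Substituting this, the log-density becomes $M_t+\frac12\int_0^t|b_1-b_2|^2ds$, where $M_t=\int_0^t(b_1-b_2)\cdot d\tilde W_s$ is a local martingale.

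The delicate step is taking expectations when the right-hand side of \eqref{eq:girsanov} may be infinite or $M$ may fail to be a true martingale. I would localize with the stopping times $\sigma_n=\inf\{s:\int_0^s|b_1-b_2|^2du\ge n\}\wedge t$.

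\textbf{Lower bound.} The relative entropy restricted to $\mcl B_{\sigma_n}$ equals $\frac12\E\int_0^{\sigma_n}|b_1-b_2|^2ds$, because the stopped $M$ is an $L^2$ martingale. Monotonicity of relative entropy under restriction to the sub-$\sigma$-field $\mcl B_{\sigma_n}\subset\mcl B_t$ gives $H_t\ge\frac12\E\int_0^{\sigma_n}|b_1-b_2|^2ds$. Letting $n\to\infty$ by monotone convergence yields $H_t\ge\frac12\int_0^t\E|b_1-b_2|^2ds$. This also covers the case where the right-hand side is infinite.

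\textbf{Upper bound.} Assume the right-hand side is finite. Then $M$ is a true $L^2$ martingale, so $\E[M_t]=0$. Taking the $P^{X^{(1)}}$-expectation of the log-density gives equality.

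For $t=1$ the same argument runs verbatim once \eqref{ass:girsanov} holds at $t=1$. The main obstacle I expect is the absolute continuity step for $i=1$ without the condition under $P^W$. I would handle it by quoting the Liptser--Shiryaev result, which needs only $P^{X^{(1)}}$-a.s.\ finiteness for $P^{X^{(1)}}\ll P^W$. Alternatively, the two-measure comparison can be done directly via the localized Girsanov change of measure on $\{\sigma_n=t\}$ together with $P^{X^{(1)}}(\sigma_n<t)\to0$.
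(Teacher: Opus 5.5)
Your proposal is correct and follows essentially the paper's proof: Liptser--Shiryaev Theorems 7.6 and 7.7 give the two densities, and their ratio $\Lambda_t$ is written as $\frac12\int_0^t|b_1-b_2|^2ds$ plus a stochastic integral. The finite case then uses that the stochastic integral has zero mean, and the infinite case uses localization together with monotonicity of relative entropy under restriction to $\mcl B_{t\wedge\tau_n}$; your stopping rule based on $\int_0^s|b_1-b_2|^2du$ instead of the paper's $\int_0^s(|b_1|^2+|b_2|^2)du$ makes no difference. The one step you leave implicit is that $\Lambda_{\sigma_n}$ is the density of $P^{X^{(1)}}|_{\mcl B_{\sigma_n}}$ with respect to $P^{X^{(2)}}|_{\mcl B_{\sigma_n}}$, and the paper justifies exactly this by optional sampling of the $P^{X^{(2)}}$-martingale $(\Lambda_s)$.
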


\begin{proof}
The second claim can be shown in a similar manner to the first one, so we focus on the proof of the first claim. 
By Theorem 7.6 in \cite{LiSh01}, $P^{X^{(1)}}|_{\mcl B_t}\ll P^W|_{\mcl B_t}$ for any $t\in[0,1)$ and the process $(\frac{dP^{X^{(1)}}|_{\mcl B_t}}{dP^W|_{\mcl B_t}})_{t\in[0,1)}$ has a continuous modification $(D^{(1)}_t)_{t\in[0,1)}$ with respect to $P^W$ such that
\[
D^{(1)}_t(w)=\exp\bra{\int_0^{t}b_1(s,w)dw_s-\frac{1}{2}\int_0^{t}|b_1(s,w)|^2ds}\quad P^{X^{(1)}}\text{-a.s.}
\]
Similarly, by Theorem 7.7 in \cite{LiSh01}, $P^{X^{(2)}}|_{\mcl B_t}\sim P^W|_{\mcl B_t}$ for any $t\in[0,1)$ and the process $(\frac{dP^{X^{(2)}}|_{\mcl B_t}}{dP^W|_{\mcl B_t}})_{t\in[0,1)}$ has a continuous modification $(D^{(2)}_t)_{t\in[0,1)}$ with respect to $P^W$ such that
\ben{\label{eq:D2}
D^{(2)}_t(w)=\exp\bra{\int_0^{t}b_2(s,w)dw_s-\frac{1}{2}\int_0^{t}|b_2(s,w)|^2ds}\quad P^W\text{-a.s.}
}
Since $P^{X^{(1)}}|_{\mcl B_t}\ll P^W|_{\mcl B_t}$, Eq.\eqref{eq:D2} also holds $P^{X^{(1)}}$-a.s. 
Therefore, with $\Lambda_t:=D^{(1)}_t/D^{(2)}_t$, 
\[
\Lambda_t(w)=\exp\bra{\int_0^{t}\bra{b_1(s,w)-b_2(s,w)}dw_s-\frac{1}{2}\int_0^{t}\bra{|b_1(s,w)|^2-|b_2(s,w)|^2}ds}\quad P^{X^{(1)}}\text{-a.s.}
\]
Hence
\ben{\label{eq:loglik}
\log\Lambda_t(X^{(1)})=\frac{1}{2}\int_0^{t}|b_1(s,X^{(1)})-b_2(s,X^{(1)})|^2ds
+\int_0^{t}\bra{b_1(s,X^{(1)})-b_2(s,X^{(1)})}dW_s\quad\text{a.s.}
}
Meanwhile, by the definition of $D^{(1)}$ and $D^{(2)}$,
\ben{\label{eq:lr}
\Lambda_t=\frac{dP^{X^{(1)}}|_{\mcl B_t}}{dP^{X^{(2)}}|_{\mcl B_t}}\quad P^W\text{-a.s.~and~}P^{X^{(2)}}\text{-a.s.}
}
Since $P^{X^{(1)}}|_{\mcl B_t}\ll P^W|_{\mcl B_t}$, this also holds $P^{X^{(1)}}$-a.s. Hence
\ben{\label{eq:ent-g}
H_t(P^{X^{(1)}}\mid P^{X^{(2)}})=\E[\log\Lambda_t(X^{(1)})].
}
Now, if the right hand side of \eqref{eq:girsanov} is finite, then
\[
\E\sbra{\int_0^{t}\bra{b_1(s,X^{(1)})-b_2(s,X^{(1)})}dW_s}=0.
\]
Hence \eqref{eq:loglik} and \eqref{eq:ent-g} give \eqref{eq:girsanov}. 
Otherwise, for every $n\in\mathbb N$, define a function $\tau_n:\mathbb W^d\to[0,1]$ as $\tau_n(w)=\inf\{t\in[0,1]:\int_0^t(|b_1(s,w)|^2+|b_2(s,w)|^2)ds\geq n\}\wedge1$ for $w\in\mathbb W^d$. 
We can easily check that $\tau_n$ is a $(\mcl B_t)_{t\in[0,1]}$-stopping time. 
Also, from \eqref{eq:lr} we can verify that $(\Lambda_t)_{t\in[0,1)}$ is a $(\mcl B_t)_{t\in[0,1)}$-martingale under $P^{X^{(2)}}$. 
Hence we have $\Lambda_{t\wedge\tau_n}=\E_{P^{X^{(2)}}}[\Lambda_t\mid\mcl B_{t\wedge\tau_n}]$ $P^{X^{(2)}}$-a.s.~for any $t\in[0,1)$ by the optional sampling theorem. 
This and \eqref{eq:lr} show that
\ben{\label{eq:lr-stop}
\Lambda_{t\wedge\tau_n}=\frac{dP^{X^{(1)}}|_{\mcl B_{t\wedge\tau_n}}}{dP^{X^{(2)}}|_{\mcl B_{t\wedge\tau_n}}}\quad P^{X^{(2)}}\text{-a.s.}
}
Also, since $(0,\infty)\ni x\mapsto x\log x\in\mathbb R$ is convex, Jensen's inequality gives
\[
\int\Lambda_{t\wedge\tau_n}\log\Lambda_{t\wedge\tau_n}dP^{X^{(2)}}
\leq\int\Lambda_t\log\Lambda_{t}dP^{X^{(2)}}.
\]
Combining this with \eqref{eq:lr} and \eqref{eq:lr-stop} gives
\ben{\label{eq:ent-stop}
\E[\log\Lambda_{t\wedge\tau_n}(X^{(1)})]
\leq\E[\log\Lambda_{t}(X^{(1)})].
}
Meanwhile, by \eqref{eq:loglik} and the definition of $\tau_n$,
\[
\E[\log\Lambda_{t\wedge\tau_n}(X^{(1)})]=\frac{1}{2}\E\sbra{\int_0^{t\wedge\tau_n(X^{(1)})}|b_1(s,X^{(1)})-b_2(s,X^{(1)})|^2ds}.
\]
Noting $P^{X^{(1)}}|_{\mcl B_t}\ll P^W|_{\mcl B_t}$, we have $\tau_n(X^{(1)})\uparrow1$ as $n\to\infty$ a.s.~by \eqref{ass:girsanov}. 
Hence the monotone convergence theorem gives 
\[
\lim_{n\to\infty}\E[\log\Lambda_{t\wedge\tau_n}(X^{(1)})]=\frac{1}{2}\E\sbra{\int_0^{t}|b_1(s,X^{(1)})-b_2(s,X^{(1)})|^2ds}=\infty.
\]
Consequently, we have $H_t(P^{X^{(1)}}\mid P^{X^{(2)}})=\infty$ by \eqref{eq:ent-g} and \eqref{eq:ent-stop}, which means \eqref{eq:girsanov}. 
\end{proof}

%

\begin{proof}[\bf Proof of \cref{ent-formula}]
Note that $\mu\ll\gamma_d$ if and only if the law of $X$ is absolutely continuous with respect to $P^W$. 
By Theorem 7.5 in \cite{LiSh01}, this is also equivalent to $\int_0^1|v_t|^2dt<\infty$ a.s. 
Therefore, if $\mu\not\ll\gamma_d$, both sides of \eqref{eq:ent-formula} are infinite and thus \eqref{eq:ent-formula} holds. 
If $\mu\ll\gamma_d$, \cref{lem:girsanov} gives 
\[
H(P^X\mid P^W)=\E\sbra{\frac{1}{2}\int_0^1|v_s|^2ds}.
\] 
Since $\frac{dP^X}{dP^W}(X)=\frac{d\mu}{d\gamma_d}(X_1)$ by the definition of the F\"ollmer process, this shows \eqref{eq:ent-formula}. 
\end{proof}

\subsection{Proofs of Proposition \ref{prop:vt-mar}}\label{proof:vt-mar}

First, by \eqref{vt-formula}, \eqref{mt-formula} and the Markov property of $X$, we can rewrite $v_t$ as
\ben{\label{eq:vt-mar}
v_t=\frac{\E[X_1\mid\mcl F^X_t]-X_t}{1-t}=\E[X_1\mid\mcl F^X_t]-\frac{\E[\beta_t\mid\mcl F^X_t]}{1-t}.
}
Next, let $B=(B_t)_{t\in[0,1]}$ be the process defined by \eqref{bb-bm}. 
By the proof of \cref{thm:weak}, $B$ is a standard $\mathbf F^\beta$-Wiener process in $\mathbb R^d$. 
Moreover, solving \eqref{bb-bm} with respect to $\beta$, we obtain
\[
\beta_t=(1-t)\int_0^t\frac{1}{1-s}dB_s
\]
for all $t\in[0,1)$ (cf.~\cite[Section 5.6.B]{KaSh98}). This implies that $(\beta_t/(1-t))_{t\in[0,1)}$ is an $\mathbf F^\beta$-martingale in $\mathbb R^d$. 
Since $\beta$ is independent of $X_1$ and $\mcl F^X_t\subset\mcl F^{\beta}_t\vee\sigma(X_1)$ for all $t\in[0,1]$, 
combining this with the expression \eqref{eq:vt-mar} gives the desired result. 
\qed

\subsection{Proofs of Proposition \ref{prop:vt-mom}}\label{proof:vt-mom}

By Tweedie's formula, we can rewrite $v_t$ as
\[
v_t=X_1+\frac{\beta_t}{t}-\frac{\E[\beta_t\mid X_t]}{t(1-t)}.
\]
Since $\E[|\beta_t|^r]<\infty$, the asserted claim follows from this expression. 
\qed

\subsection{Proof of \eqref{eq:score-match}}\label{proof:score-match}

By \eqref{eq:tweedie}, we have
\[
\E\sbra{\nabla\log p_{t}(X_{t})-\frac{-\beta_t}{t(1-t)}\mid X_t}=0.
\]
Hence we obtain
\be{
\E\sbra{\bra{\frac{-\beta_t}{t(1-t)}-\nabla\log p_{t}(X_{t})}\cdot\bra{\nabla\log p_{t}(X_{t})-\score(X_{t},t)}}=0.
}
Since $(\beta_t/\{t(1-t)\},X_t)$ has the same law as $(Z/\sqrt{t(1-t)},\xi_t)$ by \cref{prop:basic}, we obtain the desired result from the above identity. \qed

\subsection{Proof of Theorem \ref{thm:em}}\label{proof:em}

The following lemma is a special case of the so-called data processing inequality and well-known in information theory (see e.g.~Corollary 2.18 in \cite{PoWu25}). 
We give a direct proof for the sake of readers' convenience. 
\begin{lemma}\label{lem:dpi}
Let $(\mcl X,\mcl A)$ and $(\mcl T,\mcl B)$ be two measurable spaces and $T:\mcl X\to\mcl T$ an $\mcl A/\mcl B$-measurable function. 
For any two probability measures $P$ and $Q$ on $(\mcl X,\mcl A)$,
\[
H(P^T\mid Q^T)\leq H(P\mid Q),
\]
where $P^T$ and $Q^T$ are the image measures of $P$ and $Q$ by $T$, respectively. 
\end{lemma}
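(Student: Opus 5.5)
The plan is to prove the data processing inequality for relative entropy (Lemma~\ref{lem:dpi}) directly from the definition, using the conditional structure of densities. If $H(P\mid Q)=\infty$ there is nothing to prove, so assume $H(P\mid Q)<\infty$. Then $P\ll Q$, and we set $f:=dP/dQ$.

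\emph{Absolute continuity of the images.} If $Q^T(B)=0$, then $Q(T^{-1}(B))=0$, hence $P(T^{-1}(B))=0$, i.e.\ $P^T(B)=0$. So $P^T\ll Q^T$, and we may define $g:=dP^T/dQ^T$ on $\mcl T$. The key identification is
\[
g\circ T=\E_Q[f\mid \sigma(T)]\quad Q\text{-a.s.}
\]
This follows because for any $B\in\mcl B$ we have
\[
\int_{T^{-1}(B)} f\,dQ=P(T^{-1}(B))=P^T(B)=\int_B g\,dQ^T=\int_{T^{-1}(B)}g\circ T\,dQ,
\]
and $g\circ T$ is $\sigma(T)$-measurable.

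\emph{Conditional Jensen.} Let $\varphi(x)=x\log x$ on $[0,\infty)$, with $\varphi(0)=0$. This function is convex and bounded below by $-e^{-1}$, so $\E_Q[\varphi(f)]$ is well defined and equals $H(P\mid Q)$. The conditional Jensen inequality for convex functions bounded below then gives
\[
\varphi(g\circ T)=\varphi(\E_Q[f\mid\sigma(T)])\le \E_Q[\varphi(f)\mid\sigma(T)]\quad Q\text{-a.s.}
\]
Taking $Q$-expectations,
\[
H(P^T\mid Q^T)=\int\varphi(g)\,dQ^T=\E_Q[\varphi(g\circ T)]\le\E_Q[\varphi(f)]=H(P\mid Q).
\]

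The only delicate point is integrability. Because $\varphi$ may be non-integrable on the positive side, conditional Jensen has to be applied carefully. One option is to approximate $\varphi$ from below by the supremum of countably many affine functions $a_nx+b_n$. For each $n$, $a_n\E_Q[f\mid T]+b_n=\E_Q[a_nf+b_n\mid T]\le\E_Q[\varphi(f)\mid T]$, where the conditional expectation of the nonnegative part is taken in the extended sense. Taking the supremum over $n$ then gives the pointwise inequality. Since $\varphi\ge -e^{-1}$, all expectations above are well defined in $(-\infty,\infty]$, so no further integrability assumptions are needed.
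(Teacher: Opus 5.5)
Your proof is correct and follows essentially the same route as the paper. You reduce to $H(P\mid Q)<\infty$, identify $\frac{dP^T}{dQ^T}\circ T=\E_Q\bigl[\frac{dP}{dQ}\bigm| T\bigr]$ by comparing integrals over the sets $T^{-1}(B)$, and then apply conditional Jensen to $x\mapsto x\log x$. Your extra integrability discussion (the lower bound $-e^{-1}$ and the affine minorants) simply makes explicit a step the paper leaves implicit.
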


\begin{proof}
It suffices to consider the case $H(P\mid Q)<\infty$. 
In this case, we have $P\ll Q$, so $P^T\ll Q^T$. 
Moreover, 
\ben{\label{image-density}
\frac{dP^T}{dQ^T}\circ T=\E_Q\sbra{\frac{dP}{dQ}\mid T}\quad Q\text{-a.s.}
}
In fact, the left hand side of \eqref{image-density} is evidently $\sigma(T)$-measurable and for any $B\in\mcl B$
\ba{
\int_{T^{-1}(B)}\bra{\frac{dP^T}{dQ^T}\circ T}dQ
=\int1_B(y)\frac{dP^T}{dQ^T}(y)Q^T(dy)
=P(T^{-1}(B))
=\int_{T^{-1}(B)}\frac{dP}{dQ}dQ.
}
Since $\sigma(T)=\{T^{-1}(B):B\in\mcl B\}$, this implies \eqref{image-density}. 
Hence we obtain
\ba{
H(P^T\mid Q^T)
&=\int\frac{dP^T}{dQ^T}\log\bra{\frac{dP^T}{dQ^T}}dQ^T
=\int\E_Q\sbra{\frac{dP}{dQ}\mid T}\log\bra{\E_Q\sbra{\frac{dP}{dQ}\mid T}}dQ\\
&\leq\int\E_Q\sbra{\frac{dP}{dQ}\log\bra{\frac{dP}{dQ}}\mid T}dQ,
}
where the last inequality follows by Jensen's inequality and the fact that $x\mapsto x\log x$ is convex. Since the last term is equal to $H(P\mid Q)$, we obtain the desired result. 
\end{proof}

\begin{proof}[\bf Proof of \cref{thm:em}]
It suffices to consider the case $\E[|X_1|^2]<\infty$.  
First, note that \eqref{eq:v2} yields $\E[\int_0^t|v_s|^2ds]<\infty$ for all $t\in[0,1)$. 
Next, define a function $b:[0,1)\times\wiener^d\to\mathbb R^d$ as
\[
b(t,x)=\begin{cases}
0 & \text{if }0\leq t<t_0\text{ or }t_N\leq t<1,\\
x(t_{i})/t_{i}+\score(x(t_{i}),t_{i}) & \text{if }t_{i}\leq t<t_{i+1}\text{ for some }i=0,1,\dots,N-1.
\end{cases}
\]
Let $\hat X=(\hat X_t)_{t\in[0,1]}$ be a solution to the following SDE:
\[
d\hat X_t=b(t,\hat X)dt+dW_t,\quad\hat X_0=0.
\]
Then $\hat X_{t_N}$ has the same law as $\euler_{t_N}$. 
Hence, by \cref{lem:dpi}
\be{
H(P^{X_{t_N}}\mid P^{\euler_{t_N}})
\leq H_{t_N}(P^X\mid P^{\hat X}).
}
Also, by \cref{lem:girsanov},
\ba{
H_{t_N}(P^X\mid P^{\hat X})
&=\frac{1}{2}\int_0^{t_N}\E\sbra{|v_t-b(t,X)|^2}dt\\
&=\frac{1}{2}\int_0^{t_0}\E[|v_t|^2]dt
+\frac{1}{2}\sum_{i=0}^{N-1}\int_{t_{i}}^{t_{i+1}}\E\sbra{\abs{v_t-\frac{X_{t_{i}}}{t_{i}}-\score(t_{i},X_{t_{i}})}^2}dt.
}
Therefore, by \eqref{score-bound},
\ben{\label{eq:kl-basic-1}
H(P^{X_{t_N}}\mid P^{\euler_{t_N}})
\leq\frac{1}{2}\int_0^{t_0}\E[|v_t|^2]dt+\sum_{i=0}^{N-1}\int_{t_{i}}^{t_{i+1}}\E[|v_t-v_{t_i}|^2]dt+\eps_\text{score}^2.
}
Since $(v_t)_{t\in(0,1)}$ is an $\mathbf F^X$-martingale in $\mathbb R^d$ by \cref{prop:vt-mar}, we obtain
\ban{
H(P^{X_{t_N}}\mid P^{\euler_{t_N}})
&\leq\frac{1}{2}\int_0^{t_0}\E[|v_t|^2]dt
+\sum_{i=0}^{N-1}\int_{t_{i}}^{t_{i+1}}\bra{\E[|v_t|^2]-\E[|v_{t_i}|^2]}dt+\eps_\text{score}^2\label{eq:kl-basic0}\\
&\leq\frac{t_0}{2}\E[|v_{t_0}|^2]
+\sum_{i=0}^{N-1}h_i\bra{\E[|v_{t_{i+1}}|^2]-\E[|v_{t_{i}}|^2]}+\eps_\text{score}^2.\label{eq:kl-basic}
}
Using \eqref{eq:sampling}, we deduce
\ba{
\sum_{i=0}^{N-1}h_i\bra{\E[|v_{t_{i+1}}|^2]-\E[|v_{t_{i}}|^2]}
&\leq\kappa\sum_{i=0}^{N-1}(1-t_{i+1})\bra{\E[|v_{t_{i+1}}|^2]-\E[|v_{t_{i}}|^2]}\\
&=\kappa\sum_{i=1}^{N}(1-t_{i})\E[|v_{t_{i}}|^2]
-\kappa\sum_{i=0}^{N-1}(1-t_{i+1})\E[|v_{t_{i}}|^2]\\
&=\kappa\sum_{i=0}^{N-1}h_i\E[|v_{t_{i}}|^2]
+\kappa(1-t_{N})\E[|v_{t_{N}}|^2]-\kappa(1-t_0)\E[|v_{t_{0}}|^2]\\
&\leq\kappa\int_{t_0}^{t_N}\E[|v_{t}|^2]dt
+\kappa(1-t_{N})\E[|v_{t_{N}}|^2].
}
Now, by \eqref{eq:v2} and $t_0\leq1/2$,
\ba{
\E[|v_{t_{0}}|^2]\leq d\frac{t_0}{1-t_0}+\E[|X_1|^2]
\leq d+\E[|X_1|^2]
}
and
\ba{
(1-t_{N})\E[|v_{t_{N}}|^2]\leq d+\delta(\E[|X_1|^2]-d)
\leq d+\E[|X_1|^2]
}
and
\ba{
\int_{t_0}^{t_N}\E[|v_{t}|^2]dt
\leq d\int_{t_0}^{t_N}\frac{1}{1-t}dt+\E[|X_1|^2]-d
= d\log\bra{\frac{1-t_0}{\delta}}+\E[|X_1|^2]-d.
}
Consequently,
\ba{
\frac{t_0}{2}\E[|v_{t_0}|^2]
+\sum_{i=0}^{N-1}h_i\bra{\E[|v_{t_{i+1}}|^2]-\E[|v_{t_{i}}|^2]}
\leq\frac{t_0}{2}\bra{d+\E[|X_1|^2]}+\kappa d\log\bra{\frac{1-t_0}{\delta}}+2\kappa\E[|X_1|^2].
}
Combining this with \eqref{eq:kl-basic} gives the desired result. 
\end{proof}

\subsection{Proof of Theorem \ref{thm:fi}}\label{proof:fi}

The following lemma is equivalent to the exponential decay property of the Fisher information along the Ornstein--Uhlenbeck semigroup (see Eq.(5.7.4) of \cite{BGL14}). 
We give a direct proof for readers' convenience. 
\begin{lemma}\label{lem:fi}
If $\mu$ has an almost everywhere differentiable density $f$ with respect to $\gamma_d$, then $I(\slepian_t\mu\mid\gamma_d)/t\leq I(\mu\mid\gamma_d)$ for all $t\in(0,1)$.
\end{lemma}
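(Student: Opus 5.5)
We prove $I(\slepian_t\mu\mid\gamma_d)\leq t\,I(\mu\mid\gamma_d)$ by writing the relative score of $\slepian_t\mu$ as a conditional expectation of a rescaled score of $\mu$, and then applying Jensen's inequality. Let $\xi\sim\mu$ and $Z\sim\gamma_d$ be independent, and set $U_t:=\sqrt t\xi+\sqrt{1-t}Z\sim\slepian_t\mu$. We may assume $I(\mu\mid\gamma_d)<\infty$; otherwise there is nothing to prove.

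\textbf{Step 1 (the identity).} The density of $\slepian_t\mu$ with respect to $\gamma_d$ is the Mehler formula
\[
f_t(y)=\E[f(\sqrt t y+\sqrt{1-t}Z)],
\]
which is the Ornstein--Uhlenbeck semigroup applied to $f$. Differentiating under the expectation gives
\[
\nabla f_t(y)=\sqrt t\,\E[\nabla f(\sqrt t y+\sqrt{1-t}Z)].
\]
Hence $\nabla\log f_t(U_t)=\sqrt t\,\E[\nabla\log f(W)\mid U_t]$, where $W$ is a variable with $(U_t,W)$ jointly distributed so that the conditional law of $W$ given $U_t=y$ has density proportional to $f(w)\,\phi_{(1-t)I_d}(w-\sqrt t y)$ with respect to $\gamma_d$. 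Since the joint density of $(U_t,W)$ is $f(w)\gamma_d(dw)$ times a Gaussian kernel, the $W$-marginal is $\mu$; in fact $(U_t,W)\overset{d}{=}(U_t,\xi)$.

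\textbf{Step 2 (Jensen).} By conditional Jensen,
\[
\E|\nabla\log f_t(U_t)|^2\leq t\,\E|\nabla\log f(\xi)|^2=t\,I(\mu\mid\gamma_d),
\]
and dividing by $t$ gives the claim.

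An alternative that avoids $\gamma_d$-densities uses Tweedie-type identities in terms of Lebesgue densities. With $\rho=f\phi_{I_d}$ the Lebesgue density of $\mu$, one has $\nabla\log f_t(y)=y+\nabla\log(\text{density of }U_t)(y)$. By Tweedie's formula (or a direct integration by parts), the Lebesgue score of $U_t$ equals $\E[\nabla\log\rho(\xi)\mid U_t]/\sqrt t$. Combining these with $y=\sqrt t\xi+\sqrt{1-t}Z$ and $\E[Z\mid U_t]=-\sqrt{1-t}\,\nabla\log(\text{density of }U_t)$ yields the same identity
\[
\nabla\log f_t(U_t)=\sqrt t\,\E[\nabla\log f(\xi)\mid U_t].
\]

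\textbf{Main obstacle.} The technical hurdle is justifying the differentiation under the integral (integration by parts) when $f$ is only a.e.\ differentiable with $\int|\nabla f|\,d\gamma_d<\infty$. Note $\int|\nabla f|\,d\gamma_d\leq\sqrt{I(\mu\mid\gamma_d)}$ by Cauchy--Schwarz. We would first take $f$ to be locally absolutely continuous, i.e.\ $f\in W^{1,1}_{\mathrm{loc}}$, which is the implicit meaning of the paper's assumption. We would then verify the identity by testing against compactly supported smooth $g$, exactly as in the proof of Tweedie's formula: integrate by parts in $w$ and use the Gaussian Stein identity. Alternatively, we would mollify $f$ and pass to the limit using lower semicontinuity of Fisher information.
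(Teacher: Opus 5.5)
Your proof is correct. Its core inequality is the paper's, but you handle regularity differently.

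\textbf{Same core inequality.} Write $f=g^2$. Conditional Jensen applied to $\nabla\log f_t(U_t)=\sqrt t\,\E[\nabla\log f(\xi)\mid U_t]$ is exactly the pointwise Cauchy--Schwarz bound $|\nabla f_t(y)|^2\le 4t f_t(y)\,\E[|\nabla g(\sqrt t y+\sqrt{1-t}Z)|^2]$ that the paper integrates against $\gamma_d$. Your tower-property step plays the role of the $\gamma_d$-invariance of $(y,z)\mapsto\sqrt ty+\sqrt{1-t}z$.

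\textbf{Different handling of regularity.} The paper never differentiates $f$ itself. It approximates $g=\sqrt f$ by $C_c^\infty$ functions in the Gaussian Sobolev norm, citing a density theorem. It runs the Cauchy--Schwarz computation for the smooth approximants only. It then passes to the limit using pointwise convergence of $f_t^{(n)}$ and $\nabla f_t^{(n)}$ (with the derivative on the Gaussian kernel) together with Fatou. You instead prove the exact score identity for $f$ itself, which needs the weak-derivative step you flag. That step does go through for $f\in W^{1,1}_{\mathrm{loc}}$, for two reasons. First, $\nabla f=0$ a.e.\ on $\{f=0\}$. Second, $\phi_{(1-t)I_d}(w-\sqrt ty)/\phi_{I_d}(w)\le(1-t)^{-d/2}e^{|y|^2/2}$, so your bound $\int|\nabla f|\,d\gamma_d\le\sqrt{I(\mu\mid\gamma_d)}$ is enough to justify Fubini and the integration by parts.

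\textbf{What your route buys.} You get an identity, not just an inequality. Combined with Proposition~\ref{fi-formula}, it gives $v_t=\E[\nabla\log f(X_1)\mid X_t]$. This exhibits the F\"ollmer drift directly as the closed martingale with terminal value $\nabla\log f(X_1)$. The proof of Theorem~\ref{thm:fi} otherwise obtains that terminal value via $L^2$-boundedness and Doob's convergence theorem.

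\textbf{Interpretation of the hypothesis.} You are right that it must be read in the Sobolev sense, and the paper's appeal to the density theorem uses this tacitly. For a Cantor-type density (non-constant, with a.e.\ derivative zero), the statement fails as literally written: $I(\mu\mid\gamma_d)=0$ while $I(\slepian_t\mu\mid\gamma_d)>0$.

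\textbf{One slip.} The conditional law of $W$ given $U_t=y$ has density proportional to $f(w)\,\phi_{(1-t)I_d}(w-\sqrt ty)$ with respect to Lebesgue measure, not $\gamma_d$. With that correction, $(U_t,W)$ does have the law of $(U_t,\xi)$, and the rest of your argument is unaffected.
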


\begin{proof}
We may assume $I(\mu\mid\gamma_d)<\infty$ without loss of generality. 
Set $g:=\sqrt{f}$. Then we have
\ben{\label{fi-sqrt}
\|g\|_{L^2(\gamma_d)}^2=\int_{\mathbb R^d}fd\gamma_d=1<\infty
\quad\text{and}\quad
\||\nabla g|\|^2_{L^2(\gamma_d)}
=\frac{1}{4}\int_{\mathbb R^d}\frac{|\nabla f|^2}{f}d\gamma_d
=\frac{1}{4}I(\mu\mid\gamma_d)<\infty.
}
Therefore, by Proposition 0.2 in the supplement to \cite{FGRS22}, there exist compactly supported $C^\infty$ functions $g^{(n)}:\mathbb R^d\to\mathbb R$ $(n=1,2,\dots)$ such that $\|g^{(n)}-g\|_{L^2(\gamma_d)}+\||\nabla g^{(n)}-\nabla g|\|_{L^2(\gamma_d)}\to0$ as $n\to\infty$. 

For each $n\in\mathbb N$, define a function $f^{(n)}_t:\mathbb R^d\to\mathbb R$ as
\[
f^{(n)}_t(y)
=\phi_{I_d}(y)^{-1}\int_{\mathbb R^d}\phi_{(1-t)I_d}(y-\sqrt tx)g^{(n)}(x)^2\gamma_d(dx)\quad(y\in\mathbb R^d).
\]
Since $\|g^{(n)}-g\|_{L^2(\gamma_d)}\to0$ as $n\to\infty$, we have $f^{(n)}_t(y)\to f_t(y)$ and $\nabla f^{(n)}_t(y)\to \nabla f_t(y)$ as $n\to\infty$. 
Hence Fatou's lemma yields
\ben{\label{fi-fatou}
I(\slepian_t\mu\mid\gamma_d)\leq\liminf_{n\to\infty}\int_{\mathbb R^d}\frac{|\nabla f^{(n)}_t(y)|^2}{f^{(n)}_t(y)}\gamma_d(dy).
}
Meanwhile, $f^{(n)}_t(y)$ can be rewritten as
\ba{
f^{(n)}_t(y)
&=\int_{\mathbb R^d}g^{(n)}(x)^2\phi_{(1-t)I_d}(x-\sqrt ty)dx
=\int_{\mathbb R^d}g^{(n)}(\sqrt ty+\sqrt{1-t}z)^2\gamma_{d}(dz).
}
Since $g^{(n)}$ is a compactly supported smooth function, we can differentiate under the integral sign on the right hand side and obtain
\[
\nabla f^{(n)}_t(y)=\sqrt t\int_{\mathbb R^d}(2g^{(n)}\nabla g^{(n)})(\sqrt ty+\sqrt{1-t}z)\gamma_d(dz).
\]
Therefore, by the Schwarz inequality,
\ba{
|\nabla f^{(n)}_t(y)|^2
&\leq 4tf^{(n)}_t(y)\int_{\mathbb R^d}|\nabla g^{(n)}(\sqrt t y+\sqrt{1-t}z)|^2\gamma_d(dz).
}
Hence
\ba{
\liminf_{n\to\infty}\int_{\mathbb R^d}\frac{|\nabla f^{(n)}_t(y)|^2}{f^{(n)}_t(y)}\gamma_d(dy)
&\leq 4t\liminf_{n\to\infty}\int_{\mathbb R^d}|\nabla g^{(n)}(z)|^2\gamma_d(dz)\\
&=4t\int_{\mathbb R^d}|\nabla g(z)|^2\gamma_d(dz)=tI(\mu\mid\gamma_d),
}
where the first equality follows by $\||\nabla g^{(n)}-\nabla g|\|_{L^2(\gamma_d)}\to0$ and the second by \eqref{fi-sqrt}. 
Combining this with \eqref{fi-fatou} gives the desired result. 
\end{proof}

The above lemma and de Bruijn’s identity \eqref{debruijn} immediately yield the following corollary:
\begin{corollary}[Gaussian logarithmic Sobolev inequality]
\[
H(\mu\mid\gamma_d)\leq\frac{1}{2}I(\mu\mid\gamma_d).
\]
\end{corollary}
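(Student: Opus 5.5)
The statement to prove is the Gaussian logarithmic Sobolev inequality $H(\mu\mid\gamma_d)\leq\frac12 I(\mu\mid\gamma_d)$. The plan is to combine de Bruijn's identity \eqref{debruijn} with \cref{lem:fi} and integrate in $t$.

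First I dispose of the trivial cases. If $I(\mu\mid\gamma_d)=\infty$, or if $\mu$ has no almost everywhere differentiable density with respect to $\gamma_d$ (so that $I(\mu\mid\gamma_d)$ is understood to be $+\infty$), there is nothing to prove. So I assume that $\mu$ has an almost everywhere differentiable density $f$ with respect to $\gamma_d$ and that $I(\mu\mid\gamma_d)<\infty$.

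Next I write the relative entropy via \eqref{debruijn}:
\[
H(\mu\mid\gamma_d)=\int_0^1\frac{I(\slepian_t\mu\mid\gamma_d)}{2t}\,dt .
\]
This identity came from combining \cref{ent-formula}, which holds without assumptions on $\mu$ (both sides may be infinite), with \cref{fi-formula}, which holds for every $t\in(0,1)$. So it is available here with no extra hypotheses.

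For each $t\in(0,1)$, \cref{lem:fi} gives the pointwise bound $I(\slepian_t\mu\mid\gamma_d)/t\leq I(\mu\mid\gamma_d)$. The integrand is nonnegative, so I can integrate this bound over $(0,1)$ and obtain
\[
H(\mu\mid\gamma_d)\leq\frac12\int_0^1 I(\mu\mid\gamma_d)\,dt=\frac12 I(\mu\mid\gamma_d).
\]

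There is no genuine obstacle, since all the analytic work sits in \cref{lem:fi} and in the entropy formula. The only point to check is that the finiteness and measurability conventions line up. The map $t\mapsto I(\slepian_t\mu\mid\gamma_d)$ must be measurable so that the integral in \eqref{debruijn} makes sense. This follows because, by \cref{fi-formula}, it equals $t\,\E[|v_t|^2]$, and $v_t$ is jointly measurable in $(t,\omega)$.
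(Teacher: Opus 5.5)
Your proposal is correct and is exactly the paper's argument: the paper obtains the corollary by integrating the pointwise bound $I(\slepian_t\mu\mid\gamma_d)/t\leq I(\mu\mid\gamma_d)$ from \cref{lem:fi} against de Bruijn's identity \eqref{debruijn} over $t\in(0,1)$. Your remarks on the trivial cases and on measurability of $t\mapsto I(\slepian_t\mu\mid\gamma_d)$ are bookkeeping that the paper leaves implicit, and they are fine.
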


\begin{proof}[\bf Proof of \cref{thm:fi}]
We may assume $I(\mu\mid\gamma_d)<\infty$ without loss of generality. 
Observe that $\E[\int_0^1|v_s|^2ds]<\infty$ by \cref{ent-formula} and the Gaussian logarithmic Sobolev inequality. 
This particularly implies $\E[|X_1|^2]<\infty$ by \cref{prop:vt-mom}. 
These facts allow us to obtain 
\ben{\label{eq:non-stop}
H(P^{X_1}\mid P^{\euler_1})\leq\frac{1}{2}\int_0^{t_0}\E[|v_t|^2]dt
+\sum_{i=0}^{N-1}\int_{t_{i}}^{t_{i+1}}\bra{\E[|v_t|^2]-\E[|v_{t_i}|^2]}dt+\eps_\text{score}^2
}
by a similar argument to the derivation of \eqref{eq:kl-basic0}. 
Since $(v_t)_{t\in[0,1)}$ is an $\mathbf F^X$-martingale in $\mathbb R^d$ by \cref{prop:vt-mar}, we obtain
\[
\frac{1}{2}\int_0^{t_0}\E[|v_t|^2]dt\leq\frac{t_0}{2(1-t_0)}\int_{t_0}^1\E[|v_t|^2]dt
\leq t_0H(\mu\mid\gamma_d),
\]
where the last inequality follows by \cref{ent-formula} and $t_0\leq1/2$. 
Meanwhile, we have $\sup_{t\in[0,1)}\E[|v_t|^2]\leq I(\mu\mid\gamma_d)<\infty$ by \cref{fi-formula} and \cref{lem:fi}. Hence $v_1=\lim_{t\uparrow1}v_t$ exists a.s.~by Doob's convergence theorem. 
Moreover, $\E[|v_t-v_1|^2]\to0$ as $t\uparrow1$ by \cite[Theorem 12.1]{Wi91}. In particular, $(v_t)_{t\in[0,1]}$ is an $\mathbf F^X$-martingale in $\mathbb R^d$. Consequently,
\ba{
\sum_{i=0}^{N-1}\int_{t_{i}}^{t_{i+1}}\bra{\E[|v_t|^2]-\E[|v_{t_i}|^2]}dt
&\leq\sum_{i=0}^{N-1}h_i\bra{\E[|v_{t_{i+1}}|^2]-\E[|v_{t_i}|^2]}dt\\
&\leq\bar h\bra{\E[|v_{1}|^2]-\E[|v_{0}|^2]}
\leq\bar h\lim_{t\uparrow1}\E[|v_{t}|^2]\leq \bar hI(\mu\mid\gamma_d),
}
where the last inequality follows by \cref{lem:fi}. 
All together, we complete the proof. 
\end{proof}

%

\subsection{Proof of Theorem \ref{thm:ada}}\label{proof:ada}

It suffices to consider the case $\E[|X_1|^2]<\infty$. 
Recall that $\ada_{t_i}$ are obtained via the solution to the SDE \eqref{eq:sde-ada} with $m(t,x)$ replaced by $x/t+(1-t)\score(x,t)$. 
Therefore, by an analogous argument to the proof of \eqref{eq:kl-basic-1} but changing the definition of $b$ to
\[
b(t,x)=\begin{cases}
-x(t)/(1-t) & \text{if }0\leq t<t_0,\\
\frac{x(t_{i})/t_{i}+(1-t_i)\score(x(t_{i}),t_{i})-x(t)}{1-t} & \text{if }t_{i}\leq t<t_{i+1}\text{ for some }i=0,1,\dots,N-1,\\
0 & \text{if }t_N\leq t<1,
\end{cases}
\]
we obtain
\ban{
H(P^{X_{t_N}}\mid P^{\ada_{t_N}})
&\leq\frac{1}{2}\int_0^{t_0}\E\sbra{\abs{v_t+\frac{X_t}{1-t}}^2}dt
+\sum_{i=0}^{N-1}\int_{t_{i}}^{t_{i+1}}\E\sbra{\abs{v_t-\frac{m(t_i,X_{t_i})-X_t}{1-t}}^2}dt
+\eps_\text{score}^2\notag\\
&=\frac{1}{2}\int_0^{t_0}\frac{\E[|m(t,X_t)|^2]}{(1-t)^2}dt
+\sum_{i=0}^{N-1}\int_{t_{i}}^{t_{i+1}}\frac{\E[|m(t,X_t)-m(t_i,X_{t_i})|^2]}{(1-t)^2}dt
+\eps_\text{score}^2\notag\\
&=:I+II+\eps_\text{score}^2,\label{eq:ada-basic}
}
where the second line follows by \eqref{vt-formula}. 
By \cref{lem:sl}, Jensen's inequality and $t_0\leq1/2$,
\ben{\label{eq:ada-I}
I=\frac{1}{2}\int_0^{t_0}\frac{\E[|\E[X_1\mid X_t]|^2]}{(1-t)^2}dt
\leq\frac{t_0\E[|X_1|^2]}{2(1-t_0)}\leq t_0\E[|X_1|^2].
}
Also, by \cref{lem:sl} and the Markov property of $X$,
\ban{
II&=\sum_{i=0}^{N-1}\int_{t_{i}}^{t_{i+1}}\frac{\E[|\E[X_1\mid\mcl F^X_t]-\E[X_1\mid\mcl F^X_{t_i}]|^2]}{(1-t)^2}dt
\notag\\
&=\sum_{i=0}^{N-1}\int_{t_{i}}^{t_{i+1}}\frac{\E[|\E[X_1\mid\mcl F^X_t]|^2]-\E[|\E[X_1\mid\mcl F^X_{t_i}]|^2]}{(1-t)^2}dt
\notag\\
&\leq\sum_{i=0}^{N-1}\bra{\E[|\E[X_1\mid\mcl F^X_{t_{i+1}}]|^2]-\E[|\E[X_1\mid\mcl F^X_{t_i}]|^2]}\frac{h_i}{(1-t_i)(1-t_{i+1})}
\notag\\
&=\sum_{i=0}^{N-1}\bra{A_{t_i}-A_{t_{i+1}}}\frac{h_i}{(1-t_i)(1-t_{i+1})},\label{eq:ada-II}
}
where $A_t=\E[\trace(\covariance[X_1\mid X_t])]$. 
Hence, by \eqref{eq:sampling} and \eqref{def:gamma},
\ba{
II&\leq\kappa\sum_{i=0}^{N-1}\bra{A_{t_i}-A_{t_{i+1}}}\frac{1}{1-t_i}\\
&=\kappa\sum_{i=0}^{N-1}\bra{\E[\trace(\Gamma_{t_i})]-\frac{1-t_{i+1}}{1-t_i}\E[\trace(\Gamma_{t_{i+1}})]}\\
&=\kappa\sum_{i=0}^{N-1}\frac{h_i}{1-t_i}\E[\trace(\Gamma_{t_{i+1}})]
+\kappa\E[\trace(\Gamma_{t_0})]-\kappa\E[\trace(\Gamma_{t_{N}})]\\
&\leq\kappa\sum_{i=0}^{N-1}\frac{h_i}{1-t_i}\E[\trace(\Gamma_{t_{i+1}})]
+\kappa\frac{\E[|X_1|^2]}{1-t_0}.
}
Since $t_0\leq1/2$, we conclude
\be{
H(P^{X_{t_N}}\mid P^{\ada_{t_N}})
\leq t_0\E[|X_1|^2]
+\kappa\sum_{i=0}^{N-1}\frac{h_i}{1-t_i}\E[\trace(\Gamma_{t_{i+1}})]
+2\kappa\E[|X_1|^2]
+\eps_\text{score}^2.
}
By \eqref{eq:v2}, we have for any $t\in(0,1)$
\ba{
\E[\trace(\Gamma_t)]\leq d+(1-t)\bra{\E[|X_1|^2]-d}
=td+(1-t)\E[|X_1|^2].
}
Therefore,
\ba{
&H(P^{X_{t_N}}\mid P^{\ada_{t_N}})\\
&\leq t_0\E[|X_1|^2]
+\kappa d\sum_{i=0}^{N-1}\frac{h_it_i}{1-t_i}
+\kappa(t_n-t_0)\E[|X_1|^2]
+2\kappa\E[|X_1|^2]
+\eps_\text{score}^2\\
&\leq t_0\E[|X_1|^2]
+\kappa d\int_{t_0}^{t_N}\frac{1}{1-t}dt
+3\kappa\E[|X_1|^2]
+\eps_\text{score}^2\\
&=t_0\E[|X_1|^2]
+\kappa d\log\bra{\frac{1-t_0}{\delta}}
+3\kappa\E[|X_1|^2]
+\eps_\text{score}^2.
}
Hence we complete the proof.\qed

\subsection{Proof of Theorem \ref{thm:ada-m}}\label{proof:ada-m}

The key to the proof is the following fact due to \citet{HWC26}:
\begin{lemma}[cf.~\cite{HWC26}, Lemma 10]\label{lem:hwc}
Under the assumptions of \cref{thm:ada-m}, there exists a universal constant $C\geq1$ such that
\[
\E[\trace(\covariance[X_1\mid X_t])]\leq C\frac{1-t}{t}L
\]
for all $t\in(0,1-\delta]$. 
\end{lemma}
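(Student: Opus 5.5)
The plan is to rewrite the statement as a Gaussian denoising problem and bound the posterior variance by the mean squared error of a simple estimator built from an $\eps_0$-net of $\mcl X$, in the spirit of \cite{HWC26}. By \cref{prop:basic}, $X_t/t=X_1+\sigma Z$ with $\sigma^2:=(1-t)/t$ and $Z\sim\gamma_d$ independent of $X_1$. Moreover $\sigma(X_t)=\sigma(X_t/t)$. Since the conditional mean minimizes mean squared error,
\[
\E[\trace(\covariance[X_1\mid X_t])]=\E[|X_1-\E[X_1\mid X_t]|^2]\le \E[|X_1-\hat x|^2]
\]
for every $\sigma(X_t)$-measurable $\hat x$. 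The target bound then reads $\E|X_1-\hat x|^2\lesssim \sigma^2L$. For $t\le 1-\delta$ we have $\sigma^2\ge 1-t\ge\delta$, and this is how the $\delta$'s in \ref{ass:intrinsic} and in $L$ get absorbed into $\sigma^2$.

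\textbf{Estimator and basic inequality.} Let $\mcl N\subset\mcl X$ be an $\eps_0$-net with $|\mcl N|\le\eps_0^{-k}$, and let $\hat x:=\arg\min_{u\in\mcl N}|X_t/t-u|$ (a measurable tie-break). Let $x^*\in\mcl N$ satisfy $|X_1-x^*|<\eps_0$; it is a function of $X_1$ only, hence independent of $Z$. Comparing $|X_1+\sigma Z-\hat x|\le|X_1+\sigma Z-x^*|$ and expanding gives
\[
|X_1-\hat x|^2\le \eps_0^2+2\sigma Z\cdot(\hat x-x^*).
\]
I will split $Z\cdot(\hat x-x^*)$ into two pieces.

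\textbf{Bounding the two Gaussian pieces.} The first piece is $\sup_{u\in\mcl N}Z\cdot(u-x^*)/|u-x^*|$, multiplied by $|\hat x-x^*|\le |\hat x-X_1|+\eps_0$. The first factor is a maximum of at most $|\mcl N|^2$ standard normals, uniformly over the random $x^*$. On an event $E$ of probability at least $1-\delta^2/R^2$ (or a similar polynomially small level), it is therefore at most $C\sqrt{k\log(1/\eps_0)+\log(R/\delta)}\le C\sqrt L$. The second piece is $Z\cdot(X_1-x^*)$ with $|X_1-x^*|\le\eps_0$ and $X_1,x^*\in\mcl X$. It is controlled in expectation by the Gaussian width condition \eqref{eq:gw}, which gives at most $\sqrt\delta\,k\log(1/\eps_0)\le\sigma L$.

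Plugging both pieces in yields, on $E$,
\[
a^2\le\eps_0^2+C\sigma\sqrt L\,(a+\eps_0)+2\sigma|Z\cdot(X_1-x^*)|,\qquad a:=|X_1-\hat x|.
\]
Solving this quadratic inequality gives $a^2\lesssim \sigma^2L+\eps_0^2+\sigma\eps_0\sqrt L+\sigma|Z\cdot(X_1-x^*)|$. The assumption $\eps_0\le\sqrt\delta\log(1/\eps_0)\le\sigma\log(1/\eps_0)$ then makes the $\eps_0$-terms of order $\sigma^2L$.

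\textbf{Bad event.} On $E^c$, I bound $\E[|X_1-\hat x|^2\mathbf 1_{E^c}]$ by Cauchy--Schwarz and \ref{ass:mom}. For this I use $|X_1-\hat x|\le |X_1-\E X_1|+|\hat x-\E X_1|$, and compare $\hat x$ with $X_t/t$, whose fourth moment is $\lesssim R^4+\sigma^4d^2$. Alternatively, I can truncate and replace $\hat x$ by $\E X_1$ on $E^c$, which is legitimate since $E$ can be chosen $\sigma(X_t)$-measurable if defined via $X_t/t$ only. The probability level is tuned so that this contributes $\lesssim\sigma^2$; this is where the $\log(R/\delta)$ in $L$ enters.

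\textbf{Main obstacle.} I expect the hardest part to be making the $\eps_0^2$ term linear rather than quadratic in $L$: the crude bound $\eps_0^2\le\delta\log^2(1/\eps_0)$ gives $L^2$. I would handle this by also using $\eps_0\le e^{-1}$, and if necessary by re-running the argument with a coarser net at scale $\sigma$ when $\eps_0$ is not small relative to $\sigma$.
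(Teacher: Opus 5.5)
Your route is genuinely different from the paper's, and its core is sound.

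\textbf{How the two routes differ.} The paper works with the posterior $\mu_{t,X_t}$ itself. It shows that, outside an event of probability $2\rho$, the posterior gives mass at most $e^{-3C_0L_\rho}$ to points at distance $\gtrsim\sqrt{L_\rho(1-t)/t}$ from $X_1$ (\cref{lem:hwc-lem11}). This needs control of $Z_t\cdot(x-y)$ uniformly over all $x,y\in\mcl X$: a union bound over the net, plus Borell--TIS and the Gaussian width \eqref{eq:gw} within distance $\eps_0$. The exceptional event and the far mass are then paid for by Cauchy--Schwarz with the fourth moment, taking $\rho=\delta^2/R^4$. You instead compare the posterior variance with the risk of the nearest-net-point estimator, which only involves the finitely many directions $(u-v)/|u-v|$, $u\ne v\in\mcl N$.

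Done cleanly, your route is shorter and needs neither \eqref{eq:gw} nor the fourth-moment bound in \ref{ass:mom}. Set $M:=\max_{u\ne v\in\mcl N}|Z\cdot(u-v)|/|u-v|$. Your basic inequality gives, pointwise, $a^2\le\eps_0^2+2\sigma M(a+\eps_0)$, hence $a^2\le 4\eps_0^2+6\sigma^2M^2$. Since $\E[M^2]\lesssim 1+\log|\mcl N|\lesssim k\log(1/\eps_0)$,
\[
\E[\trace(\covariance[X_1\mid X_t])]\lesssim \eps_0^2+\frac{1-t}{t}\,k\log(1/\eps_0).
\]
The paper's route, in exchange, yields the stronger high-probability localization statement of \cref{lem:hwc-lem11}. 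Your ``second piece'' $Z\cdot(X_1-x^*)$ does not actually arise from the basic inequality. Even if it did, $\E|Z\cdot(X_1-x^*)|\le\eps_0$ because $Z$ is independent of $X_1$, so \eqref{eq:gw} would not be needed for it.

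\textbf{Two steps that fail as written.} First, your treatment of $E^c$ does not give a dimension-free bound. The fourth moment of $X_t/t$ carries $\sigma^4d^2$, so tuning $P(E^c)$ would put $\log d$ into $L$. Your alternative fails too: $E=\{M\lesssim\sqrt L\}$ is defined through $Z$, which is not a function of $X_t$, so you cannot switch to $\E[X_1]$ on it. Simply drop the event and argue in expectation as above.

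Second, and more important, you leave the $\eps_0^2$ term open, and your remedies do not help. The bound $\eps_0\le e^{-1}$ only gives $\log(1/\eps_0)\ge1$. The bad regime is $\eps_0\gg\sigma$, where a net at scale $\sigma$ is finer, not coarser, and \ref{ass:intrinsic} controls no covering number below scale $\eps_0$. The missing input is $\eps_0^2\lesssim\delta L$. This is exactly what the paper invokes at the corresponding point: $\eps_0\le\sqrt{L_\rho\delta}$ in the proofs of Lemmas \ref{lem:z-modulus} and \ref{lem:hwc-lem11}, with $L_\rho\lesssim L$. Granting it, $\eps_0^2\lesssim\delta L\le\sigma^2L$ for $t\le1-\delta$, and your argument is complete.

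Your concern is legitimate for the condition as typeset. The bound $\eps_0\le\sqrt\delta\log(1/\eps_0)$ alone does not imply $\eps_0\lesssim\sqrt{L_\rho\delta}$: take $k=1$, $R=3$, $\delta=\eps_0^2/\log^2(1/\eps_0)$ and let $\eps_0\to0$. By contrast, $\eps_0\le\sqrt{\delta\log(1/\eps_0)}$ does imply it. On this point your proof and the paper's stand or fall together.
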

Since this lemma is stated in a slightly more general form than the original one, we give its proof after showing that \cref{thm:ada-m} follows from this lemma. 
Here and below, for two real numbers $x,y$ we write $x\lesssim y$ if there exists a universal constant $C>0$ such that $x\leq Cy$. 

\begin{proof}[\upshape\bf Proof of \cref{thm:ada-m}]
By \eqref{eq:ada-basic} and \eqref{eq:ada-I}, we have
\ben{\label{eq:ada-m-basic}
H(P^{X_{t_N}}\mid P^{\ada_{t_N}})
\leq t_0\E[|X_1|^2]+II+\eps_\text{score}^2.
}
To bound $II$, we start from the bound \eqref{eq:ada-II}. 
Let $i_0\in\{0,1,\dots,N-1\}$ be the largest index such that $t_{i_0}\leq1/2$. 
By \eqref{eq:ada-II}, we can bound $II$ as
\ban{
II&\leq\sum_{i:0\leq i\leq i_0}\bra{A_{t_i}-A_{t_{i+1}}}\frac{h_i}{(1-t_i)(1-t_{i+1})}
+\sum_{i:i_0< i<N}\bra{A_{t_i}-A_{t_{i+1}}}\frac{h_i}{(1-t_i)(1-t_{i+1})}\notag\\
&=:II_1+II_2.\label{eq:ada-m-II}
}
First we bound $II_2$. 
If $i_0=N-1$, then $II_2=0$. Otherwise, we have $t_{i_0+1}>1/2$. Hence
\ban{
II_2&\leq\kappa\sum_{i:i_0< i<N}\bra{A_{t_i}-A_{t_{i+1}}}\frac{1}{1-t_i}
\leq\kappa\bra{\frac{A_{t_{i_0+1}}}{1-t_{i_0+1}}+\sum_{i:i_0< i<N-1}A_{t_{i+1}}\frac{h_i}{(1-t_{i+1})(1-t_i)}}
\notag\\
&\lesssim \kappa L\bra{1+\sum_{i:i_0< i<N}\frac{h_i}{1-t_i}}
\leq \kappa L\bra{1+\log\bra{\frac{1-t_0}{\delta}}},\label{eq:ada-m-II2}
}
where the first inequality follows by \ref{ass:sampling} and the third by \cref{lem:hwc}. 
Next we bound $II_1$. 
Since $\max_ih_i\leq\kappa$ by \ref{ass:sampling} and $A_{t_0}\leq\E[|X_1|^2]$, 
\ba{
II_1&\leq4\sum_{i:0\leq i\leq i_0}(A_{t_i}-A_{t_{i+1}})h_i
\leq4\kappa A_{t_0}\leq4\kappa\E[|X_1|^2].
}
Combining this with \eqref{eq:ada-m-basic}--\eqref{eq:ada-m-II2} gives \eqref{eq:ada-m}. 
If we further assume \eqref{eq:sampling-r},
\ba{
II_1&\leq24\kappa\sum_{i:0\leq i\leq i_0}\bra{A_{t_i}-A_{t_{i+1}}}t_i
\leq24\kappa \bra{A_{t_0}t_0+\sum_{i:0\leq i<i_0}A_{t_{i+1}}h_i}\\
&\lesssim \kappa L\bra{1+\sum_{i:0\leq i<i_0}\frac{h_i}{t_i}}
\leq \kappa L\bra{1+\log(1/t_0)}.
}
Hence we can replace $\kappa\E[|X_1|^2]$ in \eqref{eq:ada-m} by $C\kappa L\log(1/t_0)$ for some universal constant $C$.
\end{proof}


Now we turn to the proof of \cref{lem:hwc}. 
The proof strategy is the same as \cite[Lemma 10]{HWC26}. 
We first introduce some notation. 
We set $m:=\mcl N(\mcl X,\eps_0)$. 
Note that $\log m\leq k\log(1/\eps_0)$ by assumption. 
Also, by definition we can take an $\eps_0$-net $\{x^*_1,\dots,x^*_m\}$ for $\mcl X$. 
Note that $x^*_1,\dots,x^*_m$ are distinct due to the maximality of $m$. 
We fix a constant $\rho\in(0,1/3]$ whose value is specified in the main body of the proof of \cref{lem:hwc}. 
We write $\mcl B_i=\mcl B(x_i^*;\eps_0)$ for $i=1,\dots,m$ and set 
\[
\mcl I:=\{i\in\{1,\dots,m\}:\mu(\mcl B_i)\geq \rho/m\}.
\]
The following three lemmas are counterparts of Lemmas B.3 and B.4 in \cite{HWC26}. 
\begin{lemma}\label{lem:prob-lb}
$P(X_1\notin\bigcup_{i\in\mcl I}\mcl B_i)\leq\rho$. 
\end{lemma}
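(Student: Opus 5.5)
The plan is a union bound over the light balls. Because $\{x^*_1,\dots,x^*_m\}$ is an $\eps_0$-net of $\mcl X$, we have $\mcl X\subset\bigcup_{i=1}^m\mcl B_i$. Also $X_1\sim\mu$ and $\mu(\mcl X)=1$, since $\mcl X$ is the (closed) support of $\mu$. Hence, almost surely, $X_1$ lies in some $\mcl B_i$. Therefore
\[
\Bigl\{X_1\notin\bigcup_{i\in\mcl I}\mcl B_i\Bigr\}\subset\Bigl\{X_1\in\bigcup_{i\notin\mcl I}\mcl B_i\Bigr\}\cup\{X_1\notin\mcl X\},
\]
and the last event is null.

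Next apply the union bound together with the definition of $\mcl I$. For each $i\in\{1,\dots,m\}\setminus\mcl I$ we have $P(X_1\in\mcl B_i)=\mu(\mcl B_i)<\rho/m$. Summing over at most $m$ such indices gives
\[
P\Bigl(X_1\notin\bigcup_{i\in\mcl I}\mcl B_i\Bigr)\leq\sum_{i\notin\mcl I}\mu(\mcl B_i)\leq m\cdot\frac{\rho}{m}=\rho .
\]

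There is no real obstacle here. The only points to check are that the net covers the support, which holds by definition of an $\eps_0$-net of $\mcl X$, and that $\mu(\mcl X)=1$. The latter is standard for supports in the separable space $\mathbb R^d$: the complement of the support is a countable union of $\mu$-null open balls. Whether the balls are open or closed is irrelevant to the argument. The bound holds for any $\rho>0$, so the restriction $\rho\le1/3$ is not needed at this step.
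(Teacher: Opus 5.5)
Your proof is correct and is essentially the paper's argument: cover $\mathcal X$ by the net balls, observe that if $X_1$ misses every ball indexed by $\mathcal I$ it must lie in some ball indexed by $\mathcal I^c$, and apply the union bound using $\mu(\mathcal B_i)<\rho/m$ for $i\notin\mathcal I$. Your explicit check that $\mu(\mathcal X)=1$, so that $\{X_1\notin\mathcal X\}$ is a null event, fills in a step the paper leaves implicit.
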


\begin{proof}
Since $\{\mcl B_1,\dots,\mcl B_m\}$ is a covering of $\mcl X$, $x\in\mcl X\setminus\bigcup_{i\in\mcl I}\mcl B_i$ implies $x\in\bigcup_{i\in\mcl I^c}\mcl B_i$. 
Hence
\ba{
P\bra{X_1\notin\bigcup_{i\in\mcl I}\mcl B_i}
\leq P\bra{X_1\in\bigcup_{i\in\mcl I^c}\mcl B_i}
\leq\sum_{i\in\mcl I^c}P(X_1\in\mcl B_i)\leq \rho.
}
\end{proof}

\begin{lemma}\label{lem:z-modulus}
Set $Z_t=\beta_t/\sqrt{t(1-t)}$. 
There exists a universal constant $C_0\geq1$ such that
\ben{\label{eq:z-modulus}
|Z_t\cdot(x-y)|\leq C_0\bra{L_\rho\sqrt\delta+\sqrt{L_\rho}|x-y|}\quad\text{for any }x,y\in\mcl X
}
with probability at least $1-\rho$, where $L_\rho:=k\log(1/\eps_0)+\log(1/\rho)$.   
\end{lemma}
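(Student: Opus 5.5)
The plan is to discretize $\mcl X$ with the $\eps_0$-net $\{x^*_1,\dots,x^*_m\}$ fixed above. I would split $Z_t\cdot(x-y)$ into a \emph{net part}, handled by a union bound over pairs of net points, and a \emph{local fluctuation part}, handled by the Gaussian-width condition \eqref{eq:gw} together with Gaussian concentration. First note that by \cref{prop:basic} and $\E[\beta_t\beta_t^\top]=t(1-t)I_d$, we have $Z_t\sim N(0,I_d)$ for each fixed $t\in(0,1)$. So the statement is purely about a standard Gaussian vector $Z:=Z_t$ in $\mathbb R^d$. I will read \eqref{eq:gw} with $Z\sim N(0,I_d)$, which is what the inner product requires.

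Given $x,y\in\mcl X$, choose indices $i,j$ with $|x-x^*_i|<\eps_0$ and $|y-x^*_j|<\eps_0$. Then
\[
Z\cdot(x-y)=Z\cdot(x^*_i-x^*_j)+Z\cdot(x-x^*_i)-Z\cdot(y-x^*_j).
\]

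\textbf{Net part.} For each of the at most $m^2$ pairs $(i,j)$, $Z\cdot(x^*_i-x^*_j)\sim N(0,|x^*_i-x^*_j|^2)$. A Gaussian tail bound plus a union bound gives, with probability at least $1-\rho/2$, simultaneously for all pairs,
\[
|Z\cdot(x^*_i-x^*_j)|\leq\sqrt{2\log(4m^2/\rho)}\,|x^*_i-x^*_j|\lesssim\sqrt{L_\rho}\,|x^*_i-x^*_j|,
\]
using $\log m\leq k\log(1/\eps_0)$. Since $|x^*_i-x^*_j|\leq|x-y|+2\eps_0$, the extra term is $\sqrt{L_\rho}\,\eps_0$. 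By \ref{ass:intrinsic}, $\eps_0\leq\sqrt\delta\log(1/\eps_0)\leq\sqrt\delta\sqrt{L_\rho}$, because $\log(1/\eps_0)\geq1$, $k\geq1$ and $\rho\leq1/3$ give $\log(1/\eps_0)\leq L_\rho$. Hence $\sqrt{L_\rho}\,\eps_0\leq L_\rho\sqrt\delta$.

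\textbf{Local part.} Both $x-x^*_i$ and $y-x^*_j$ lie in $\{u-v:u,v\in\mcl X,|u-v|\leq\eps_0\}$, since the net is a subset of $\mcl X$. So it suffices to control
\[
S(Z):=\sup_{u,v\in\mcl X:|u-v|\leq\eps_0}|(u-v)\cdot Z|.
\]
We may take the supremum over a countable dense subset to settle measurability. $S$ is $\eps_0$-Lipschitz in $Z$, so Gaussian concentration for Lipschitz functions (Borell--TIS) gives, with probability at least $1-\rho/2$,
\[
S(Z)\leq\E[S(Z)]+\eps_0\sqrt{2\log(2/\rho)}.
\]
By \eqref{eq:gw}, $\E[S(Z)]\leq\sqrt\delta\,k\log(1/\eps_0)\leq\sqrt\delta L_\rho$. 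By the bound on $\eps_0$ above, $\eps_0\sqrt{2\log(2/\rho)}\lesssim\sqrt\delta L_\rho$ as well.

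Intersecting the two events (total failure probability at most $\rho$) and collecting terms yields \eqref{eq:z-modulus} with a universal $C_0$. The step needing the most care is the local part: one must invoke concentration of the supremum rather than only its expectation, and verify that the Lipschitz constant is $\eps_0$ so that the deviation term is absorbed into $L_\rho\sqrt\delta$ via the condition $\eps_0\leq\sqrt\delta\log(1/\eps_0)$. The remaining steps are routine bookkeeping of constants.
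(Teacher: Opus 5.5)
Your proof follows the paper's proof step for step. Both use $Z_t\sim N(0,I_d)$ and Borell--TIS for the local supremum combined with \eqref{eq:gw}, which, as you say, must be read with $Z\sim N(0,I_d)$. Both then take a union bound over pairs of net points and finish with the triangle inequality through $x^*_i,x^*_j$, and your probability bookkeeping is fine.

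The one step that does not hold as written is the absorption of the $\eps_0$-terms. You assert $\eps_0\le\sqrt\delta\log(1/\eps_0)\le\sqrt\delta\sqrt{L_\rho}$ on the grounds that $\log(1/\eps_0)\le L_\rho$. That inequality only gives $\log(1/\eps_0)\le L_\rho$, not $\log(1/\eps_0)\le\sqrt{L_\rho}$. For instance, take $k=1$, $\rho=1/3$, $\eps_0=e^{-a}$ with $a\ge1$, and $\sqrt\delta=\eps_0/a$, so the displayed condition in \ref{ass:intrinsic} holds with equality. Then $\sqrt{L_\rho}\,\eps_0/(L_\rho\sqrt\delta)=a/\sqrt{a+\log 3}\to\infty$. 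So the bound $\sqrt{L_\rho}\,\eps_0\lesssim L_\rho\sqrt\delta$, which your net part and your local part both rely on, cannot be derived from that condition with a universal constant.

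The paper's proof has exactly the same dependence. Its last inequality uses $3\sqrt{L_\rho}\eps_0\le 3L_\rho\sqrt\delta$, i.e.\ $\eps_0\le\sqrt{L_\rho\delta}$, and the proof of \cref{lem:hwc-lem11} invokes $\eps_0\le\sqrt{L_\rho\delta}$ explicitly. This bound holds if the condition is read as $\eps_0\le\sqrt{\delta\log(1/\eps_0)}$, since then $\eps_0^2\le\delta\log(1/\eps_0)\le\delta L_\rho$. Given the reuse in \cref{lem:hwc-lem11}, that is presumably the intended reading. Under it your proof is complete and coincides with the paper's.

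If you want \cref{lem:z-modulus} under the condition exactly as displayed, split into two cases.
\begin{itemize}
\item When $|x-y|>\eps_0$, your argument goes through, using for the local terms the $j=0$ event below. This works because $\eps_0\sqrt{\log(8/\rho)}<|x-y|\sqrt{\log(8/\rho)}\lesssim\sqrt{L_\rho}|x-y|$ and $|x^*_i-x^*_j|\le 3|x-y|$.
\item When $0<|x-y|\le\eps_0$, skip the net and peel. Let $S_j:=\sup\{|(u-v)\cdot Z|:u,v\in\mcl X,\ |u-v|\le 2^{-j}\eps_0\}$ and apply Borell--TIS to $S_j$ at failure level $\rho 2^{-j-3}$ for each $j\ge0$. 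The total failure probability is then at most $\rho/4+\rho/2\le\rho$. Since $\E S_j\le\E S_0\le\sqrt\delta L_\rho$, every pair with $2^{-j-1}\eps_0<|x-y|\le 2^{-j}\eps_0$ satisfies $|Z\cdot(x-y)|\le\sqrt\delta L_\rho+2|x-y|\sqrt{2\log(8/\rho)}+2^{-j}\sqrt{2j\log 2}\,\eps_0$.
\item In that bound the middle term is $\lesssim\sqrt{L_\rho}|x-y|$. The last term is at most $\eps_0\le\sqrt\delta\log(1/\eps_0)\le\sqrt\delta L_\rho$, so the deviation scale is matched to $|x-y|$ rather than to $\eps_0$.
\end{itemize}
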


\begin{proof}
Observe that $Z_t\sim N(0,1)$. 
Hence, by the Borell--TIS inequality (see e.g.~Proposition A.2.1 in \cite{VaWe23}) and \eqref{eq:gw}, we have for any $u\geq0$
\ba{
\sup_{x,y\in\mcl X:|x-y|\leq\eps_0}|(x-y)\cdot Z_t|
&\leq\E\sbra{\sup_{x,y\in\mcl X:|x-y|\leq\eps_0}|(x-y)\cdot Z_t|}+u\sup_{x,y\in\mcl X:|x-y|\leq\eps_0}\sqrt{\E\sbra{|(x-y)\cdot Z_t|^2}}\\
&\leq \sqrt\delta k\log(1/\eps_0) +u\eps_0
}
with probability at least $1-2e^{-u^2/2}$. 
Taking $u=\sqrt{4\log(1/\rho)}$ gives
\ben{\label{eq:sup-g}
\sup_{x,y\in\mcl X:|x-y|\leq\eps_0}|(x-y)\cdot Z_t|\leq \sqrt\delta k\log(1/\eps_0) +\eps_0\sqrt{4\log(1/\rho)}
}
with probability at least $1-\rho/2$. 

Next, for any $i<j$, we have $\{Z_t\cdot(x^*_i-x^*_j)\}/|x^*_i-x^*_j|\sim N(0,1)$, so 
\[
P(|Z_t\cdot(x^*_i-x^*_j)|\geq u|x^*_i-x^*_j|)\leq e^{-u^2/2}
\]
for all $u\geq0$. Thus, setting $u=\sqrt{2\log(m^2/\rho)}$, we obtain by the union bound
\ba{
P\bra{|Z_t\cdot(x^*_i-x^*_j)|\geq u|x^*_i-x^*_j|\text{ for some }1\leq i<j\leq m}
\leq\binom{m}{2}e^{-u^2/2}\leq\frac{\rho}{2}.
}
Since $u\lesssim \sqrt{L_\rho}$, we have
\ben{\label{eq:max-g}
|Z_t\cdot(x^*_i-x^*_j)|\lesssim \sqrt{L_\rho}|x^*_i-x^*_j|\quad\text{for any }1\leq i<j\leq m
}
with probability at least $1-\rho/2$. 

Finally, for any $x,y\in\mcl X$, there exist $i,j\in\{1,\dots,m\}$ such that $x\in\mcl B_i$ and $y\in\mcl B_j$. Then, on the event that \eqref{eq:sup-g} and \eqref{eq:max-g} jointly holds, 
\ba{
|Z_t\cdot(x-y)|
&\leq|Z_t\cdot(x-x_i^*)|+|Z_t\cdot(x^*_i-x^*_j)|+|Z_t\cdot(x_j^*-y)|\\
&\lesssim \sqrt\delta k\log(1/\eps_0) +\eps_0\sqrt{\log(1/\rho)}+\sqrt{L_\rho}|x^*_i-x^*_j|\\
&\leq \sqrt\delta k\log(1/\eps_0)+3\sqrt{L_\rho}\eps_0+\sqrt{L_\rho}|x-y|\\
&\leq4L_\rho\sqrt{\delta}+\sqrt{L_\rho}|x-y|.
}
This completes the proof. 
\end{proof}

\begin{lemma}\label{lem:hwc-lem11}
With $C_0$ the constant in \cref{lem:z-modulus}, define a random subset $\mcl E_t\subset\mcl X$ as
\[
\mcl E_t:=\cbra{y\in\mcl X:|y-X_1|\geq5C_0\sqrt{L_\rho\frac{1-t}{t}}}.
\]
Then, with probability at least $1-2\rho$,
\ben{\label{eq:hwc-lem11}
\mu_{t,X_t}(\mcl E_t)\leq\exp\bra{-3C_0L_\rho}.
}
\end{lemma}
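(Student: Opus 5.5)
We will prove \cref{lem:hwc-lem11} by writing $\mu_{t,X_t}(\mcl E_t)$ as a ratio of two integrals. The numerator is bounded from above and the denominator from below, each on a good event of probability at least $1-\rho$, so the two events together have probability at least $1-2\rho$.

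\textbf{Rewriting the posterior.} By \cref{prop:basic} we have $X_t=tX_1+\beta_t=tX_1+\sqrt{t(1-t)}Z_t$. Set $s:=t/(1-t)$. Expanding $|ty-X_t|^2$ in the density $d\mu_{t,x}/d\mu$ from the proof of \cref{lem:sl} and cancelling the factor that does not depend on $y$ gives
\[
\mu_{t,X_t}(\mcl E_t)=\frac{\int_{\mcl E_t}w(y)\,\mu(dy)}{\int_{\mathbb R^d}w(y)\,\mu(dy)},\qquad
w(y):=\exp\bra{-\frac{s}{2}|y-X_1|^2+\sqrt s\,Z_t\cdot(y-X_1)}.
\]
We will use that $t\leq1-\delta$ implies $s\leq(1-\delta)/\delta\leq1/\delta$, hence $\sqrt{s\delta}\leq1$.

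\textbf{Numerator.} We work on the event of \cref{lem:z-modulus}, which has probability at least $1-\rho$. Note that $X_1\in\mcl X$ a.s. Fix $y\in\mcl E_t$ and put $a:=\sqrt s|y-X_1|$, so that $a\geq5C_0\sqrt{L_\rho}$ by definition of $\mcl E_t$. Then \eqref{eq:z-modulus} together with $\sqrt{s\delta}\le 1$ gives
\[
\log w(y)\leq-\frac{a^2}{2}+C_0\sqrt{L_\rho}\,a+C_0L_\rho.
\]
Since $C_0\sqrt{L_\rho}\,a\leq a^2/5$, the right-hand side is at most $-\tfrac{3}{10}a^2+C_0L_\rho\leq-\tfrac{15}{2}C_0^2L_\rho+C_0L_\rho\leq-6C_0L_\rho$, using $C_0\geq1$. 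Hence the numerator is at most $e^{-6C_0L_\rho}$, and we have a budget of $3C_0L_\rho$ to spend on the denominator.

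\textbf{Denominator.} By \cref{lem:prob-lb}, with probability at least $1-\rho$ there is an $i\in\mcl I$ with $X_1\in\mcl B_i$. We then restrict the integral to $\mcl B_i$, which has $\mu(\mcl B_i)\geq\rho/m$. The mass factor contributes $\log(m/\rho)\leq k\log(1/\eps_0)+\log(1/\rho)=L_\rho$. For $y\in\mcl B_i\cap\mcl X$ we have $|y-X_1|<2\eps_0$, so \eqref{eq:z-modulus} bounds the linear term from below by $-C_0(L_\rho+2\sqrt{L_\rho}\sqrt s\,\eps_0)$. The quadratic term is bounded from below by $-2s\eps_0^2$.

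\textbf{Main obstacle.} The hard step is controlling $s\eps_0^2$ by a multiple of $L_\rho$. The assumption $\eps_0\leq\sqrt\delta\log(1/\eps_0)$ together with $s\leq1/\delta$ only gives $s\eps_0^2\leq\log^2(1/\eps_0)$, which can exceed $L_\rho$.

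We would first try to fix this by not integrating over the whole ball $\mcl B_i$. Instead we would lower bound $\mu\bigl(\mcl B(X_1;r)\bigr)$ at the scale $r\asymp\sqrt{L_\rho/s}$, using the covering bound at that scale via the net-counting argument of \cref{lem:prob-lb}. With this radius the quadratic and linear terms are both $O(L_\rho)$, and this dominates the $\sqrt\delta$-scale error since $r\gtrsim\sqrt{\delta L_\rho}$. If that works, the constants combine to show that the denominator is at least $e^{-3C_0L_\rho}$, once $C_0$ is enlarged if necessary. Dividing the numerator bound by this gives \eqref{eq:hwc-lem11} on the intersection of the two good events.
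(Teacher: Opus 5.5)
Your rewriting of $\mu_{t,X_t}(\mcl E_t)$ as a ratio with weight $w$, and your numerator bound, are the paper's argument. The paper's exponent $-U_t(y)/(2(1-t))$ is your $\log w(y)$, and your estimate even keeps the factor $C_0^2$ that the paper drops. The paper also handles the denominator exactly as you first attempt: it restricts to $\mcl B_{i^*}$ with $i^*\in\mcl I$ and $X_1\in\mcl B_{i^*}$.

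\textbf{The gap.} Your proposal never completes the denominator step. The paper completes it using $\eps_0\le\sqrt{L_\rho\delta}$ together with $\delta\le1-t$. In your notation this gives $s\eps_0^2\le L_\rho$ and $\sqrt s\,\eps_0\le\sqrt{L_\rho}$. So for $y\in\mcl B_{i^*}$, where $|y-X_1|<2\eps_0$, your own bounds yield $\log w(y)\ge-2L_\rho-3C_0L_\rho$. With $\mu(\mcl B_{i^*})\ge\rho/m\ge e^{-L_\rho}$, the denominator is at least $e^{-(3+3C_0)L_\rho}$. Against your numerator $e^{-(15C_0^2/2-C_0)L_\rho}$, this gives $\mu_{t,X_t}(\mcl E_t)\le e^{-3C_0L_\rho}$ once $C_0\ge2$. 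That costs nothing, since \eqref{eq:z-modulus} stays true when $C_0$ is enlarged.

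\textbf{Your objection and your fallback.} Your objection is correct as far as the literal wording of \ref{ass:intrinsic} goes. The condition $\eps_0\le\sqrt\delta\log(1/\eps_0)$ only yields $s\eps_0^2\le\log^2(1/\eps_0)$. The inequality $\eps_0\le\sqrt{L_\rho\delta}$ would instead follow from $\eps_0\le\sqrt{\delta\log(1/\eps_0)}$, because $L_\rho\ge k\log(1/\eps_0)\ge\log(1/\eps_0)$. Note, however, that the same inequality is used in the last display of the proof of \cref{lem:z-modulus}, to absorb $3\sqrt{L_\rho}\eps_0$ into $3L_\rho\sqrt\delta$. So by taking \eqref{eq:z-modulus} as given you are already relying on it.

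Your fallback does not work as stated. The difficulty arises exactly when $\sqrt{L_\rho/s}<\eps_0$, so you need a bound on $\mcl N(\mcl X,r)$ at a scale $r$ below $\eps_0$. But \ref{ass:intrinsic} controls covering numbers only at scale $\eps_0$. Without $\log\mcl N(\mcl X,r)\lesssim L_\rho$, the net-counting argument of \cref{lem:prob-lb} gives no usable lower bound on the prior mass near $X_1$.

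The missing ingredient can be supplied by Sudakov's minoration combined with the Gaussian-width condition \eqref{eq:gw}. Since $x_i^*\in\mcl X$, we have $\E\sup_{x\in\mcl X\cap\mcl B_i}Z\cdot x=\E\sup_{x\in\mcl X\cap\mcl B_i}Z\cdot(x-x_i^*)\le\sqrt\delta k\log(1/\eps_0)$ for $Z\sim N(0,I_d)$. Hence $\log\mcl N(\mcl X\cap\mcl B_i,r)\lesssim\delta k^2\log^2(1/\eps_0)/r^2\lesssim k\log(1/\eps_0)\le L_\rho$ whenever $r\gtrsim\sqrt{L_\rho\delta}$. This holds for $r\asymp\sqrt{L_\rho/s}$ because $s\le1/\delta$, and so $\log\mcl N(\mcl X,r)\lesssim L_\rho$. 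With that, your denominator argument closes after enlarging $C_0$, and the same device repairs the last step of the proof of \cref{lem:z-modulus}. As written, however, the proposal stops short of a proof.
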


\begin{proof}
Let $\Omega_0$ be the event on which $X_1\in\bigcup_{i\in\mcl I}\mcl B_i$ and \eqref{eq:z-modulus} jointly hold. By Lemmas \ref{lem:prob-lb} and \ref{lem:z-modulus}, $P(\Omega_0)\geq1-2\rho$. 
Hence we complete the proof once we show that \eqref{eq:hwc-lem11} holds on $\Omega_0$. 

Observe that for every $y\in\mathbb R^d$
\ba{
\frac{|y|^2}{2}-\frac{|y-X_t|^2}{2(1-t)}
&=-\frac{t|y|^2-2X_t\cdot y+|X_t|^2}{2(1-t)}
=-\frac{t|y-X_1|^2-2\beta_t\cdot y-t|X_1|^2+|X_t|^2}{2(1-t)}\\
&=-\frac{U_t(y)-2\beta_t\cdot X_1-t|X_1|^2+|X_t|^2}{2(1-t)},
}
where $U_t(y):=t|y-X_1|^2-2\beta_t\cdot (y-X_1)$. 
Hence we can rewrite $\mu_{t,X_t}(\mcl E_t)$ as
\ben{\label{hwc-lem11-aim1}
\mu_{t,X_t}(\mcl E_t)=\frac{\int_{\mcl E_t}\exp\bra{-\frac{U_t(y)}{2(1-t)}}\mu(dy)}{\int_{\mathbb R^d}\exp\bra{-\frac{U_t(y)}{2(1-t)}}\mu(dy)}.
}
Now, if $y\in\mcl E_t$, we have on $\Omega_0$
\ba{
U_t(y)&\geq t|y-X_1|^2-2C_0\sqrt{t(1-t)}\bra{L_\rho\sqrt\delta+\sqrt{L_\rho}|y-X_1|}\quad(\because\eqref{eq:z-modulus})\\
&=t|y-X_1|\bra{|y-X_1|-2C_0\sqrt{L_\rho\frac{1-t}{t}}}-2C_0\sqrt{t(1-t)}L_\rho\sqrt\delta\\
&\geq 3C_0\sqrt{L_\rho t(1-t)}|y-X_1|-2C_0\sqrt{t(1-t)}L_\rho\sqrt\delta
\quad(\because y\in\mcl E_t)\\
&\geq C_0\sqrt{L_\rho t(1-t)}\bra{3|y-X_1|-2\sqrt{L_\rho(1-t)}}
\quad(\because \delta\leq1-t)\\
&\geq 13C_0L_\rho(1-t)\quad(\because y\in\mcl E_t).
}
Hence
\ben{\label{hwc-lem11-aim2}
\int_{\mcl E_t}\exp\bra{-\frac{U_t(y)}{2(1-t)}}\mu(dy)\leq\exp\bra{-\frac{13}{2}C_0L_\rho}.
}
Meanwhile, on the event $\Omega_0$ we can take an $i^*\in\mcl I$ so that $X_1\in\mcl B_{i^*}$. 
Then, if $y\in\mcl B_{i^*}$, 
\ba{
U_t(y)
&\leq t\eps_0^2+2\sqrt{t(1-t)}\cdot C_0\bra{L_\rho\sqrt\delta+\sqrt{L_\rho}\eps_0}
\leq 5C_0L_\rho(1-t),
}
where the first inequality follows by \eqref{eq:z-modulus} and the second by $\eps_0\leq \sqrt{L_\rho\delta}$ and $\delta\leq1-t$. 
Further, since $i^*\in\mcl I$,
\ba{
\mu(\mcl B_{i^*})\geq\exp\bra{-\log(m/\rho)}
\geq\exp(-L_\rho)\geq\exp(-C_0L_\rho).
}
Consequently,
\besn{\label{hwc-lem11-aim3}
\int_{\mathbb R^d}\exp\bra{-\frac{U_t(y)}{2(1-t)}}\mu(dy)
&\geq\int_{\mcl B_{i^*}}\exp\bra{-\frac{U_t(y)}{2(1-t)}}\mu(dy)\\
&\geq\mu(\mcl B_{i^*})\exp\bra{-\frac{5C_0L_\rho}{2}}
\geq\exp\bra{-3C_0L_\rho}.
}
Combining \eqref{hwc-lem11-aim1}--\eqref{hwc-lem11-aim3} gives \eqref{eq:hwc-lem11} on $\Omega_0$. 
\end{proof}

\begin{proof}[\bf\upshape Proof of \cref{lem:hwc}]
Define the random set $\mcl E_t$ in \cref{lem:hwc-lem11} with $\rho=\delta^2/R^4$. 
Note that $L_\rho\lesssim L$ in this case. Also, \cref{lem:hwc-lem11} gives
\ben{\label{eq:mut-est}
\E[\mu_{t,X_t}(\mcl E_t)]\leq\exp(-3C_0L_\rho)+2\rho\leq3\rho.
}

Now, observe that \cref{lem:sl} gives
\[
\trace(\covariance[X_1\mid X_t])=\int_{\mathbb R^d}|y-m(t,X_t)|^2\mu_{t,X_t}(dy)
\leq\int_{\mathbb R^d}|y-X_1|^2\mu_{t,X_t}(dy).
\]
Therefore, using the definition of $\mcl E_t$, we deduce
\ba{
\trace(\covariance[X_1\mid X_t])
&\leq\int_{\mcl E_t}|y-X_1|^2\mu_{t,X_t}(dy)
+\int_{\mcl E_t^c}|y-X_1|^2\mu_{t,X_t}(dy)\\
&\leq 2\int_{\mcl E_t}|y-\E[X_1]|^2\mu_{t,X_t}(dy)+2|X_1-\E[X_1]|^2\mu_{t,X_t}(\mcl E_t)+25C_0^2\frac{1-t}{t}L_\rho.
}
By the Schwarz inequality and \cref{lem:sl},
\ba{
\E\sbra{\int_{\mcl E_t}|y-\E[X_1]|^2\mu_{t,X_t}(dy)}
&\leq\sqrt{\E\sbra{\int_{\mathbb R^d}|y-\E[X_1]|^4\mu_{t,X_t}(dy)}\E[\mu_{t,X_t}(\mcl E_t)]}\\
&=\sqrt{\E[|X_1-\E[X_1]|^4]\E[\mu_{t,X_t}(\mcl E_t)]}
}
and
\[
\E[|X_1-\E[X_1]|^2\mu_{t,X_t}(\mcl E_t)]\leq\sqrt{\E[|X_1-\E[X_1]|^4]\E[\mu_{t,X_t}(\mcl E_t)]}.
\]
Combining these estimates with \eqref{eq:mut-est} gives
\ba{
\E[\trace(\covariance[X_1\mid X_t])]
&\lesssim R^2\sqrt{\rho}+\frac{1-t}{t}L
=\delta+\frac{1-t}{t}L\leq\frac{1-t}{t}L.
}
Hence we obtain the desired result.
\end{proof}

\paragraph{Acknowledgments}

This work was partly supported by JST CREST Grant Number JPMJCR2115 and JSPS KAKENHI Grant Number JP24K14848.

{\small
\renewcommand*{\baselinestretch}{1}\selectfont
\addcontentsline{toc}{section}{References}

}

\end{document}